\documentclass[11pt,letterpaper]{article}

\usepackage[utf8]{inputenc} 
\usepackage[T1]{fontenc}    
\usepackage[colorlinks,citecolor=blue]{hyperref}       
\usepackage{url}            
\usepackage{booktabs}       
\usepackage{amsfonts}       
\usepackage{nicefrac}       
\usepackage{microtype}      
\usepackage{xcolor}         
\usepackage{subcaption}
\usepackage{multirow} 
\usepackage{caption} 
\usepackage{hhline} 
\usepackage{tikz}
\usetikzlibrary{calc}
\usepackage{arydshln}
\usepackage[inline]{enumitem}
\usepackage{amsmath,amssymb,amsfonts,bm}
\usepackage{amsthm}
\usepackage{graphicx}
\usepackage{flexisym}

\theoremstyle{plain}
\newtheorem{theorem}{Theorem}[section]
\newtheorem{proposition}[theorem]{Proposition}

\theoremstyle{definition}

\newtheorem{assumption}[theorem]{Assumption}


\usepackage[ruled,linesnumbered]{algorithm2e}
\SetKwComment{Comment}{/* }{ */}
\usepackage{wasysym}
\usepackage{mathtools}
\usepackage{authblk}
\usepackage{algorithmic}
\usepackage{subcaption}
\usepackage{wrapfig}
\usepackage{resizegather}








\def\vtheta{{\bm{\theta}}}

\def\vf{{\bm{f}}}

\def\vm{{\bm{m}}}
\def\vn{{\bm{n}}}

\def\vs{{\bm{s}}}

\def\vw{{\bm{w}}}
\def\vx{{\bm{x}}}
\def\vy{{\bm{y}}}
\def\vz{{\bm{z}}}



\def\mD{{\bm{D}}}

\def\mI{{\bm{I}}}

\def\mS{{\bm{S}}}

\def\mU{{\bm{U}}}
\def\mV{{\bm{V}}}

\def\mX{{\bm{X}}}

\DeclareMathAlphabet{\mathsfit}{\encodingdefault}{\sfdefault}{m}{sl}
\SetMathAlphabet{\mathsfit}{bold}{\encodingdefault}{\sfdefault}{bx}{n}

\def\gA{{\mathcal{A}}}

\def\gD{{\mathcal{D}}}

\def\gN{{\mathcal{N}}}

\def\gS{{\mathcal{S}}}










\newcommand{\E}{\mathbb{E}}

\newcommand{\R}{\mathbb{R}}



\newcommand{\vepsilon}{\bm\epsilon}
\newcommand{\0}{\mathbf{0}}

\def\lxz{\overleftarrow{\vx_0}}
\def\rxz{\overrightarrow{\vx_0}}
\def\lxo{\overleftarrow{\vx_1}}
\def\rxo{\overrightarrow{\vx_1}}
\def\lzz{\overleftarrow{\vz_0}}
\def\rzz{\overrightarrow{\vz_0}}
\def\lzo{\overleftarrow{\vz_1}}

\def\lzzp{\overleftarrow{\vz^\prime_0}}

\newcommand{\enc}{\mathcal{E}}
\newcommand{\dec}{\mathcal{D}}

\usepackage{float}

\usepackage[style=numeric, maxbibnames=15, maxcitenames=3, natbib=true,
  maxalphanames=10, backend=biber, sorting=nty, backref=true]{biblatex}
\addbibresource{citations.bib}
\DefineBibliographyStrings{english}{%
  backrefpage = {page},
  backrefpages = {pages},
}
\usepackage[ruled,linesnumbered]{algorithm2e}
\SetKwComment{Comment}{/* }{ */}
\usepackage{wasysym}
\usepackage{authblk}
\usepackage{algorithmic}
\usepackage{subcaption}
\usepackage{wrapfig}
\usepackage{resizegather}

\usepackage[margin=0.9in]{geometry}
\usepackage{lmodern}
\usepackage[T1]{fontenc}
\usepackage[utf8]{inputenc}


\title{Solving Linear Inverse Problems Provably via\\ Posterior Sampling with Latent Diffusion Models}

\author{%
  Litu Rout\thanks{litu.rout@utexas.edu}
  \quad
  Negin Raoof\thanks{neginmr@utexas.edu}
  \quad
  Giannis Daras\thanks{giannisdaras@utexas.edu}\\
  \quad
  Constantine Caramanis\thanks{constantine@utexas.edu}
  \quad
  Alexandros G. Dimakis\thanks{dimakis@austin.utexas.edu}
  \quad
  Sanjay Shakkottai\thanks{sanjay.shakkottai@utexas.edu }\\~\\
  The University of Texas at Austin
}
\date{}

\begin{document}

\maketitle

\begin{abstract}
  We present the first framework to solve linear inverse problems leveraging pre-trained \textit{latent} diffusion models.  Previously proposed algorithms (such as DPS and DDRM) only apply to \textit{pixel-space} diffusion models.  We theoretically analyze our algorithm showing provable sample recovery in a linear model setting. The algorithmic insight obtained from our analysis extends to more general settings often considered in practice. Experimentally, we outperform previously proposed posterior sampling algorithms in a wide variety of problems including random inpainting, block inpainting, denoising, deblurring, destriping, and super-resolution.
\end{abstract}

\section{Introduction}
\label{sec-intro}
We study the use of pre-trained latent diffusion models to solve linear inverse problems such as denoising, inpainting, compressed sensing and super-resolution. There are two classes of approaches for inverse problems: supervised methods where a restoration model is trained to solve the task at hand~\citep{pathak2016context, richardson2020encoding, Yu_2019, Liu_2019}, and unsupervised methods that use the prior learned by a generative model to guide the restoration process~\citep{venkatakrishnan2013plug,romano2017little,bora2017compressed,mataev2019deepred,dps,ddrm}; see also the survey of \citet{ongie2020deep} and references therein. 

The second family of unsupervised methods has gained popularity because: (i) general-domain foundation generative models have become widely available, (ii) unsupervised methods do not require any training to solve inverse problems and leverage the massive data and compute investment of pre-trained models and (iii) generative models \textit{sample} from the posterior-distribution, mitigating certain pitfalls of likelihood-maximization methods such as bias in the reconstructions~\citep{menon2020pulse, ajil_bias} and regression to the mean~\citep{ajil_posterior_sampling, mri_paper}.

\begin{figure}[!t]
    \centering
    \begin{subfigure}[b]{0.3\columnwidth}
        \centering
        \includegraphics[width=\linewidth]{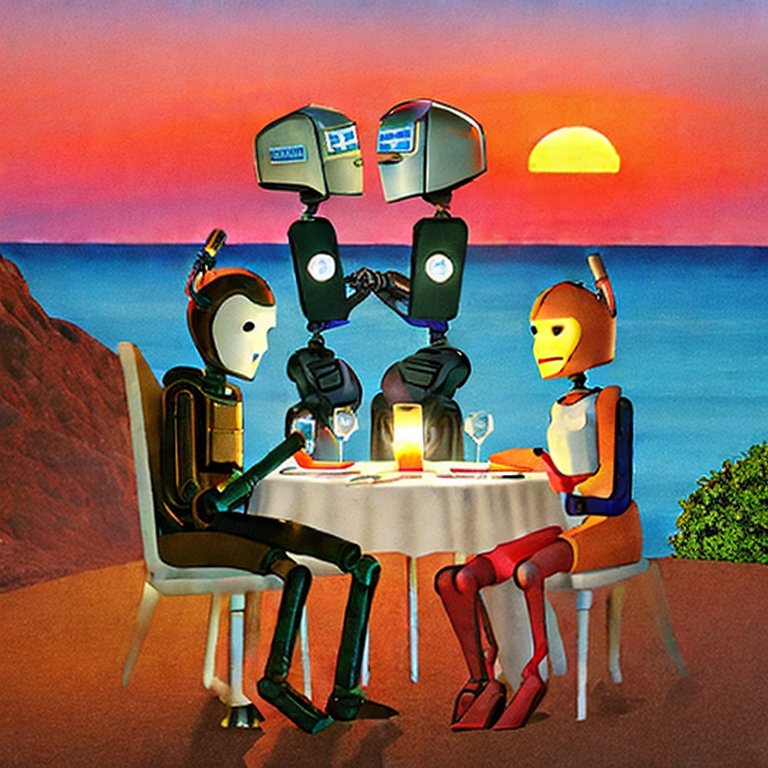}
    \end{subfigure}
    \begin{subfigure}[b]{0.3\columnwidth}
        \centering
        \includegraphics[width=\linewidth]{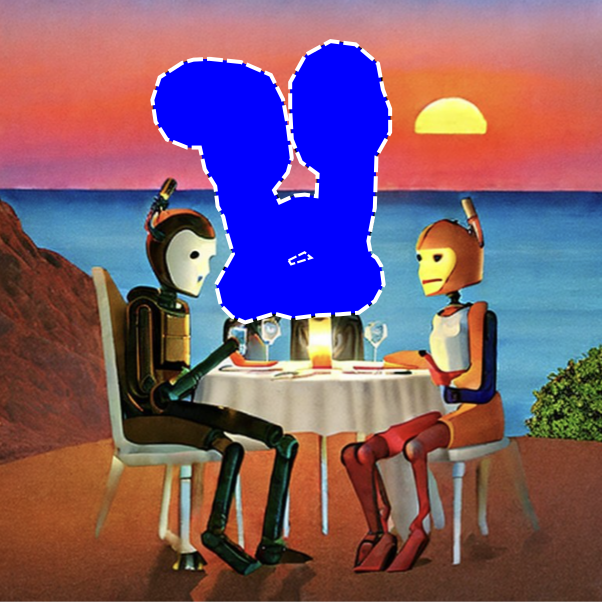}
    \end{subfigure}
    \begin{subfigure}[b]{0.3\columnwidth}
        \centering
        \includegraphics[width=\linewidth]{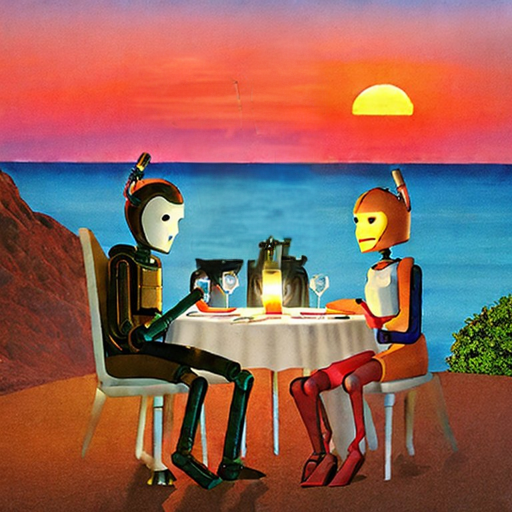}
    \end{subfigure}

    \begin{subfigure}[b]{0.3\columnwidth}
        \centering
        \includegraphics[width=\linewidth]{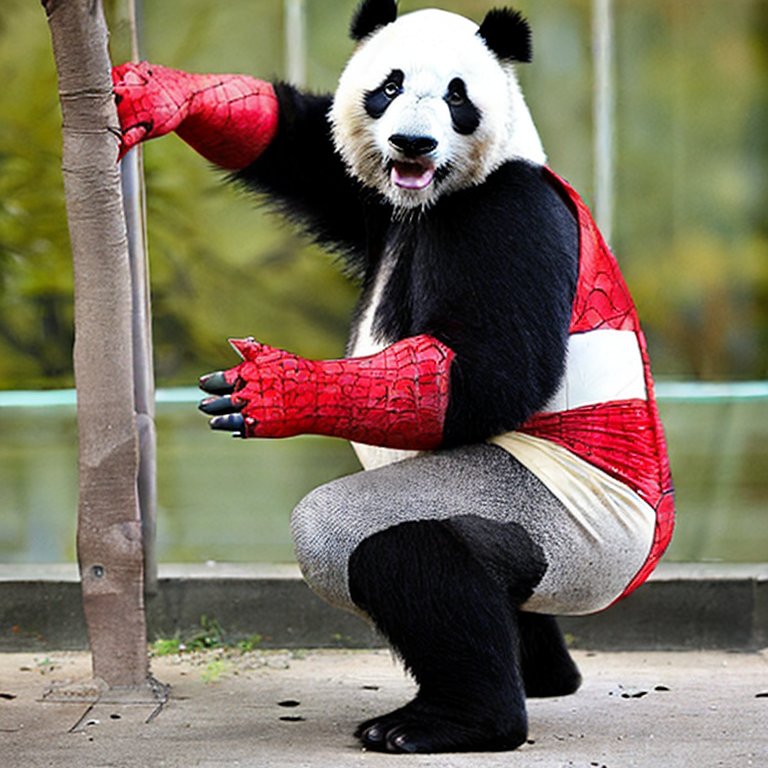}
    \end{subfigure}
    \begin{subfigure}[b]{0.3\columnwidth}
        \centering
        \includegraphics[width=\linewidth]{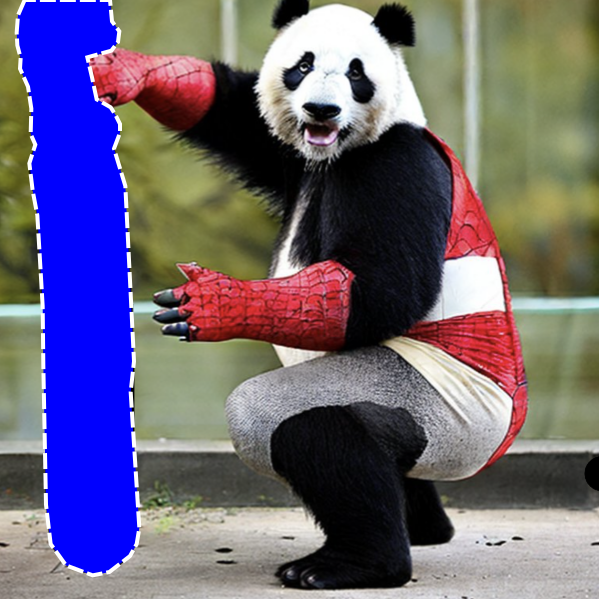}
    \end{subfigure}
    \begin{subfigure}[b]{0.3\columnwidth}
        \centering
        \includegraphics[width=\linewidth]{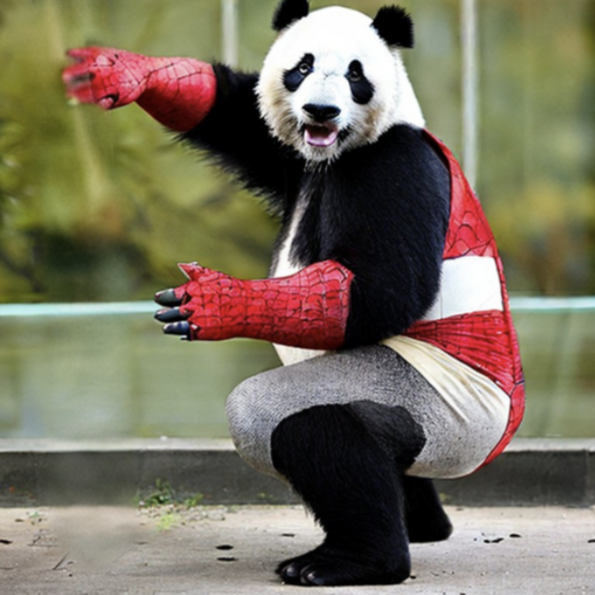}
    \end{subfigure}

    \begin{subfigure}[b]{0.3\columnwidth}
        \centering
        \includegraphics[width=\linewidth]{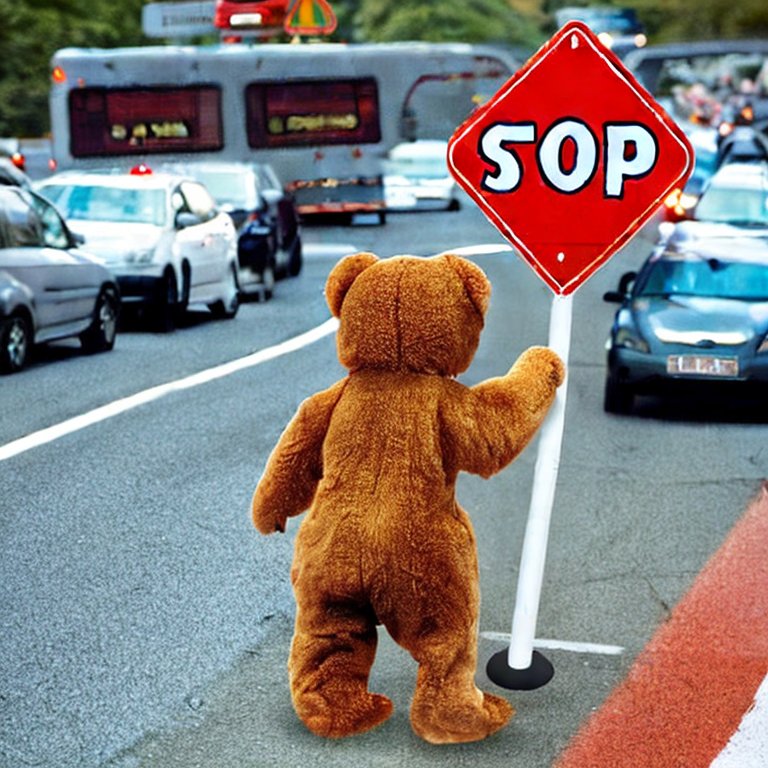}
    \end{subfigure}
    \begin{subfigure}[b]{0.3\columnwidth}
        \centering
        \includegraphics[width=\linewidth]{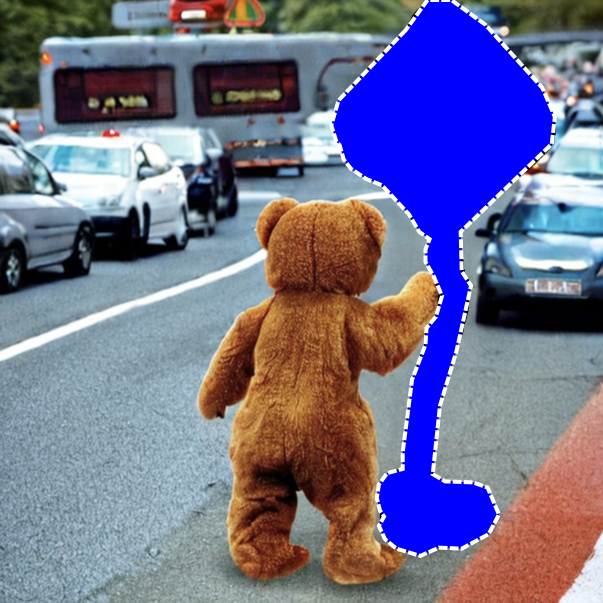}
    \end{subfigure}
    \begin{subfigure}[b]{0.3\columnwidth}
        \centering
        \includegraphics[width=\linewidth]{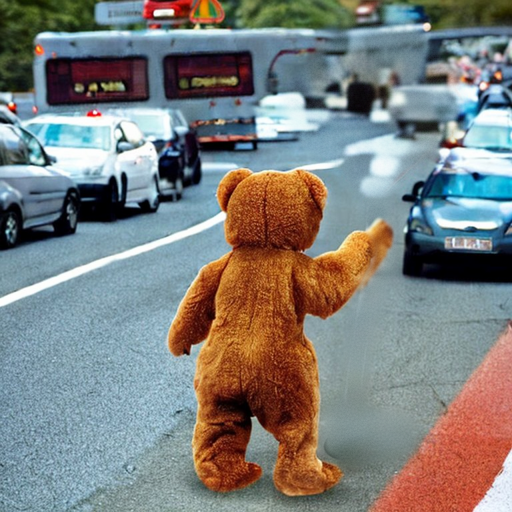}
    \end{subfigure}

    \begin{subfigure}[b]{0.3\columnwidth}
        \centering
        \includegraphics[width=\linewidth]{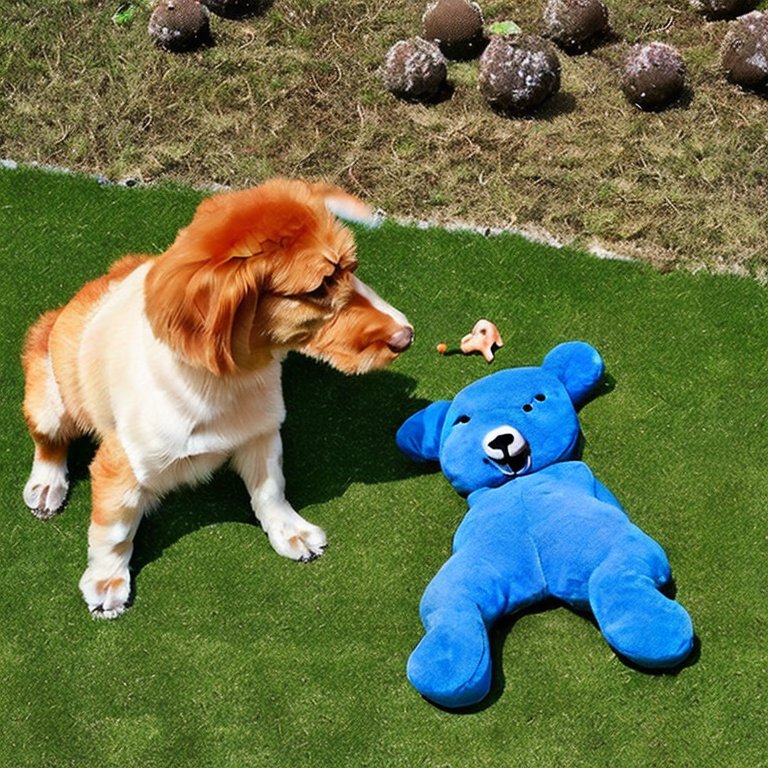}
    \end{subfigure}
    \begin{subfigure}[b]{0.3\columnwidth}
        \centering
        \includegraphics[width=\linewidth]{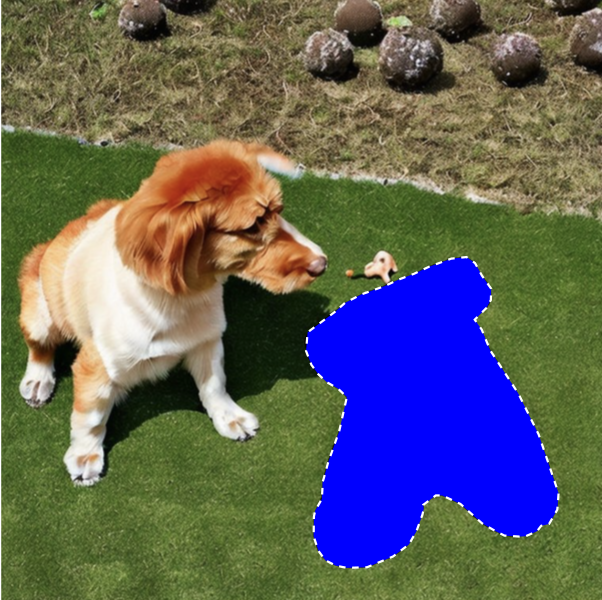}
    \end{subfigure}
    \begin{subfigure}[b]{0.3\columnwidth}
        \centering
        \includegraphics[width=\linewidth]{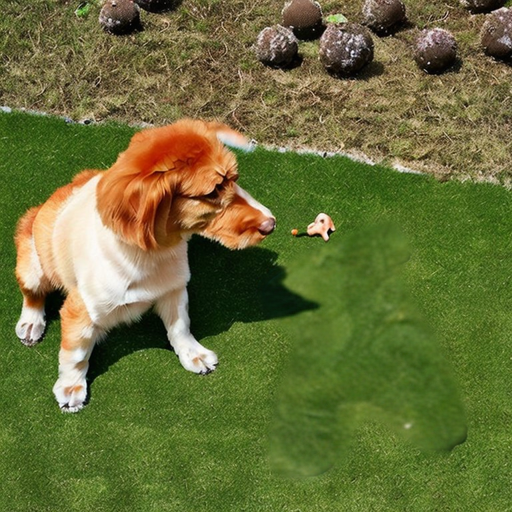}
    \end{subfigure}
    
    \caption{Overall pipeline of our proposed framework from left to right. Given an image (\textbf{left}) and a user defined mask (\textbf{center}), our algorithm inpaints the masked region (\textbf{right}). The known part of the images are unaltered (see Appendix~\ref{sec-addn-exps} for  web demo and image sources).} 
   \label{fig:psld-demo}
\end{figure}

Diffusion models have emerged as a powerful new approach to generative modeling~\cite{ncsn, ncsnv2, ncsnv3, ddpm, ddpm++, dhariwal2021diffusion,wang2023imagen}. This family of generative models works by first corrupting the data distribution $p_0(\vx_0)$ using an It\^{o} Stochastic Differential Equation (SDE), $\mathrm{d}\vx = \vf(\vx, t)\mathrm{d}t + g(t) \mathrm{d}\vw$, and then by learning the score-function, $\nabla_{\vx_t}\log p_t(\vx_t)$, at all levels $t$, using Denoising Score Matching (DSM)~\cite{hyvarinen2005estimation, dsm}. The seminal result of \citet{anderson} shows that we can reverse the corruption process, i.e., start with noise and then sample from the data distribution, by running another It\^{o} SDE. The SDE that corrupts the data is often termed as Forward SDE and its reverse as Reverse SDE~\cite{ncsnv3}. The latter depends on the score-function $\nabla_{\vx_t}\log p_t(\vx_t)$ that we learn through DSM. In \citep{chen2022sampling, chen2023restoration}, the authors  provided a non-asymptotic analysis for the sampling of diffusion models when the score-function is only learned approximately.

The success of diffusion models sparked the interest to investigate how we can use them to solve inverse problems. \citet{ncsnv3} showed that given measurements $\vy=\mathcal A \vx_0 + \sigma_y \vn$, we can provably sample from the distribution $p_0(\vx_0| \vy)$ by running a modified Reverse SDE that depends on the unconditional score $\nabla_{\vx_{t}}\log p_t(\vx_t)$ and the term $\nabla_{\vx_t} \log p(\vy | \vx_t)$. The latter term captures how much the current iterate explains the measurements and it is intractable even for linear inverse problems without assumptions on the distribution $p_0(x_0)$~\cite{dps, sgilo}. To deal with the intractability of the problem, a series of approximation algorithms have been developed~\citep{mri_paper, dps, arvinte2022single, chung2022improving, ddrm, choi2021ilvr, chan2016plug, song2023pseudoinverse,chung2023direct,kawar2023gsure} for solving (linear and non-linear) inverse problems with diffusion models. These algorithms use pre-trained diffusion models as flexible priors for the data distribution to effectively solve problems such as inpainting, deblurring, super-resolution among others.

Recently, diffusion models have been generalized to learn to invert non-Markovian and non-linear corruption processes~\citep{indi, daras2022soft,bansal2022cold}. One instance of this generalization is the family of Latent Diffusion Models (LDMs)~\citep{ldm}. LDMs project the data into some latent space, $\vz_0 = \enc(\vx_0)$, perform the diffusion in the latent space and use a decoder, $\dec(\vz_0)$, to move back to the pixel space. LDMs power state-of-the-art foundation models such as Stable Diffusion~\citep{ldm} and have enabled a wide-range of applications across many data modalities including images~\citep{ldm}, video~\cite{blattmann2023align}, audio~\cite{liu2023audioldm} and medical domain distributions (e.g., for MRI and proteins)~\cite{pinaya2022brain,takagi2023high}. Unfortunately, none of the existing algorithms for solving inverse problems works with Latent Diffusion Models. Hence, to use a foundation model, such as Stable Diffusion, for some inverse problem, one needs to perform finetuning for each task of interest.

In this paper, we present the first framework to solve general inverse problems with pre-trained \textit{latent} diffusion models. Our main idea is to extend DPS by adding an extra gradient update step to guide the diffusion process to sample latents for which the decoding-encoding map is not lossy. By harnessing the power of available foundation models, we are able to outperform previous approaches without finetuning across a wide range of problems (see Figure~\ref{fig:psld-demo} and \ref{fig:comp-main}).

\noindent\textbf{Our contributions are as follows:}
\begin{itemize}
  \setlength\itemsep{0em}
    \item[(i)] We show how to use Latent Diffusion Models models (such as Stable Diffusion) to solve linear inverse problem when the degradation operator is known.
    \item[(ii)] We theoretically analyze our algorithm and show provable sample recovery in a linear model setting with two-step diffusion processes. 
    \item[(iii)] We achieve a new state-of-the-art for solving inverse problems with latent diffusion models, outperforming previous approaches for inpainting, block inpainting, denoising, deblurring, destriping, and super-resolution.\footnote{The source code is available at: \url{https://github.com/LituRout/PSLD} and a web application for image inpainting is available at: \url{https://huggingface.co/spaces/PSLD/PSLD}.}
\end{itemize}

\section{Background and Method}
\label{sec-prelim}
\textbf{Notation:} 
Bold lower-case $\vx$, bold upper-case $\mX$, and normal lower case $x$ denote a vector, a matrix, and a scalar variable, respectively. We denote by $\odot$ element-wise multiplication. $\mD(\vx)$ represents a diagonal matrix with entries $\vx$. We use $\enc(.)$ for the encoder and $\dec(.)$ for the decoder. $\mathcal{E}\sharp p$ is a pushforward measure of $p$, i.e., for every $\vx \in p$, the sample $\mathcal{E}(\vx)$ is a sample from $\mathcal{E}\sharp p$. We use arrows in Section~\ref{sec-theory} to distinguish random variables of the forward ($\rightarrow$) and the reverse process ($\leftarrow$). 

The standard diffusion modeling framework involves training a network, $\vs_{\theta}(\vx_t, t)$, to learn the score-function, $\nabla_{\vx_t}\log p_t(\vx_t)$, at all levels $t$, of a stochastic process described by an It\^{o} SDE:
\begin{gather}
    \mathrm{d}\vx = \vf(\vx, t)\mathrm{d}t + g(t)\mathrm{d}\vw,
\end{gather}

where $\vw$ is the standard Wiener process. To generate 
samples from the trained model, one can run the (unconditional) Reverse SDE,
where the score-function is approximated by the trained neural network. Given measurements $\vy = \mathcal Ax_0 + \sigma_y \vn$, one can sample from the distribution $p_0(\vx_0 | \vy)$ by running the conditional Reverse SDE given by:
\begin{equation}
    \mathrm{d}\vx = \left(\vf(\vx, t) -g^2(t)\left(\nabla_{\vx_t}\log p_t(\vx_t) + \nabla_{\vx_t}\log p(\vy|\vx_t) \right) \right)\mathrm{d}t + g(t)\mathrm{d}\vw.
\end{equation}

As mentioned, $\nabla_{\vx_t} \log p(\vy|\vx_t)$ is intractable for general inverse problems.
One of the most effective approximation methods is the DPS  algorithm proposed by \citet{dps}. DPS assumes that:
\begin{gather}
\label{eq:dps-approx}
    p(\vy|\vx_t) \approx p\left(\vy | \vx_0=\E[\vx_0|\vx_t]\right) = \mathcal N(\vy; \mu=\mathcal A\E[\vx_0|\vx_t], \Sigma=\sigma_y^2I).
\end{gather}

Essentially, DPS substitutes the unknown clean image $\vx_0$ with its conditional expectation given the noisy input, $\E[\vx_0|\vx_t]$. Under this approximation, the term $p(\vy | \vx_t)$ becomes tractable.

The theoretical properties of the DPS algorithm are not well understood. In this paper, we analyze DPS in a linear model setting where the data distribution lives in a low-dimensional subspace, and show that DPS actually samples from $p(\vx_0|\vy)$ (Section~\ref{subsec:ps-adm}). Then, we provide an \textit{algorithm}  (Section~\ref{sec-method}) and its \textit{analysis} to sample from $p(\vx_0|\vy)$ using latent diffusion models (Section~\ref{subsec:ps-ldm}). Importantly, our analysis suggests that our algorithm enjoys the same theoretical guarantees while avoiding the curse of ambient dimension observed in pixel-space diffusion models including DPS. Using experiments (Section~\ref{sec-exps}), we show that our algorithm allows us to use powerful foundation models and solve linear inverse problems, outperforming previous unsupervised approaches without the need for finetuning.

\subsection{Method}
\label{sec-method}

In Latent Diffusion Models, the diffusion occurs in the latent space. Specifically, we train a model $\vs_{\theta}(\vz_t, t)$ to predict the score $\nabla_{\vz_t}\log p_t(\vz_t)$, of a diffusion process:
\begin{gather}
        \mathrm{d}\vz = \vf(\vz, t)\mathrm{d}t + g(t)\mathrm{d}\vw,
\end{gather}
where $\vz_0 = \enc(\vx_0)$ for some encoder function $\enc(\cdot):\R^d \to \R^k$. During sampling, we start with $\vz_T$, we run the Reverse Diffusion Process and then we obtain a clean image by passing $\vz_0\sim p_0(\vz_0|\vz_T)$ through a decoder $\dec:\R^k\to\R^d$.

Although Latent Diffusion Models underlie some of the most powerful foundation models for image generation, existing algorithms for solving inverse problems with diffusion models do not apply for LDMs. The most natural extension of the DPS idea would be to approximate $p(\vy|\vz_t)$ with:
\begin{gather}
    p(\vy|\vz_t) \approx p(\vy | \vx_0 = \dec\left(\E[\vz_0|\vz_t] \right)),
    \label{eq:vanilla_dps}
\end{gather}
i.e., to approximate the unknown clean image $\vx_0$ with the decoded version of the conditional expectation of the clean latent $\vz_0$ given the noisy latent $\vz_t$. However, as we show experimentally in Section~\ref{sec-exps}, this idea does not work.
The failure of the ``vanilla'' extension of the DPS algorithm for latent diffusion models should not come as a surprise. The fundamental reason is that the encoder is a many-to-one mapping. Simply put, there are many latents $\vz_0$ that correspond to encoded versions of images that explain the measurements. Taking the gradient of the density given by \eqref{eq:vanilla_dps} could be pulling $\vz_t$ towards any of these latents $\vz_0$, potentially in different directions. On the other hand, the score-function is pulling $\vz_t$ towards a specific $\vz_0$ that corresponds to the best denoised version of $\vz_t$.

To address this problem, we propose an extra term that penalizes latents that are not fixed-points of the composition of the decoder-function with the encoder-function. Specifically, we approximate the intractable $\nabla \log p(\vy|\vz_t)$ with:
\begin{gather}
    \nabla_{\vz_t} \log p(\vy|\vz_t) = \underbrace{\nabla_{\vz_t}p(\vy | \vx_0 = \dec\left(\E[\vz_0|\vz_t] \right))}_{\mathrm{DPS \ vanilla \ extension}} + \gamma_t\underbrace{\nabla_{z_t}\left|\left|\E[\vz_0|\vz_t] - \enc(\dec(\E[\vz_0|\vz_t])) \right|\right|^2}_{\mathrm{``goodness" \ of} \ \vz_0}.
    \label{eq:gml-dps}
\end{gather}
We refer to this approximation as Goodness Modified Latent DPS (GML-DPS). Intuitively, we guide the diffusion process towards latents such that: i) they explain the measurements when passed through the decoder, and ii) they are fixed points of the decoder-encoder composition. 
The latter is useful to make sure that the generated sample remains on the manifold of real data. However, it does not penalize the reverse SDE for generating other latents $\vz_0$ as long as $\dec(\vz_0)$ lies on the manifold of natural images. Even in the linear case (see Section~\ref{sec-theory}), this can lead to inconsistency at the boundary of the mask in the pixel space.
The linear theory in Section~\ref{sec-theory} suggests that we can circumvent this problem by introducing the following gluing objective. In words, the gluing objective penalizes decoded images having a discontinuity at the boundary of the mask. 
%
\begin{align}
\nonumber
    \nabla_{\vz_t} \log p(\vy|\vz_t) & = \underbrace{\nabla_{\vz_t}p(\vy | \vx_0 = \dec\left(\E[\vz_0|\vz_t] \right))}_{\mathrm{DPS \ vanilla \ extension}} \\
    & + \gamma_t\underbrace{\nabla_{z_t}\left|\left|\E[\vz_0|\vz_t] - \enc(\gA^T\gA\vx_0^* + (\mI - \gA^T\gA)\dec(\E[\vz_0|\vz_t]) ) \right|\right|^2}_{\mathrm{``gluing" \ of} \ \vz_0}.
    \label{eq:psld}
\end{align}
The gluing objective is critical for our algorithm 
as it ensures that the denoising update, measurement-matching update, and the gluing update point to the same optima in the latent space. We refer to this approximation (\ref{eq:psld}) as Posterior Sampling with Latent Diffusion (PSLD). In the next Section~\ref{sec-theory}, we provide an analysis of these gradient updates, along with the associated algorithms. 

\section{Theoretical Results}
\label{sec-theory}

\begin{figure}[t]
 \begin{minipage}[t]{.44\textwidth}
    \begin{algorithm}[H]
    \scriptsize
    \caption{DPS}
    \label{alg:dps_gauss}
         \KwIn{$T$, $\bm{y}$, $\zeta_{i=1}^T$, $ {\{\tilde\sigma_i\}_{i=1}^T}, \bm{s}_\theta$}
          $\bm{x}_T \sim \mathcal{T}(\bm{0}, \bm{I})$\\
              \For{$i=T-1$ \KwTo $0$}{
                $\hat{\bm{s}} \gets \bm{s}_\theta(\bm{x}_i, i)$\\
                $\hat{\vx}_0 
                \gets \frac{1}{\sqrt{\bar\alpha_i}}
                (\vx_i + {(1 - \bar\alpha_i) \hat{\bm{s}})}
                $\\
                $\vz \sim \mathcal{N}(\bm{0}, \bm{I})$\\
                $\vx'_{i-1} \gets \frac{\sqrt{\alpha_i}(1-\bar{\alpha}_{i-1})}{1 - \bar{\alpha}_i}\vx_i + \frac{\sqrt{\bar{\alpha}_{i-1}}\beta_i}{1 -\bar{\alpha}_i}\hat{\vx}_0 +  {\tilde\sigma_i \vz}$\\
                $\vx_{i-1} \gets \vx'_{i-1} -  {\zeta_{i}}\nabla_{\vx_i}\|\vy - \mathcal{A}(\hat{\vx}_0)\|_2^2$
             }
            \Return $\hat{\vx}_0$\\
    \end{algorithm}
\end{minipage}
\hspace{0.1cm}
\begin{minipage}[t]{.55\textwidth}
  \centering
  \begin{algorithm}[H]
    \scriptsize
    \caption{PSLD}
    \label{alg:psld}
         \KwIn{$T$, $\bm{y}$, \{$\eta_i\}_{i=1}^T$, \{$\gamma_i\}_{i=1}^T, {\{\tilde\sigma_i\}_{i=1}^T}, \enc , \dec , \gA\vx^*_0, \gA, \bm{s}_\theta$}
          $\bm{z}_T \sim \mathcal{N}(\bm{0}, \bm{I})$\\
              \For{$i=T-1$ \KwTo $0$}{
                $\hat{\bm{s}} \gets \bm{s}_\theta(\bm{z}_i, i)$\\
                $\hat{\vz}_0 
                \gets \frac{1}{\sqrt{\bar\alpha_i}}
                (\vz_i + {(1 - \bar\alpha_i) \hat{\bm{s}})}
                $\\
                $\bm{\vepsilon} \sim \mathcal{N}(\bm{0}, \bm{I})$\\
                $\vz'_{i-1} \gets \frac{\sqrt{\alpha_i}(1-\bar{\alpha}_{i-1})}{1 - \bar{\alpha}_i}\vz_i + \frac{\sqrt{\bar{\alpha}_{i-1}}\beta_i}{1 -\bar{\alpha}_i}\hat{\vz}_0 +  {\tilde\sigma_i \bm{\vepsilon}}$\\
                $\vz''_{i-1} \gets \vz'_{i-1} -  {\eta_{i}}\nabla_{\vz_i}\|\vy - \mathcal{A}(\dec{(\hat{\vz}_0)})\|_2^2$ \\
                $\vz_{i-1} \gets \vz''_{i-1} -  {\gamma_{i}}\nabla_{\vz_i}\|\hat\vz_0 - \enc({\gA^T\gA \vx^*_0 + (\mI - \gA^T\gA) \dec(\hat\vz_0))}\|_2^2$
             }
            \Return $\dec(\hat{\vz}_0)$
    \end{algorithm}
  \end{minipage}
\end{figure}

As discussed in Section~\ref{sec-prelim}, diffusion models consist of two stochastic processes: the forward and reverse processes, each governed by It\^{o} SDEs. For implementation purposes, these SDEs are discretized over a finite number of (time) steps, and the diffusion takes place using a transition kernel. The forward process starts from $\rxz \sim p(\rxz)$ and gradually adds noise, i.e., 
$\overrightarrow{\vx}_{t+1} = \sqrt{1-\beta_t} \overrightarrow{\vx}_{t} + \sqrt{\beta_t} \vepsilon$
where $\beta_t  \in [0,1]$ and $\beta_t \geq \beta_{t-1}$ for $t=0,\dots,T-1$ . The reverse process is initialized with $\overleftarrow{\vx}_T \sim \gN\left(\0,\mI_d\right)$ and generates $\overleftarrow{\vx}_{t-1} = \mu_\theta (\overleftarrow{\vx}_{t},t) + \sqrt{\beta_t} \vepsilon$. In the last step, $\mu_\theta (\overleftarrow{\vx}_{1},1)$ is displayed without the noise. 

In this section, we consider the diffusion discretized to two steps ($\{\rxz,\rxo\}$), and a Gaussian transition kernel that arises from the Ornstein-Uhlenbeck (OU) process. We choose this setup because it captures essential components of complex diffusion processes without raising unnecessary complications in the analysis. We provide a principled analysis of \textbf{Algorithm~\ref{alg:dps_gauss}} and \textbf{Algorithm~\ref{alg:psld}} in a linear model setting with this two-step diffusion process under assumptions that guarantee exact reconstruction is possible in principle. A main result of our work is to prove that in this setting we can solve inverse problems perfectly. As we show, this requires some novel algorithmic ideas that are suggested by our theory. In Section \ref{sec-exps}, we then show that these algorithmic ideas are much more general, and apply to 
large-scale real-world applications of diffusion models that use multiple steps ($\{\rxz,\rxo,\cdots,\overrightarrow{\vx_T} \}$, where $T=1000$), and moreover do not satisfy the recoverability assumptions. We provide post-processing details of \textbf{Algorithm~\ref{alg:psld}} in Appendix~\ref{subsec-impl-det}. All proofs are given in Appendix~\ref{sec-proofs}.

\subsection{Problem Setup}
\label{sec:setup}

The goal is to show that posterior sampling algorithms (such as DPS) can provably solve inverse problems in a perfectly recoverable setting. To show exact recovery, we analyze two-step diffusion processes in a linear model setting similar to \cite{rout2023theoretical,chen2023score}, where the images ($\overrightarrow{\vx_0} \in \mathbb{R}^d$) reside in a linear subspace of the form $\overrightarrow{\vx_0} = \gS\overrightarrow{\vw_0}, \gS \in \mathbb{R}^{d\times l}, \overrightarrow{\vw_0} \in \mathbb{R}^{l}$. Here, $\gS$ is a tall thin matrix with $rank(\gS) = l \leq d$ that lifts any latent vector $\overrightarrow{\vw_0} \sim \mathcal{N}\left(\0,\mI_l \right)$ to the image space with ambient dimension $d$. Given the measurements $\vy = \gA \overrightarrow{\vx_0} + \sigma_y \vn$, $\gA \in \mathbb{R}^{l\times d}, \vn \in \mathbb{R}^l$, the goal is to sample from $p_0(\overrightarrow{\vx_0}| \vy)$ using a pre-trained latent diffusion model. In the inpainting task, the measurement operator $\gA$ is such that  $\gA^T\gA$ is a diagonal matrix $\mD(\vm)$, where $\vm$ is the masking vector with elements set to 1 where data is observed and 0 where data is masked (see Appendix~\ref{sec-proofs} for further details).  Recall that in latent diffusion models, the diffusion takes place in the latent space of a pre-trained Variational Autoencoder (VAE). Following the common practice~\citep{ldm}, we consider a setting where the latent vector of the VAE is $k$-dimensional and the latent distribution is a standard Gaussian $\gN\left(\0,\mI_k \right)$. Our analysis shows that the proposed \textbf{Algorithm~\ref{alg:psld}} provably solves inverse problems under the following assumptions. 
\begin{assumption}
    The columns of the data generating model $\gS$ are orthonormal, i.e., $\gS^T\gS = \mI_l$.
    \label{assm:ortho}
\end{assumption}
\begin{assumption}
    \label{assm:inv}
    The measurement operator $\gA$ satisfies $(\gA\gS)^T(\gA\gS)\succ \0$.
\end{assumption}

These assumptions have previously appeared, e.g., \cite{rout2023theoretical}. While \textbf{Assumption~\ref{assm:ortho}} is mild and can be relaxed at the expense of (standard) mathematical complications, \textbf{Assumption~\ref{assm:inv}} indicates that $(\gA\gS)^T(\gA\gS)$ is a positive definite matrix. The latter ensures that there is enough energy left in the measurements for perfect reconstruction. More precisely, any subset of $l$ coordinates exactly determines the remaining $(d-l)$ coordinates of $\overrightarrow{\vx_0}$.
The underlying assumption is that there \textit{exists} a solution and it is \textit{unique}~\cite{rout2023theoretical}. Thus, the theoretical question becomes how close the recovered sample is to this groundtruth sample from the true posterior.
Alternatively, one may consider 
other types of posteriors and prove that the generated samples are close to this posterior in distribution. However, this does not guarantee that the exact groundtruth sample is recovered. Therefore, motivated by prior works~\cite{rout2023theoretical,chen2023score}, we analyze posterior sampling in a two-step diffusion model and answer a fundamental question: \textit{Can a pre-trained latent diffusion model provably solve inverse problems in a perfectly recoverable setting?}

\subsection{Posterior Sampling using Pixel-space Diffusion Model}
\label{subsec:ps-adm}
We first consider the reverse process, starting with $\overleftarrow{\vx_1} \sim \gN\left(\0,\mI_d \right)$, and borrow a result from \cite{rout2023theoretical} to show that the sample $\overleftarrow{\vx_0}$ generated by the reverse process is a valid image from $p(\overrightarrow{\vx_0})$.

\begin{theorem}[Generative Modeling using Diffusion in Pixel Space, \cite{rout2023theoretical}]
\label{thm:gen-adm}
    Suppose \textbf{Assumption \ref{assm:ortho}} holds. Let 
    \begin{align*}
        \vtheta^* = \arg \min_{\vtheta} \E_{\rxz, \overrightarrow{\vepsilon}}\left[ \left \| \tilde{\mu}_1\left ( \overrightarrow{\vx_1}(\rxz, \overrightarrow{\vepsilon}), \overrightarrow{\vx_0} \right ) - \mu_\vtheta\left ( \overrightarrow{\vx_1}\left(\rxz, \overrightarrow{\vepsilon}\right) \right ) \right \|^2 \right].
    \end{align*}
    For a fixed variance $\beta > 0$, if $\mu_\vtheta\left( \overrightarrow{\vx_1}\left(\rxz, \overrightarrow{\vepsilon}\right) \right )\coloneqq \vtheta \overrightarrow{\vx_1}\left(\rxz, \overrightarrow{\vepsilon}\right)$, then the closed-form solution $\vtheta^*$ is $ \sqrt{1-\beta}\mS\mS^T$, which after normalization by $1/\sqrt{1-\beta}$ recovers the true subspace of $p\left(\rxz\right)$.
\end{theorem}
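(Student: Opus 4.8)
The plan is to recognize the optimization as an ordinary linear least-squares problem and solve it through its normal equations. The objective $\vtheta \mapsto \E_{\rxz,\reps}\|\tilde\mu_1(\rxo,\rxz) - \vtheta\rxo\|^2$ is a convex quadratic in the entries of $\vtheta$, so any stationary point is automatically the global minimizer. For the terminal reverse step of a two-step process the DDPM posterior-mean target evaluates to the clean sample itself, $\tilde\mu_1(\rxo,\rxz) = \rxz$ (the coefficient multiplying $\rxo$ vanishes at $t=1$), so the objective is exactly the mean-squared error of the linear predictor $\vtheta\rxo$ of $\rxz$. First I would differentiate in $\vtheta$ and set the gradient to zero, obtaining the normal equations $\vtheta^\ast\,\E[\rxo\rxo^T] = \E[\rxz\rxo^T]$, hence $\vtheta^\ast = \E[\rxz\rxo^T]\,(\E[\rxo\rxo^T])^{-1}$ provided the second-moment matrix is invertible.

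Next I would evaluate the two moment matrices using the generative model $\rxz = \gS\overrightarrow{\vw_0}$ with $\overrightarrow{\vw_0}\sim\gN(\0,\mI_l)$ and the OU transition kernel $\rxo = \sqrt{1-\beta}\,\rxz + \sqrt{\beta}\,\reps$ with $\reps\sim\gN(\0,\mI_d)$ independent of $\rxz$. Since $\E[\overrightarrow{\vw_0}\,\overrightarrow{\vw_0}^T] = \mI_l$ we get $\E[\rxz\rxz^T] = \gS\gS^T$; independence of $\reps$ and $\rxz$ then yields the cross term $\E[\rxz\rxo^T] = \sqrt{1-\beta}\,\gS\gS^T$ and the second moment $\E[\rxo\rxo^T] = (1-\beta)\gS\gS^T + \beta\mI_d$.

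The one step that needs care is inverting $\Sigma_1 := (1-\beta)\gS\gS^T + \beta\mI_d$, and this is exactly where \textbf{Assumption~\ref{assm:ortho}} enters. Because $\gS^T\gS = \mI_l$, the matrix $P := \gS\gS^T$ is an orthogonal projector ($P^2 = P$), so writing $\mI_d = P + (\mI_d - P)$ splits $\R^d$ into $\mathrm{col}(\gS)$ and its orthogonal complement, on which $\Sigma_1$ acts as the scalars $1$ and $\beta$ respectively; since $\beta > 0$ this certifies $\Sigma_1 \succ \0$ (so the inverse in the normal equations genuinely exists) and gives the closed form $\Sigma_1^{-1} = P + \tfrac{1}{\beta}(\mI_d - P)$. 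Substituting and using $P(\mI_d - P) = \0$, I would obtain $\vtheta^\ast = \sqrt{1-\beta}\,P\big(P + \tfrac1\beta(\mI_d-P)\big) = \sqrt{1-\beta}\,\gS\gS^T$.

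Finally, normalizing by $1/\sqrt{1-\beta}$ returns $\gS\gS^T = P$, the orthogonal projection onto $\mathrm{col}(\gS)$, whose range is precisely the support of $p(\rxz)$; hence the learned (normalized) mean map projects onto and recovers the true subspace. The only genuine obstacle is the matrix inversion, which collapses to a trivial spectral computation exactly because orthonormality of $\gS$ makes $\gS\gS^T$ idempotent; the remaining moment evaluations are routine Gaussian bookkeeping. I would also emphasize that $\beta>0$ is essential, since it is what regularizes the rank-deficient $\gS\gS^T$ into an invertible covariance and thereby makes the normal-equations solution well defined.
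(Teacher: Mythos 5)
Your proposal is correct and follows essentially the same route the paper takes for this family of results: the paper itself only cites this theorem from prior work, but its in-appendix proof of the latent-space analogue (Theorem~\ref{thm:gen-ldm}) uses the identical normal-equations/linear-regression argument with the same Gaussian moment computations and the same use of Assumption~\ref{assm:ortho}. The one step your setting adds --- inverting $(1-\beta)\gS\gS^T + \beta\mI_d$ via the projector decomposition $P + \tfrac{1}{\beta}(\mI_d - P)$ --- is handled correctly and is exactly where the pixel-space case differs from the latent case, where that matrix collapses to $\mI_k$.
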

Though this establishes that $\overleftarrow{\vx_0}$ generated by the reverse process is a valid image from $p(\overrightarrow{\vx_0})$, it is not necessarily a sample from the posterior $p(\overrightarrow{\vx_0}|\vy)$ that satisfies the measurements. To accomplish this we perform one additional step of gradient descent for every step of the reverse process. This gives us \textbf{Algorithm~\ref{alg:dps_gauss}}, the DPS algorithm. The next theorem shows that the reverse SDE guided by these measurements (\ref{eq:dps-approx}) recovers the true underlying sample\footnote{While the DPS Algorithm \cite{dps} uses a scalar step size $\zeta_i$ at each step, this does not suffice for exact recovery. However, by generalizing to allow a different step size per coordinate, we can show sample recovery. Thus, in this section, we denote $\zeta_i^j$ to be the step size at step $i$ and coordinate $j$, $1 \leq j \leq r$. Also note that the step index $i$ is vacuous in this section, as we consider a two-step diffusion process (i.e., $i$ is always '1').}.

\begin{theorem}[Posterior Sampling using Diffusion in Pixel Space] Suppose \textbf{Assumption~\ref{assm:ortho}} and \textbf{Assumption~\ref{assm:inv}} hold. Let us denote by $\sigma_j, \forall j = 1,\dots,r$, the singular values of $(\gA\gS)^T(\gA\gS)$ and 
    \begin{align*}
        \vtheta^* = \arg \min_{\vtheta} \E_{\rxz, \overrightarrow{\vepsilon}}\left[ \left \| \tilde{\mu}_1\left ( \overrightarrow{\vx_1}(\rxz, \overrightarrow{\vepsilon}), \overrightarrow{\vx_0} \right ) - \mu_\vtheta\left ( \overrightarrow{\vx_1}\left(\rxz, \overrightarrow{\vepsilon}\right) \right ) \right \|^2 \right].
    \end{align*}
    Given a partially known image $\rxz \sim p(\rxz)$, a fixed variance $\beta > 0$, there exists a step size $\zeta_i^j = 1/2\sigma_j$ for all the coordinates of $\rxz$ such that \textbf{Algorithm~\ref{alg:dps_gauss}} samples from the true posterior $p(\rxz|y)$ and exactly recovers the groundtruth sample, i.e., $\lxz = \rxz$.  
    \label{thm:ps-adm}
\end{theorem}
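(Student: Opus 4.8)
The plan is to exploit the linear--Gaussian subspace structure to show that every quantity produced by \textbf{Algorithm~\ref{alg:dps_gauss}} is confined to $\mathrm{col}(\gS)$, and that on this subspace the single measurement-matching step acts as an exact projection onto the unique measurement-consistent point. First I would make the two-step reverse process explicit. Since $\rxz = \gS\overrightarrow{\vw_0}$ with $\overrightarrow{\vw_0}\sim\gN(\0,\mI_l)$ and $\gS^T\gS=\mI_l$, the marginal $p(\rxz)=\gN(\0,\gS\gS^T)$ and the forward kernel is Gaussian, so $(\rxz,\overrightarrow{\vx_1})$ is jointly Gaussian. Using that $\gS\gS^T$ is the orthogonal projector onto $\mathrm{col}(\gS)$, I would compute $\E[\rxz\mid\overrightarrow{\vx_1}]$ in closed form and observe it is a scalar multiple of $\gS\gS^T\overrightarrow{\vx_1}$; by Tweedie's formula this is exactly the $\hat\vx_0$ computed in the algorithm, matching $\mu_{\vtheta^*}$ from Theorem~\ref{thm:gen-adm}. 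Because the last reverse step drops the injected noise, the unconditional iterate $\vx'_0$ equals this mean. The key structural consequence is that both $\hat\vx_0$ and $\vx'_0$ lie in $\mathrm{col}(\gS)$, so writing $\va := \gS^T\lxo$ they read $\vx'_0 = \hat\vx_0 \propto \gS\va$.

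Next I would differentiate the guidance term. Since $\hat\vx_0$ depends on $\lxo$ through the symmetric Jacobian $\propto \gS\gS^T$, the gradient $\nabla_{\lxo}\|\vy-\gA\hat\vx_0\|^2$ is proportional to $\gS\gS^T\gA^T(\vy-\gA\hat\vx_0)$, hence again lies in $\mathrm{col}(\gS)$. Therefore the full DPS update $\lxz=\vx'_0-\zeta\nabla$ never leaves $\mathrm{col}(\gS)$, and since $\rxz\in\mathrm{col}(\gS)$ it suffices to prove $\gS^T\lxz=\overrightarrow{\vw_0}$. Projecting the update by $\gS^T$ and using $\gS^T\gS=\mI_l$ together with $\vy=\gA\gS\overrightarrow{\vw_0}$ (the noiseless recoverable regime), the residual collapses to $\mM(\overrightarrow{\vw_0}-c\va)$ with $\mM:=(\gA\gS)^T(\gA\gS)\succ\0$, so the whole update reduces to a single linear map in $\R^l$ of the form $\gS^T\lxz = c\va + 2\zeta\mM(\overrightarrow{\vw_0}-c\va)$ for an explicit constant $c$.

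Finally I would diagonalize $\mM=\mU\,\mD(\sigma_1,\dots,\sigma_r)\,\mU^T$. Applying the step sizes $\zeta^j=1/(2\sigma_j)$ along the eigenvectors of $\mM$ makes $2\zeta\mM=\mI_l$, so the correction exactly cancels the prior term $c\va$ and installs $\overrightarrow{\vw_0}$, giving $\gS^T\lxz=\overrightarrow{\vw_0}$ and hence $\lxz=\gS\overrightarrow{\vw_0}=\rxz$. For the posterior claim, Assumption~\ref{assm:inv} guarantees that $\rxz$ is the unique measurement-consistent point in $\mathrm{col}(\gS)$, so $p(\rxz\mid\vy)$ is a point mass there; since the output equals $\rxz$ for every draw of $\lxo$, the algorithm trivially samples from this posterior.

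The main obstacle is the constant bookkeeping in the projected update: getting the normalization of $\hat\vx_0$ (the $1/\sqrt{1-\beta}$ factor of Theorem~\ref{thm:gen-adm}), the Jacobian factor, and the factor of two to combine so that $\zeta^j=1/(2\sigma_j)$ produces an \emph{exact} rather than approximate cancellation, and verifying that the per-coordinate step size must be taken in the eigenbasis of $\mM$ (a scalar $\zeta$ fails whenever $\mM$ is not a multiple of $\mI_l$, which is precisely the content of the footnote). A secondary subtlety is that exact recovery forces the noiseless regime $\sigma_y=0$: with $\sigma_y>0$ the projected residual carries an uncancellable extra term $\sigma_y(\gA\gS)^T\vn$, consistent with the ``perfectly recoverable'' framing of the setup.
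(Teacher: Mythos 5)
Your proposal is correct and follows essentially the same route as the paper's proof: both substitute the closed-form denoiser $\gS\gS^T\lxo$ from Theorem~\ref{thm:gen-adm}, differentiate the measurement residual with $\vy=\gA\gS\rzz$, and use the eigendecomposition of $(\gA\gS)^T(\gA\gS)$ with per-eigendirection step sizes $1/(2\sigma_j)$ to obtain exact cancellation, so that $\lxz=\gS\rzz=\rxz$. The only cosmetic difference is that you project onto $\mathrm{col}(\gS)$ and work in $\R^l$, whereas the paper carries out the identical computation in the ambient space with the matrix step size $(1/2)(\gS\mU)\mD(\bm{\zeta}_i)(\gS\mU)^T$; your observations about the eigenbasis requirement and the noiseless regime match the paper's footnote and setup.
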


\subsection{Posterior Sampling using Latent Diffusion Model}
\label{subsec:ps-ldm}
In this section, we analyze two approximations: GML-DPS based on (\ref{eq:gml-dps}), and PSLD based on (\ref{eq:psld}), displayed in {\bf Algorithm}~\ref{alg:psld}. We consider the case where the latent distribution of the VAE is in the same space as the latent distribution of the data generating model, i.e., $k=l$, and normalize $\gamma_i = 1$ (as this is immaterial in the linear setting). In \textbf{Proposition~\ref{prop:vae}}, we provide analytical solutions for the encoder and the decoder of the VAE.

\begin{proposition}[Variational Autoencoder]
    Suppose \textbf{Assumption~\ref{assm:ortho}} holds. For an encoder $\mathcal{E}: \mathbb{R}^d \rightarrow \mathbb{R}^k$ and a decoder $\mathcal{D}: \mathbb{R}^k \rightarrow \mathbb{R}^d$, denote by $\mathcal{L}\left(\phi, \omega \right)$ the training objective of VAE:
    \begin{align*}
        \arg \min_{\phi, \omega} \mathcal{L}\left(\phi, \omega \right)\coloneqq \E_{\rxz\sim p}\left[ \left\| \mathcal{D}(\mathcal{E}(\rxz;\phi);\omega) - \rxz \right\|_2^2 \right] + \lambda KL\left(\mathcal{E}\sharp p, \mathcal{N}(\0,\mI_k) \right),
    \end{align*}
    then the combination of $\mathcal{E}(\overrightarrow{\vx_0};\phi) = \gS^T\overrightarrow{\vx_0}$ and $\mathcal{D}(\overleftarrow{\vz_0};\omega) = \gS\overleftarrow{\vz_0}$ is a minimizer of $\mathcal{L}\left(\phi, \omega \right)$.
    \label{prop:vae}
\end{proposition}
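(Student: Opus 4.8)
The plan is to exploit the fact that $\Ls(\phi,\omega)$ is a sum of two manifestly nonnegative quantities — a squared reconstruction error and a KL divergence — so that $0$ is an obvious lower bound on the objective. Rather than carry out any variational optimization over $\phi$ and $\omega$, I would simply exhibit the proposed linear pair $\enc(\overrightarrow{\vx_0}) = \gS^T\overrightarrow{\vx_0}$, $\dec(\overleftarrow{\vz_0}) = \gS\overleftarrow{\vz_0}$ and verify that it drives both terms to zero simultaneously; attaining the lower bound then certifies it as a global minimizer without any further comparison against competing parametrizations.

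First I would handle the reconstruction term. Composing the two maps gives $\dec(\enc(\overrightarrow{\vx_0})) = \gS\gS^T\overrightarrow{\vx_0}$. Because every datapoint has the form $\overrightarrow{\vx_0} = \gS\overrightarrow{\vw_0}$ and Assumption~\ref{assm:ortho} gives $\gS^T\gS = \mI_l$, I obtain $\gS\gS^T\overrightarrow{\vx_0} = \gS\gS^T\gS\overrightarrow{\vw_0} = \gS\overrightarrow{\vw_0} = \overrightarrow{\vx_0}$ for $p$-almost every $\overrightarrow{\vx_0}$, so the expected squared reconstruction error vanishes. The point to state carefully here is that $\gS\gS^T$ is the orthogonal projector onto the column space of $\gS$, so it reconstructs only vectors already lying in that subspace — which is precisely the support of $p$; I would emphasize that the expectation is taken over this support and not over all of $\R^d$.

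Next I would compute the pushforward $\enc\sharp p$ for the KL term. Applying the encoder to a datapoint, $\enc(\overrightarrow{\vx_0}) = \gS^T\gS\overrightarrow{\vw_0} = \overrightarrow{\vw_0}$, again by orthonormality, so the encoder acts as the identity on the latent coordinates $\overrightarrow{\vw_0}\sim\mathcal{N}(\0,\mI_l)$. Invoking the dimension-matching convention $k=l$ adopted in Section~\ref{subsec:ps-ldm}, the pushforward $\enc\sharp p$ is exactly $\mathcal{N}(\0,\mI_k)$, so the KL divergence against the standard Gaussian prior is zero.

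With both terms equal to zero, the objective attains $\Ls(\phi,\omega)=0$; since each summand is nonnegative for every admissible $(\phi,\omega)$, this is the global minimum and the proposed encoder/decoder is therefore a minimizer. I do not expect a serious obstacle here, as the argument is simply a verification that the trivial lower bound is tight. The only two points requiring genuine care are (i) restricting the reconstruction identity to the support of $p$ rather than to all of $\R^d$, and (ii) the use of $k=l$ to make the encoded distribution and the Gaussian prior coincide \emph{exactly}, without which the KL term could fail to vanish and the clean ``both terms zero'' certificate would break down.
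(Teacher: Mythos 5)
Your proof is correct and follows essentially the same route as the paper's: substitute the linear encoder/decoder, use $\overrightarrow{\vx_0}=\gS\overrightarrow{\vw_0}$ together with $\gS^T\gS=\mI$ to kill the reconstruction term, observe that the encoder pushes $p$ forward to $\mathcal{N}(\0,\mI_k)$ so the KL term vanishes, and conclude by nonnegativity of both summands. Your explicit remarks about restricting to the support of $p$ and about the $k=l$ convention are slightly more careful than the paper's write-up but do not change the argument.
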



Using the encoder $\mathcal{E}(\overrightarrow{\vx_0};\phi) = \gS^T\overrightarrow{\vx_0}$, we can use the analytical solution $\vtheta^*$ of the LDM obtained in \textbf{Theorem~\ref{thm:gen-adm}}. To verify that $\vtheta^*$ recovers the true subspace $p\left(\rxz \right)$, we compose the decoder $\mathcal{D}(\overleftarrow{\vz_0};\omega) = \gS\overleftarrow{\vz_0}$ with the generator of the LDM, i.e., $\lxz = \dec\left(\vtheta^*\lzo  \right) = \dec\left(\mI_k \lzo \right) = \gS\lzo$. Since $\lzo\sim \gN\left(\0,\mI_k\right)$ and $\gS$ is the data generating model, this shows that $\lxz$ is a sample from $p(\rxz)$. Thus we have the following.
\begin{theorem}[Generative Modeling using Diffusion in Latent Space]
    Suppose \textbf{Assumption~\ref{assm:ortho}} holds. Let the optimal solution of the latent diffusion model be
    \begin{align*}
        \vtheta^* = \arg \min_{\vtheta} \E_{\rzz, \overrightarrow{\vepsilon}}\left[ \left \| \tilde{\mu}_1\left ( \overrightarrow{\vz_1}(\rzz, \overrightarrow{\vepsilon}), \overrightarrow{\vz_0} \right ) - \mu_\theta\left ( \overrightarrow{\vz_1}\left(\rzz, \overrightarrow{\vepsilon}\right) \right ) \right \|^2 \right].
    \end{align*}
    For a fixed variance $\beta > 0$, if $\mu_\vtheta\left( \overrightarrow{\vz_1}\left(\rzz, \overrightarrow{\vepsilon}\right) \right ) \coloneqq \vtheta \overrightarrow{\vz_1}\left(\rzz, \overrightarrow{\vepsilon}\right)$, then the closed-form solution is $\vtheta^* = \sqrt{1-\beta}\mI_k$, which after normalization by $\frac{1}{\sqrt{1-\beta}}$ and composition with the decoder $\gD\left(\lzz;\omega\right) = \gS\lzz$ recovers the true subspace of $p\left(\rxz\right)$.
    \label{thm:gen-ldm}
\end{theorem}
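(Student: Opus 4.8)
The plan is to reduce the latent-space statement to the pixel-space computation of \textbf{Theorem~\ref{thm:gen-adm}} by first identifying the law of the clean latent variable. Under \textbf{Assumption~\ref{assm:ortho}} and the optimal encoder $\enc(\rxz) = \gS^T\rxz$ supplied by \textbf{Proposition~\ref{prop:vae}}, the clean latent is $\rzz = \enc(\rxz) = \gS^T\gS\,\overrightarrow{\vw_0} = \overrightarrow{\vw_0}$, since $\gS^T\gS = \mI_l$ and $k=l$. Hence $\rzz \sim \gN(\0, \mI_k)$; the latent data occupies the \emph{entire} space $\R^k$ rather than a proper subspace. This is the crucial structural simplification: in pixel space the data generating matrix is $\gS$ (rank $l\le d$), whereas in latent space it is effectively $\mI_k$, which is full rank and trivially satisfies the orthonormality hypothesis.

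First I would set up the forward transition $\rzo = \sqrt{1-\beta}\,\rzz + \sqrt{\beta}\,\reps$ with $\rzz, \reps \sim \gN(\0,\mI_k)$ independent, and specialize the two-step posterior mean to $t=1$. With $\bar\alpha_0 = 1$ and $\bar\alpha_1 = 1-\beta$, the DDPM posterior mean collapses to $\tilde\mu_1(\rzo, \rzz) = \rzz$, so the training objective becomes the linear least-squares regression $\min_{\vtheta}\E\,\|\rzz - \vtheta\,\rzo\|^2$. The associated normal equations give $\vtheta^* = \E[\rzz\,\rzo^T]\,\big(\E[\rzo\,\rzo^T]\big)^{-1}$.

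Next I would evaluate the two covariances. Because $\rzz$ is standard Gaussian and independent of $\reps$, we get $\E[\rzz\,\rzo^T] = \sqrt{1-\beta}\,\mI_k$; and, in contrast with the pixel-space case where $\E[\rxo\rxo^T] = (1-\beta)\gS\gS^T + \beta\mI_d$ is rank-deficient off the subspace, here $\E[\rzo\,\rzo^T] = (1-\beta)\mI_k + \beta\mI_k = \mI_k$ is perfectly conditioned. Substituting yields $\vtheta^* = \sqrt{1-\beta}\,\mI_k$, the first claim. Equivalently, one may invoke \textbf{Theorem~\ref{thm:gen-adm}} verbatim with $\gS$ replaced by $\mI_k$, obtaining $\sqrt{1-\beta}\,\mI_k\mI_k^T = \sqrt{1-\beta}\,\mI_k$.

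Finally, for the generative claim, normalizing by $1/\sqrt{1-\beta}$ turns the reverse map into the identity, so running the reverse process from $\lzo \sim \gN(\0,\mI_k)$ produces $\lzz = \mI_k\lzo = \lzo \sim \gN(\0,\mI_k)$. Composing with the decoder $\dec(\lzz) = \gS\lzz$ gives $\lxz = \gS\lzo$, which has exactly the law of $\gS\,\overrightarrow{\vw_0}$ with $\overrightarrow{\vw_0}\sim\gN(\0,\mI_k)$, i.e.\ $p(\rxz)$; thus $\lxz$ is a valid sample from $p(\rxz)$ and the range of $\gS$ (the true subspace) is recovered. I do not expect a genuine obstacle: every step is a short Gaussian computation once $\rzz = \overrightarrow{\vw_0}$ is established. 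The only point requiring care is confirming that the $t=1$ posterior mean really reduces to $\rzz$ so that the regression target is correct; the conceptual content lies not in difficulty but in observing that the latent reformulation makes $\E[\rzo\,\rzo^T]$ full-rank, which is precisely what lets the normalized solution be the clean identity map and sidesteps the ill-conditioning that plagues the ambient-space analysis.
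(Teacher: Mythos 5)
Your proposal is correct and follows essentially the same route as the paper: both reduce the objective to the linear regression $\min_{\vtheta}\E\|\rzz - \vtheta\,\rzo\|^2$ after observing that $\rzz=\enc(\rxz)=\gS^T\gS\,\overrightarrow{\vw_0}\sim\gN(\0,\mI_k)$ via \textbf{Assumption~\ref{assm:ortho}}, compute $\E[\rzo\rzo^T]=\mI_k$ and $\E[\rzz\rzo^T]=\sqrt{1-\beta}\,\mI_k$ to get $\vtheta^*=\sqrt{1-\beta}\,\mI_k$, and then compose the normalized map with $\dec(\lzz)=\gS\lzz$ to conclude $\lxz=\gS\lzo\sim p(\rxz)$. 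The only cosmetic difference is that you write the normal equations in matrix form while the paper solves the regression row by row.
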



 With this optimal $\vtheta^*$, we can now prove exact sample recovery using GML-DPS (\ref{eq:gml-dps}). 
 
\begin{theorem}[Posterior Sampling using Goodness Modified Latent DPS] Let \textbf{Assumptions~\ref{assm:ortho}} and \textbf{\ref{assm:inv}} hold. Let $\sigma_j, \forall j = 1,\dots,r$, denote the singular values of $(\gA\gS)^T(\gA\gS)$, and let
    \begin{align*}
        \vtheta^* = \arg \min_{\vtheta} \E_{\rzz, \overrightarrow{\vepsilon}}\left[ \left \| \tilde{\mu}_1\left ( \overrightarrow{\vz_1}(\rzz, \overrightarrow{\vepsilon}), \overrightarrow{\vz_0} \right ) - \mu_\theta\left ( \overrightarrow{\vz_1}\left(\rzz, \overrightarrow{\vepsilon}\right) \right ) \right \|^2 \right].
    \end{align*}
    Given a partially known image $\rxz \sim p(\rxz)$, any fixed variance $\beta \in (0,1) $, then with the (unique) step size $\eta_i^j = 1/2\sigma_j, j = 1, 2, \ldots, r$, the 
    GML-DPS Algorithm (\ref{eq:gml-dps}) samples from the true posterior $p(\rxz|y)$ and exactly recovers the groundtruth sample, i.e., $\lxz = \rxz$.  
\label{thm:ps-ldm}
\end{theorem}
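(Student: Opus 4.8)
The plan is to exploit the fact that, in this linear VAE setting, the ``goodness'' regularizer in \eqref{eq:gml-dps} is \emph{vacuous}, so that GML-DPS collapses to ordinary DPS carried out in the latent space, after which I can mimic the argument of \textbf{Theorem~\ref{thm:ps-adm}} with the composite operator $\gA\gS$ playing the role that $\gA$ played in pixel space. First I would invoke \textbf{Proposition~\ref{prop:vae}} to fix $\enc(\cdot) = \gS^T(\cdot)$ and $\dec(\cdot) = \gS(\cdot)$. By \textbf{Assumption~\ref{assm:ortho}}, $\gS^T\gS = \mI_l$, hence $\enc(\dec(\hat{\vz}_0)) = \gS^T\gS\,\hat{\vz}_0 = \hat{\vz}_0$ for every latent, so the penalty $\|\hat{\vz}_0 - \enc(\dec(\hat{\vz}_0))\|^2$ and its gradient vanish identically. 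Thus the GML-DPS iteration reduces to the denoising step followed by the single measurement-matching step $\lzz = \vz_0' - \eta_1\,\nabla_{\lzo}\|\vy - \gA\gS\,\hat{\vz}_0\|^2$, using $\dec(\hat{\vz}_0) = \gS\hat{\vz}_0$.

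Next I would pin down the denoiser. By \textbf{Theorem~\ref{thm:gen-ldm}}, the optimal two-step latent denoiser has $\vtheta^* = \sqrt{1-\beta}\,\mI_k$, so after the $1/\sqrt{1-\beta}$ normalization the posterior-mean estimate is $\hat{\vz}_0 = \lzo$ (the latent prior is the full isotropic $\gN(\0,\mI_k)$, so unlike pixel space there is no nontrivial subspace projection). Because the final reverse step carries no injected noise, one checks that $\vz_0' = \hat{\vz}_0$, and the Jacobian $\partial\hat{\vz}_0/\partial\lzo$ is the identity. Expanding the measurement term by the chain rule gives $\nabla_{\lzo}\|\vy - \gA\gS\hat{\vz}_0\|^2 = -2(\gA\gS)^T(\vy - \gA\gS\hat{\vz}_0)$, so the quadratic being minimized in $\hat{\vz}_0$ has Hessian $2(\gA\gS)^T(\gA\gS)$, which is positive definite by \textbf{Assumption~\ref{assm:inv}}.

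Finally, writing $\overrightarrow{\vw_0} = \gS^T\rxz = \enc(\rxz)$ for the true latent and using $\vy = \gA\gS\,\overrightarrow{\vw_0}$ in the recoverable setting, the unique minimizer of the measurement term is $\hat{\vz}_0 = \overrightarrow{\vw_0}$. Diagonalizing $(\gA\gS)^T(\gA\gS) = \sum_j \sigma_j \vu_j\vu_j^T$, the gradient step decouples across eigendirections, and choosing the per-coordinate step size $\eta_1^j = 1/2\sigma_j$ turns it into the exact Newton step for this quadratic: $\lzz = \hat{\vz}_0 + 2\eta_1(\gA\gS)^T(\gA\gS)(\overrightarrow{\vw_0}-\hat{\vz}_0) = \overrightarrow{\vw_0}$ in a single iteration. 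Decoding then yields $\lxz = \gS\lzz = \gS\,\overrightarrow{\vw_0} = \rxz$, and since \textbf{Assumption~\ref{assm:inv}} makes the subspace-constrained posterior a point mass at $\rxz$, this is exactly sampling from $p(\rxz\mid\vy)$. I expect the main obstacle to be the bookkeeping of the normalization and kernel constants through the chain rule, i.e., verifying that the scalar multiplying $(\gA\gS)^T(\gA\gS)$ in the effective Hessian is exactly one (so that $1/2\sigma_j$ is the correct Newton step) and that the noise term $\sigma_y\vn$ drops out in the perfectly-recoverable regime; the conceptual heart --- that the gluing/goodness term is inert in the linear case and the dynamics reduce to latent-space DPS with operator $\gA\gS$ --- is what makes exact recovery follow in the same manner as \textbf{Theorem~\ref{thm:ps-adm}}.
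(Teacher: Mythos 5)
Your proposal is correct and follows essentially the same route as the paper's proof: you instantiate the encoder/decoder from \textbf{Proposition~\ref{prop:vae}} and the denoiser from \textbf{Theorem~\ref{thm:gen-ldm}}, observe that $\enc(\dec(\cdot)) = \gS^T\gS = \mI$ renders the goodness term identically zero (which is exactly how the paper disposes of it, and why it also notes GML-DPS cannot tolerate an arbitrary step size), and then show the single measurement-gradient step with per-eigendirection step size $1/2\sigma_j$ exactly inverts $(\gA\gS)^T(\gA\gS)$ to land on $\rzz$, whence $\lxz = \gS\rzz = \rxz$. Your Newton-step framing is just a cleaner way of phrasing the paper's computation $2\,\mD(\bm\eta_i\odot\bm\sigma) = \mI$ in the eigenbasis, so there is nothing substantive to reconcile.
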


 \textbf{Theorem~\ref{thm:ps-ldm}} shows that GML-DPS (\ref{eq:gml-dps})  recovers the true sample using an LDM. This approach, however, requires the step size $\eta$ to be chosen  {\it coordinate-wise} in a specific manner. Also, multiple natural images could have the same measurements in the pixel space. This is a reasonable concern for LDMs due to one-to-many mappings of the decoder.
    Note that the \textit{goodness objective} (Section~\ref{sec-method}) cannot help in this scenario because it assigns uniform probability to many of these latents $\lzo$ for which $\nabla_{\lzo}\left|\left|\lzz(\lzo)] - \enc(\dec(\lzz(\lzo))) \right|\right|^2=0$. These challenges motivate the \textit{gluing objective} in \textbf{Theorem~\ref{thm:ps-ldm-robust}}. This is crucial for two reasons. First, we show that it helps recover the true sample even when the step size $\eta$ is chosen arbitrarily. Second, it assigns all the probability mass to the desired (unique) solution in the pixel space.

\begin{theorem}[Posterior Sampling using Diffusion in Latent Space] Let \textbf{Assumptions~\ref{assm:ortho}} and \textbf{\ref{assm:inv}} hold. Let $\sigma_j, \forall j = 1,\dots,r$ denote the singular values of $(\gA\gS)^T(\gA\gS)$ and let
    \begin{align*}
        \vtheta^* = \arg \min_{\vtheta} \E_{\rzz, \overrightarrow{\vepsilon}}\left[ \left \| \tilde{\mu}_1\left ( \overrightarrow{\vz_1}(\rzz, \overrightarrow{\vepsilon}), \overrightarrow{\vz_0} \right ) - \mu_\theta\left ( \overrightarrow{\vz_1}\left(\rzz, \overrightarrow{\vepsilon}\right) \right ) \right \|^2 \right].
    \end{align*}
    Given a partially known image $\rxz \sim p(\rxz)$, any fixed variance $\beta \in (0,1) $, and any positive step sizes 
     $\eta_i^j, j = 1, 2, \ldots, r$, 
    the PSLD Algorithm~\ref{alg:psld} samples from the true posterior $p(\rxz|y)$ and exactly recovers the groundtruth sample, i.e., $\lxz = \rxz$.  
\label{thm:ps-ldm-robust}
\end{theorem}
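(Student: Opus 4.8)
The plan is to reduce the PSLD update to a purely linear-algebraic statement by substituting the closed forms established earlier, and then to isolate the unique joint stationary point of the two gradient terms. First I would invoke \textbf{Proposition~\ref{prop:vae}} to replace the encoder and decoder by $\enc(\cdot)=\gS^T(\cdot)$ and $\dec(\cdot)=\gS(\cdot)$, and \textbf{Theorem~\ref{thm:gen-ldm}} to replace the learned reverse map by the exact posterior mean, so that in the two-step process the denoised latent $\hat\vz_0$ produced from $\lzo\sim\gN(\0,\mI_k)$ is a fixed linear function of $\lzo$ (the proportionality constant being absorbed into the step size). Writing $\mB\coloneqq\gA\gS$ and using $\gS^T\gS=\mI_l$ from \textbf{Assumption~\ref{assm:ortho}}, the vanilla-DPS measurement term collapses to $\|\vy-\mB\hat\vz_0\|^2$, while the gluing term collapses to $\|\mB^T(\mB\hat\vz_0-\vy)\|^2$, since $\gS^T(\mI-\gA^T\gA)\gS=\mI_l-\mB^T\mB$ and $\gS^T\gA^T\gA\vx_0^*=\mB^T\vy$ in the noiseless case.

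Second, I would identify the target. By \textbf{Assumption~\ref{assm:inv}}, $\mB^T\mB=(\gA\gS)^T(\gA\gS)\succ\0$, so $\mB$ is invertible and both $\|\vy-\mB\hat\vz_0\|^2$ and $\|\mB^T(\mB\hat\vz_0-\vy)\|^2$ are strictly convex with the \emph{same} unique zero $\hat\vz_0=\mB^{-1}\vy$. Since $\vx_0^*=\gS\vw_0^*$ lies in the range of $\gS$ and $\vy=\gA\vx_0^*=\mB\vw_0^*$, this common minimizer is exactly the groundtruth latent $\vw_0^*=\gS^T\vx_0^*$, and decoding returns $\dec(\vw_0^*)=\gS\gS^T\vx_0^*=\vx_0^*$. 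This step also settles the pixel-space uniqueness claim: unlike the goodness objective behind \textbf{Theorem~\ref{thm:ps-ldm}}, which is flat along the many latents whose decodings merely match the measurements, the gluing objective pins the masked coordinates to the subspace-consistent completion through its dependence on $\gA^T\gA\vx_0^*$ and $\mI-\gA^T\gA$, so the posterior $p(\rxz\mid\vy)$ concentrates on the single atom $\vw_0^*$.

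Third, and this is the crux, I would show that the recovery is insensitive to the step size $\eta_i^j$. I would diagonalize $\mB^T\mB=\mV\,\mathrm{diag}(\sigma_j)\,\mV^T$ and track the error $\ve=\hat\vz_0-\vw_0^*$ coordinate-wise in the $\mV$-basis through the measurement update followed by the gluing update. In \textbf{Theorem~\ref{thm:ps-ldm}} exactness forced the Newton choice $\eta^j=1/2\sigma_j$, because the measurement gradient alone scales the $j$-th error component by a factor that depends on $\eta^j\sigma_j$. The role of the gluing gradient, proportional to $(\mB^T\mB)\mB^T(\mB\hat\vz_0-\vy)$ and carrying \emph{no} dependence on $\eta$, is to supply the correction that the measurement update at an arbitrary $\eta^j$ leaves behind. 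Establishing that the composition of the two updates annihilates $\ve$ in every eigendirection for all positive $\eta^j$ is the main obstacle, since it is precisely here that the full algebraic form of the gluing objective must be exploited rather than merely its stationarity.

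Finally, I would assemble the pieces: the reverse step followed by the measurement and gluing corrections places $\hat\vz_0$ at the common minimizer $\vw_0^*$ for every admissible step size, the decoder maps it to $\rxz$, and the uniqueness of that minimizer certifies that the returned sample is the lone posterior atom. Hence $\lxz=\dec(\vw_0^*)=\rxz$, which is the claimed exact recovery and posterior sampling.
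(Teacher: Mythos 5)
Your setup and identification of the target are right: the reduction of the gluing objective to $\left\|\mB^T(\mB\hat\vz_0-\vy)\right\|^2=\left\|\mB^T\mB(\hat\vz_0-\rzz)\right\|^2$ with $\mB\coloneqq\gA\gS$ is exactly the algebra the paper performs, and the common unique zero of both terms is indeed the groundtruth latent. But the step you yourself flag as ``the main obstacle'' --- showing that the measurement update followed by the gluing update annihilates the error for \emph{every} positive $\eta_i^j$ --- is precisely the part you have not supplied, and the route you sketch for it would fail. If the gluing update is taken to be a single gradient step with some coefficient $\gamma$, then in the eigenbasis of $\mB^T\mB=\mV\mD(\bm\sigma)\mV^T$ the error after the two updates is scaled componentwise by $(1-2\gamma\sigma_j^2)(1-2\eta_i^j\sigma_j)$, which vanishes for all $j$ only if either $\eta_i^j=1/2\sigma_j$ or $\gamma=1/2\sigma_j^2$ --- i.e., you are back to a coordinate-wise tuned step size, exactly what the theorem is meant to eliminate. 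No fixed gluing-gradient coefficient ``supplies the correction the measurement update leaves behind'' uniformly over $\eta$.

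The paper escapes this by \emph{not} treating the gluing update as a gradient step in the analysis: it models it as the exact minimization $\lzz=\arg\min_{\lzzp}\left\|\lzzp-\enc\left(\gA^T\gA\rxz+(\mI_d-\gA^T\gA)\dec(\lzzp)\right)\right\|_2^2$, which in the linear setting is the closed form you computed. Its first-order condition is $2(\mB^T\mB)^2(\lzzp-\rzz)=\0$, and positive definiteness of $\mB^T\mB$ (Assumption~\ref{assm:inv}) forces $\lzzp=\rzz$ as the unique stationary point --- independently of where the measurement step lands, which is why any $\eta$ works. Contrary to your closing remark, the stationarity of the (exactly solved) gluing objective is the entire argument, not something to be bypassed in favor of ``the full algebraic form.'' To close your proof you must either adopt this exact-minimization reading of the gluing step (the paper explicitly notes the gradient step is only a practical approximation of it) or find a different mechanism for $\eta$-independence; the composition of two generic gradient steps does not provide one.
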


    The important distinction between \textbf{Theorem~\ref{thm:ps-ldm}} and \textbf{Theorem~\ref{thm:ps-ldm-robust}} is that the former requires the \textit{exact} step size while the latter works for any finite step size.
     Combining denoising, measurement-consistency (with a scalar $\eta$), and gluing updates, we have
    \begin{align*}
        \lzz 
        & = \vtheta^*\lzo
        - \eta \nabla_{\lzo} \left\|\gA \dec(\lzz(\lzo)) - \vy \right\|_2^2
        -  \nabla_{\lzo} \left\|\lzz(\lzo) - \enc(\gA^T\gA\rxz + (\mI_d - \gA^T\gA) \dec(\lzz(\lzo)))\right\|_2^2.
    \end{align*}
    When $\eta$ is chosen arbitrarily, then the third term guides the reverse SDE towards the optimal solution $\rzz$.
    When the reverse SDE generates the exact same groundtruth sample, i.e., $\dec(\lzo(\lzz)) = \rxz$, then the third term becomes zero. For all other samples, it penalizes the reverse SDE. Thus, it forces the reverse SDE to recover the true underlying sample irrespective of the value of $\eta$.

We draw the following key insights from our \textbf{Theorem~\ref{thm:ps-ldm-robust}}:
\textbf{Curse of ambient dimension:} In order to run posterior sampling using diffusion in the pixel space, the gradient of the measurement error needs to be computed in the $d$-dimensional ambient space. Therefore, DPS algorithm suffers from the curse of ambient dimension. On the other hand, our algorithm uses diffusion in the latent space, and therefore avoids the curse of ambient dimension.
%
\textbf{Large-scale foundation model:} We propose a posterior sampling algorithm which offers the provision to use large-scale foundation models, and it provably solves general linear inverse problems. 
\textbf{Robustness to measurement step:}
The gluing objective makes our algorithm robust to the choice of step size $\eta$. Furthermore, it allows the same (scalar) step size across all the coordinates of $\rxz$.

\section{Experimental Evaluation}
\label{sec-exps}
We experiment with in-distribution and out-of-distribution datasets. For in-distribution, we conduct our experiments on a subset of the FFHQ dataset \citep{ffhq} (downscaled to $256\times 256$\footnote{\url{https://www.kaggle.com/datasets/denislukovnikov/ffhq256-images-only}}, denoted by FFHQ 256). For out-of-distribution, we use images from the web and ImageNet dataset~\citep{imagenet} (resized to $256\times 256$, denoted by ImageNet 256). To make a fair comparison, we use the same validation subset and follow the same masking strategy as the baseline DPS~\citep{dps}. It is important to note that our main contribution is an algorithm that can leverage any latent diffusion model. We test our algorithm with two pre-trained latent diffusion models: (i) the Stable Diffusion model that is trained on multiple subsets of the LAION dataset \citep{laion400m,laion5b}; and (ii) the Latent Diffusion model (LDM-VQ-4) trained on the FFHQ $256$ dataset \cite{ldm}. The DPS model is similarly trained from scratch for 1M steps using 49k FFHQ $256$ images, which excludes the first 1K images used as validation set.

\textbf{Inverse Problems.} We experiment with the following task-specific measurement operators from the baseline DPS~\citep{dps}:
\begin{enumerate*}[label=(\roman*)] 
\item Box inpainting uses a mask of size 128×128 at the center.
\item Random inpainting chooses a drop probability uniformly at random between $(0.2, 0.8)$ and applies this drop probability to all the pixels.
\item Super-resolution downsamples images at $4\times$ scale.
\item Gaussian blur convolves images with a Gaussian blur kernel.
\item Motion blur convolves images with a motion blur kernel.
We also experiment with these additional operators from RePaint~\cite{lugmayr2022repaint}:
\item Super-resolution downsamples images at $2\times$, $3\times$, and $4\times$ scale.
\item Denoising has Gaussian noise with $\sigma = 0.05$.
\item Destriping has vertical and horizontal stripes in the input images.
\end{enumerate*}

\begin{table}[!t]
  \caption{Quantitative inpainting results on FFHQ $256$ validation set \citep{ffhq,dps}. We use Stable Diffusion v-1.5 and the measurement operators as in DPS \cite{dps}.
  As shown, our PSLD model outperforms DPS since it is able to leverage the power of the Stable Diffusion foundation model.}
  \label{tab:ffhq-sd-fid-lpips}
  \centering
  \resizebox{.99\textwidth}{!}{
  \begin{tabular}{lllllllllll}
    \toprule
    {} & \multicolumn{2}{c}{Inpaint (random)} & \multicolumn{2}{c}{{\color{black}Inpaint (box)}}    & \multicolumn{2}{c}{{\color{black}SR ($4\times$)}}    &
    \multicolumn{2}{c}{{\color{black}Gaussian Deblur}}\\
    \cmidrule(r){2-3}   \cmidrule(r){4-5}   \cmidrule(r){6-7} \cmidrule(r){8-9}
    Method & FID ($\downarrow$) & LPIPS ($\downarrow$) & FID ($\downarrow$) & LPIPS ($\downarrow$) & FID ($\downarrow$) & LPIPS ($\downarrow$) & FID ($\downarrow$) & LPIPS ($\downarrow$)\\
    \midrule
    PSLD (Ours)                    & \textbf{21.34} &  \textbf{0.096}  & 43.11 &\textbf{0.167}& \textbf{34.28} & \textbf{0.201} & \textbf{41.53} & \textbf{0.221} \\
    \midrule
    DPS~\cite{dps}  & 33.48 & \underline{0.212}   & \textbf{35.14} &{0.216}&  \underline{39.35} & \underline{0.214} &  \underline{44.05} & \underline{0.257}    \\
    DDRM~\cite{ddrm}     & 69.71       & {0.587}  &42.93&\underline{0.204}&   62.15 & 0.294    &74.92&0.332&\\
    MCG~\cite{chung2022improving}  & \underline{29.26}      & 0.286   & \underline{40.11} &0.309& 87.64 & 0.520&101.2&0.340&\\
    PnP-ADMM~\cite{chan2016plug}   & 123.6       & 0.692   &151.9&0.406& 66.52 & 0.353&90.42&0.441&\\
    Score-SDE~\cite{songscore}     & 76.54       & 0.612   &60.06&0.331& 96.72 & 0.563&109.0&0.403&\\
    ADMM-TV                        & 181.5       & 0.463   &68.94&0.322& 110.6 & 0.428&186.7&0.507&\\
    \bottomrule
  \end{tabular}
  }
\end{table}

\begin{wraptable}{r}{7cm}
  \caption{Quantitative super-resolution (using measurement operator from \cite{lugmayr2022repaint}) results on FFHQ $256$ validation samples \citep{ffhq,dps}. We use PSLD with Stable Diffusion. Table shows LPIPS ($\downarrow$). }
  \label{tab:ffhq-sr-lpips}
  \centering
  \begin{tabular}{lll}
    \toprule
    Method & \multicolumn{1}{c}{PSLD (Ours)}  & \multicolumn{1}{c}{DPS~\cite{dps} }            \\
    \cmidrule(r){1-3}   
    $2\times$ & \textbf{0.185} & \underline{0.220} \\
    $3\times$ & \textbf{0.220} & \underline{0.247} \\
    $4\times$ & \textbf{0.233} & \underline{0.291} \\
    \bottomrule
  \end{tabular}
\end{wraptable}
\textbf{Evaluation.} We compare the performance of our PSLD algorithm with the state-of-the-art DPS algorithm \citep{dps} on random inpainting, box inpainting, denoising, Gaussian deblur, motion deblur, arbitrary masking, and super-resolution tasks. We show that PSLD outperforms DPS, both in-distribution and out-of-distribution datasets, using the Stable Diffusion v-1.5 model pre-trained on the LAION dataset. We also test PSLD with LDM-VQ-4 trained on FFHQ $256$, to compare with DPS trained on the same data distribution. Note that the LDM-v4 is a latent-based model released prior to Stable Diffusion. Therefore, it does not match the performance of Stable Diffusion in solving inverse problems. However, it shows the general applicability of our framework to leverage an LDM in posterior sampling. Since Stable Diffusion v-1.5 is trained with an image resolution of $512\times 512$, we apply the forward operator after upsampling inputs to $512\times 512$, run posterior sampling  at $512\times 512$, and then downsample images to the original $256\times 256$ resolution for a fair comparison with DPS. We observed a similar performance while applying the masking operator at $256\times 256$ and upscaling to $512\times 512$ before running PSLD. More implementation details are provided in Appendix~\ref{subsec-impl-det}.

\textbf{Metrics. } We use the commonly used Learned Perceptual Image Patch Similarity (LPIPS), Peak Signal-to-Noise Ratio (PSNR),  Structural Similarity Index  Metric (SSIM), and Fréchet Inception Distance\footnote{\url{https://github.com/mseitzer/pytorch-fid}} (FID) metrics for quantitative evaluation.

\textbf{Results.}
Figure~\ref{fig:comp-main} shows the inpainting results on out-of-distribution samples. This experiment was performed on commercial platforms that use (to the best of our knowledge) Stable diffusion and additional proprietary models. This evaluation was performed on models deployed in May 2023 and may change as  commercial providers improve their platforms.

\begin{figure}[!t]
     \centering
     \begin{subfigure}[b]{0.19\columnwidth}
         \centering
         \includegraphics[width=\linewidth]{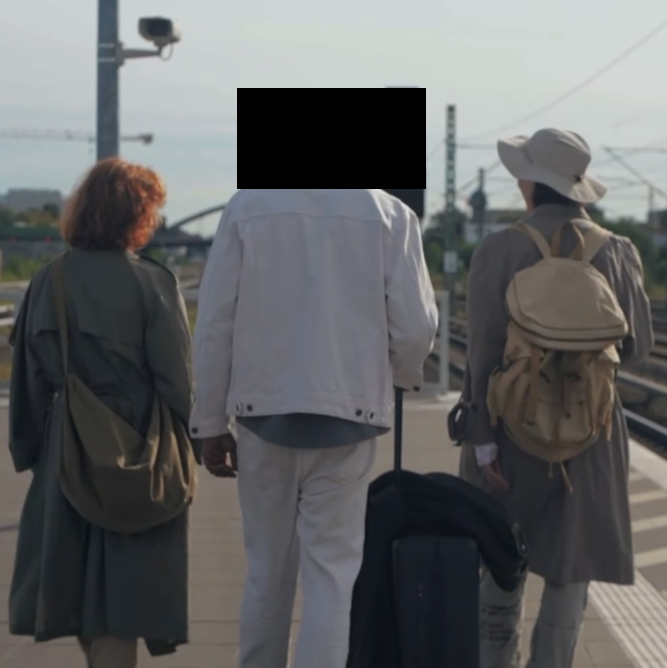}
     \end{subfigure}
      \begin{subfigure}[b]{0.19\columnwidth}
         \centering
         \includegraphics[width=\linewidth]{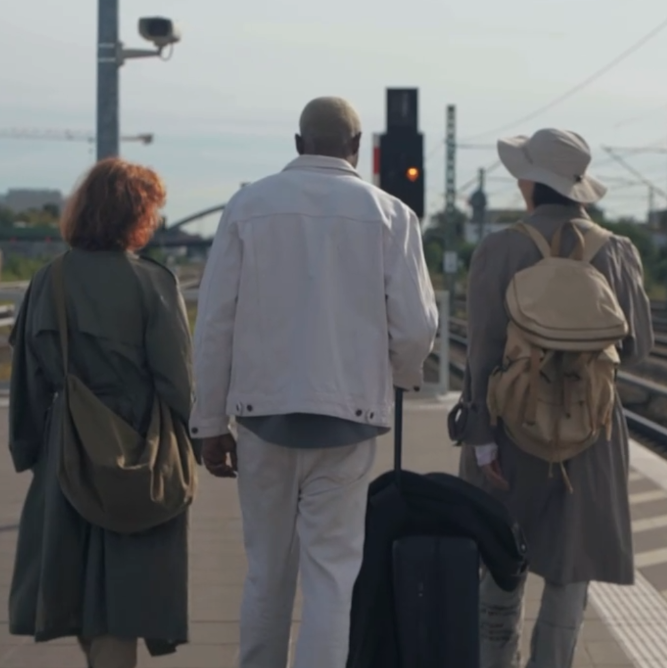}
     \end{subfigure}
     \begin{subfigure}[b]{0.19\columnwidth}
         \centering
         \includegraphics[width=\linewidth]{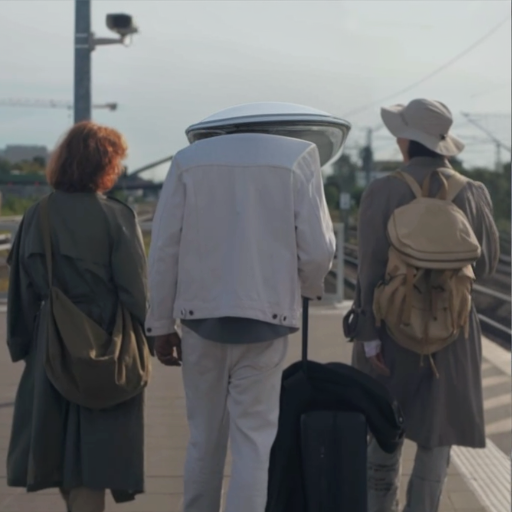}
     \end{subfigure}
      \begin{subfigure}[b]{0.19\columnwidth}
         \centering
         \includegraphics[width=\linewidth]{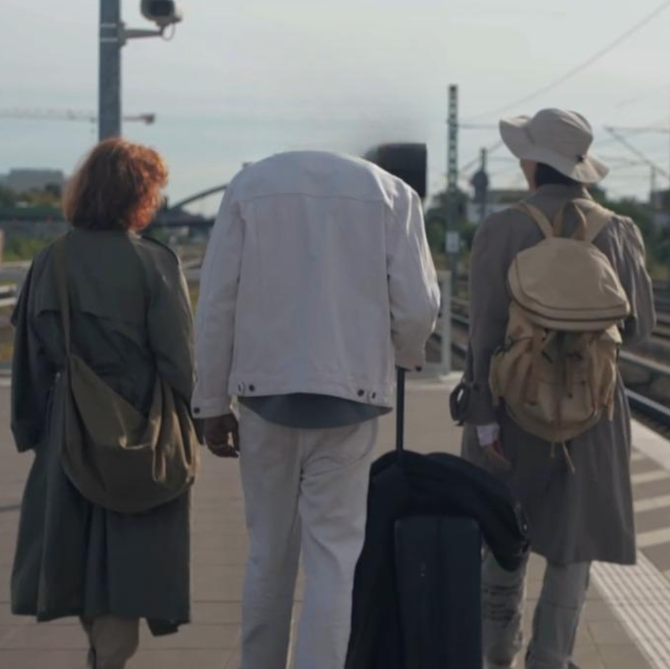}
     \end{subfigure}
       \begin{subfigure}[b]{0.19\columnwidth}
         \centering
         \includegraphics[width=\linewidth]{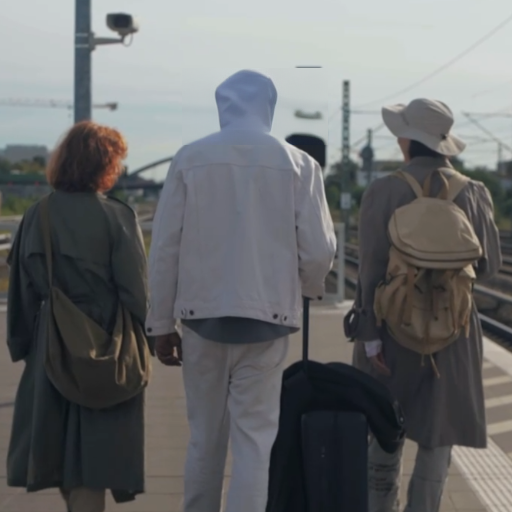}
     \end{subfigure}
     
       \begin{subfigure}[b]{0.19\columnwidth}
         \centering
         \includegraphics[width=\linewidth]{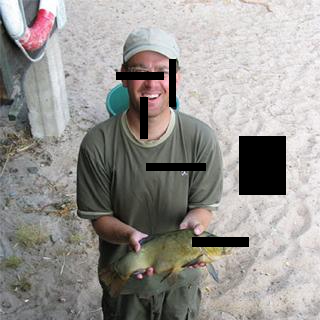}
     \end{subfigure}
      \begin{subfigure}[b]{0.19\columnwidth}
         \centering
         \includegraphics[width=\linewidth]{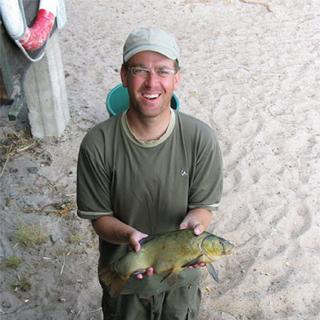}
     \end{subfigure}
     \begin{subfigure}[b]{0.19\columnwidth}
         \centering
         \includegraphics[width=\linewidth]{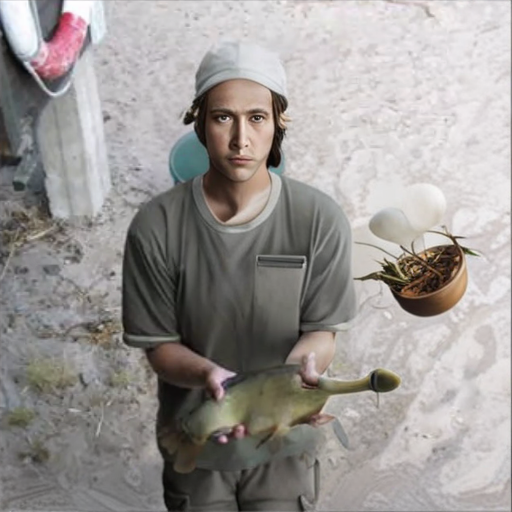}
     \end{subfigure}
      \begin{subfigure}[b]{0.19\columnwidth}
         \centering
         \includegraphics[width=\linewidth]{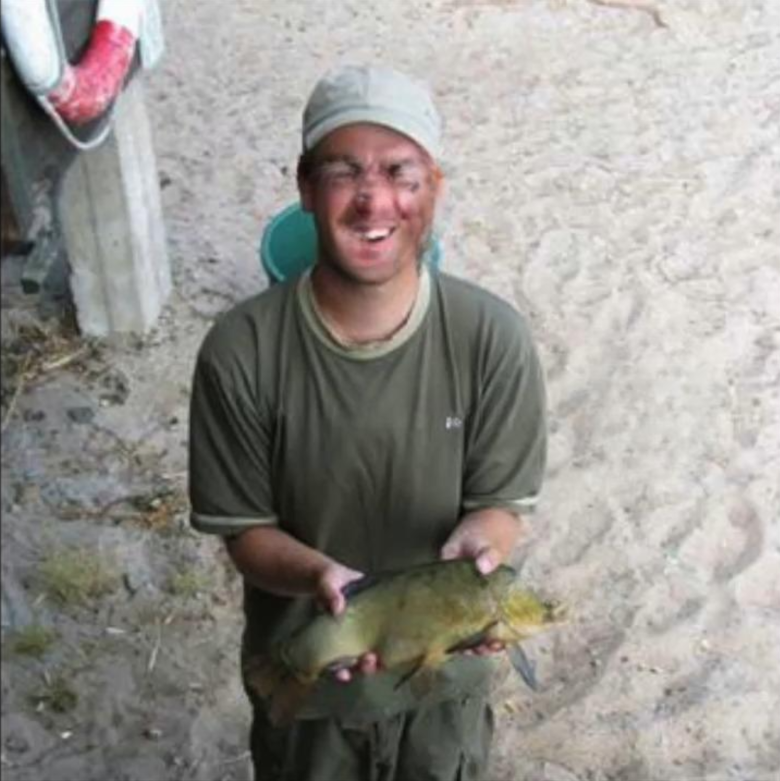}
     \end{subfigure}
       \begin{subfigure}[b]{0.19\columnwidth}
         \centering
         \includegraphics[width=\linewidth]{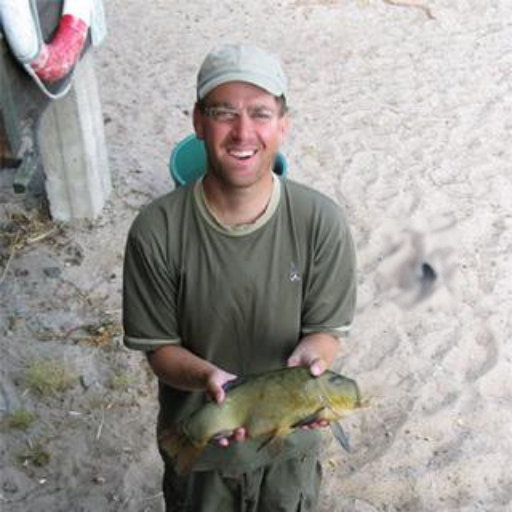}
     \end{subfigure}

    \begin{subfigure}[b]{0.19\columnwidth}
         \centering
         \includegraphics[width=\linewidth]{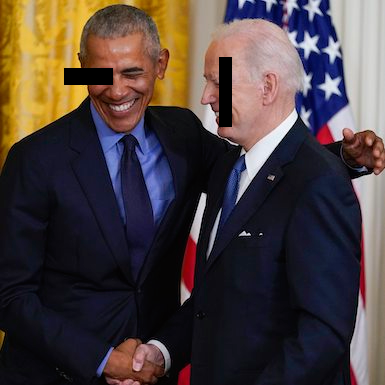}
         \caption{Input}
     \end{subfigure}
      \begin{subfigure}[b]{0.19\columnwidth}
         \centering
         \includegraphics[width=\linewidth]{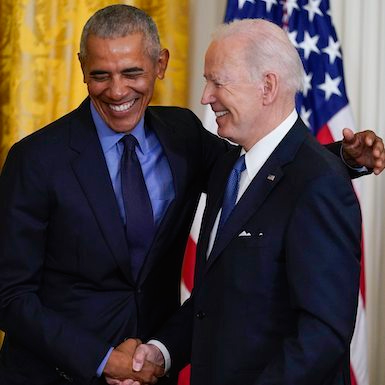}
         \caption{Groundtruth}
     \end{subfigure}
     \begin{subfigure}[b]{0.19\columnwidth}
         \centering
         \includegraphics[width=\linewidth]{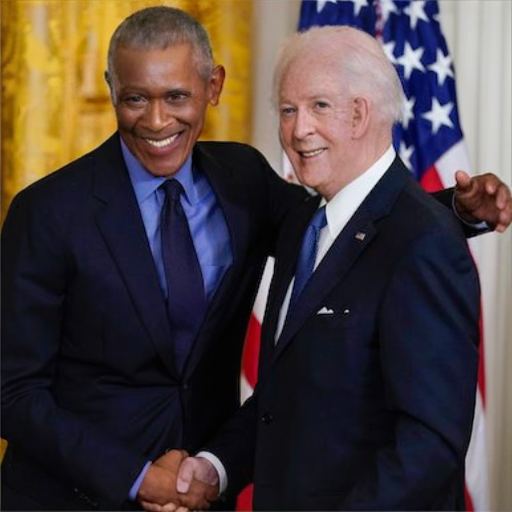}
         \caption{Comm. Serv. 1}
     \end{subfigure}
      \begin{subfigure}[b]{0.19\columnwidth}
         \centering
         \includegraphics[width=\linewidth]{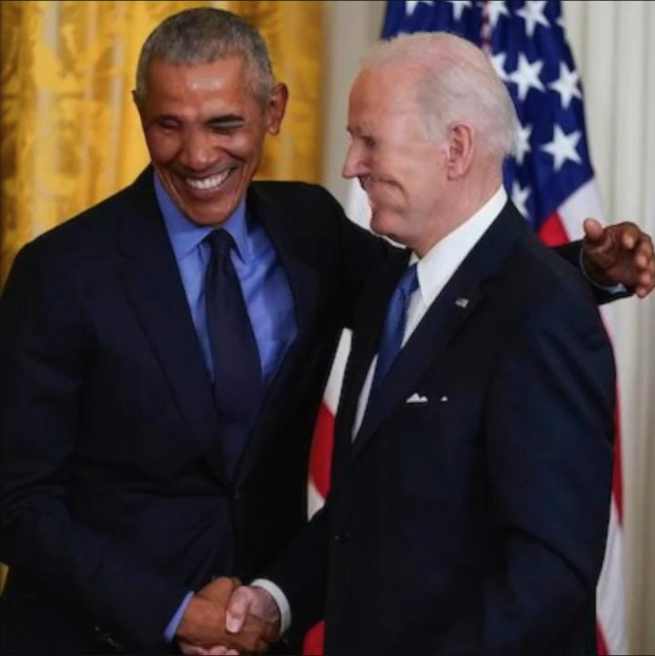}
    \caption{Comm. Serv. 2}
     \end{subfigure}
       \begin{subfigure}[b]{0.19\columnwidth}
         \centering
         \includegraphics[width=\linewidth]{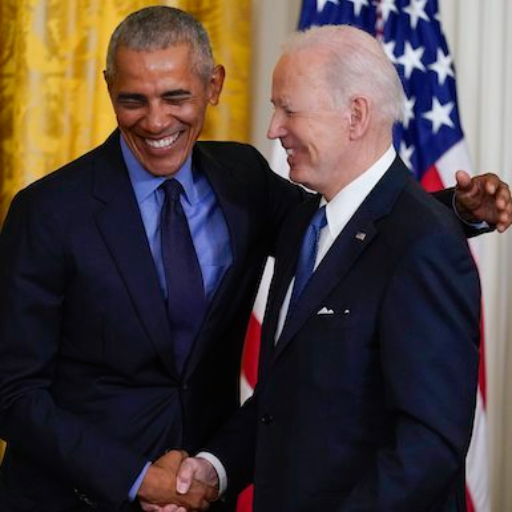}
         \caption{PSLD (Ours)}
     \end{subfigure}
     
     \caption{Inpainting results in general domain images from the web (see Appendix~\ref{sec-addn-exps} for image sources). Our model compared to state-of-art commercial inpainting services that leverage the same foundation model (Stable Diffusion v-1.5). 
     }
    \label{fig:comp-main}
\end{figure}

The qualitative advantage of PSLD is clearly demonstrated in Figures~\ref{fig:comp-main},~\ref{fig:comp-2},~\ref{fig:comp-4},~\ref{fig:comp-3} and~\ref{fig:comp-5}. In Figure~\ref{fig:dps-psld}, we compare PSLD and DPS in random inpainting task for varying percentage of dropped pixels. Quantitatively, PSLD outperforms DPS in commonly used metrics: LPIPS, PSNR, and SSIM. 

In our PSLD algorithm, we use Stable Diffusion v1.5 model and (zero-shot) test it on inverse problems. Table~\ref{tab:ffhq-sd-fid-lpips} compares the quantitative results of PSLD with related works on random inpainting, box inpainting, super-resolution, and Gaussian deblur tasks. PSLD significantly outperforms previous approaches on the relatively easier random inpainting task, and it is better or comparable on harder tasks. Table~\ref{tab:ffhq} draws a comparison between PSLD and the strongest baseline (among the compared methods) on out-of-distribution images. Table~\ref{tab:ffhq-sr-lpips} shows the super-resolution results using nearest-neighbor kernels from \cite{lugmayr2022repaint} on FFHQ 256 validation dataset. Observe that PSLD outperforms state-of-the-art methods across diverse tasks and standard evaluation metrics. 

 In Table~\ref{tab:ffhq-sd-finetuned-psnr-ssim}, we compare PSLD (using LDM-VQ-4) and DPS on random and box inpainting tasks with the same operating resolution ($256\times 256$) and training distributions (FFHQ 256). Although the LDM model exceeds DPS performance in box inpainting, it is comparable in random inpainting. As expected, using a more powerful pre-trained model such as Stable Diffusion is beneficial in reconstruction--see Table~\ref{tab:ffhq-sd-fid-lpips}. 
 This highlights the significance of our PSLD algorithm that has the provision to incorporate a powerful foundation model with no extra training costs for solving inverse problems. Importantly, PSLD uses latent-based diffusion, and thus it avoids the curse of ambient dimension (\textbf{Theorem~\ref{thm:ps-ldm-robust}}), while still achieving comparable results to the state-of-the-art method DPS~\cite{dps} that has been trained on the same dataset. Additional experimental evaluation is provided in Appendix~\ref{sec-addn-exps}.
\begin{table}[!t]
\small
  \caption{Quantitative inpainting results on FFHQ $256$ validation set \citep{ffhq,dps}. We use the \textit{latent diffusion} (LDM-VQ-4) trained on FFHQ $256$. Note that in this experiment PSLD and DPS use diffusion models trained on the same dataset. As shown, PSLD with LDM-VQ-4 as diffusion model outperforms DPS in box inpainting and has comparable performance in random inpainting. }
  \label{tab:ffhq-sd-finetuned-psnr-ssim}
  \centering
  \begin{tabular}{llllllll}
    \toprule
    {} & \multicolumn{3}{c}{Inpaint (random)}  & \multicolumn{3}{c}{Inpaint (box)}            \\
    \cmidrule(r){2-4} \cmidrule(r){5-7}   
    Method & PSNR ($\uparrow$)  & SSIM ($\uparrow$)  & LPIPS ($\downarrow$) & PSNR ($\uparrow$)  & SSIM ($\uparrow$)  & LPIPS ($\downarrow$) \\
    \midrule
    PSLD (Ours)                     & \textbf{30.31} & \textbf{0.851} &  \underline{0.221} & \textbf{24.22} & \textbf{0.819} & \textbf{0.158} \\
    DPS~\cite{dps}  &  \underline{29.49} &\underline{0.844} & \textbf{0.212} &  \underline{23.39} & \underline{0.798} &  \underline{0.214}  \\
    \bottomrule
  \end{tabular}
\end{table}
\begin{table}[!t]
\small
  \caption{Quantitative results of random inpainting and denoising on FFHQ $256$ \citep{ffhq,dps} using Stable Diffusion v-1.5. Note that DPS is trained on FFHQ $256$. The results show that our method PSLD generalizes  well to out-of-distribution samples even without finetuning. 
  }
  \label{tab:ffhq}
  \centering
  \begin{tabular}{llllllll}
    \toprule
    {} & \multicolumn{3}{c}{Random inpaint + denoise $\sigma = 0.00$ }  & \multicolumn{3}{c}{Random inpaint + denoise $\sigma = 0.05$ }            \\
    \cmidrule(r){2-4} \cmidrule(r){5-7}   
    Method & PSNR ($\uparrow$)  & SSIM ($\uparrow$)  & LPIPS ($\downarrow$) & PSNR ($\uparrow$)  & SSIM ($\uparrow$)  & LPIPS ($\downarrow$) \\
    \midrule
    PSLD (Ours)                     & \textbf{34.02} & \textbf{ 0.951} &  \textbf{0.083} & \textbf{33.71} & \textbf{0.943} & \textbf{0.096} \\
    DPS~\cite{dps}  & \underline{31.41} & \underline{0.884} & \underline{0.171} &  \underline{29.49} & \underline{0.844} &  \underline{0.212}  \\
    \bottomrule
  \end{tabular}
\end{table}

\begin{figure}[!t]
    \centering 
    \includegraphics[width=1.01\textwidth]{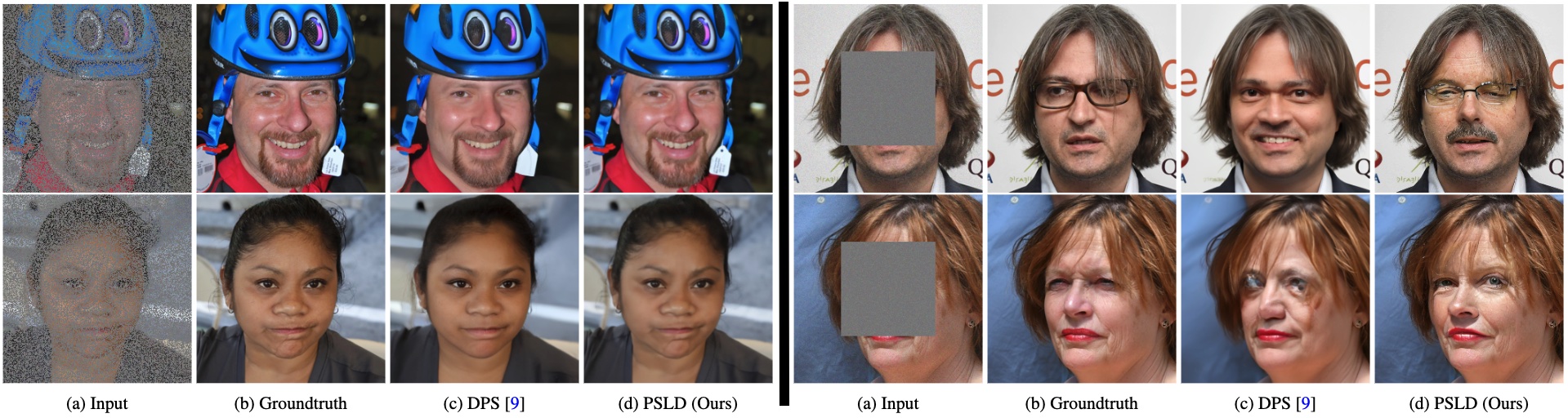}
    \caption{\small \textbf{Left panel:} Random Inpainting on images from FFHQ 256 \cite{ffhq} using PSLD with Stable Diffusion v-1.5. Notice the text in the top row and the facial expression in the bottom row. 
\textbf{ Right panel:}  Block ($128\times 128$) inpainting, using the LDM-VQ-4 model trained on FFHQ $256$ \citep{ffhq}. Notice the glasses in the top row and eyes in the bottom row.}
    \label{fig:comp-2}  
\end{figure}
%
\begin{figure}[!t]
     \centering
     \begin{subfigure}[b]{0.24\columnwidth}
         \centering
         \includegraphics[width=\linewidth]{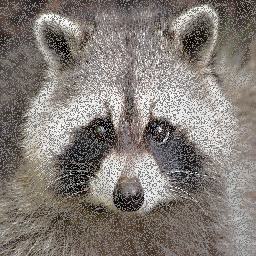}
     \end{subfigure}
     \begin{subfigure}[b]{0.24\columnwidth}
         \centering
         \includegraphics[width=\linewidth]{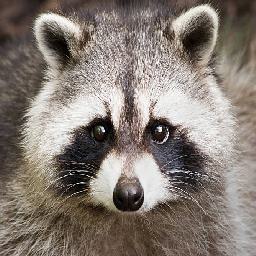}
     \end{subfigure}
     \begin{subfigure}[b]{0.24\columnwidth}
         \centering
         \includegraphics[width=\linewidth]{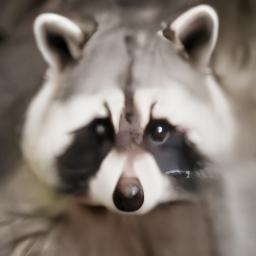}
     \end{subfigure}
     \begin{subfigure}[b]{0.24\columnwidth}
         \centering
         \includegraphics[width=\linewidth]{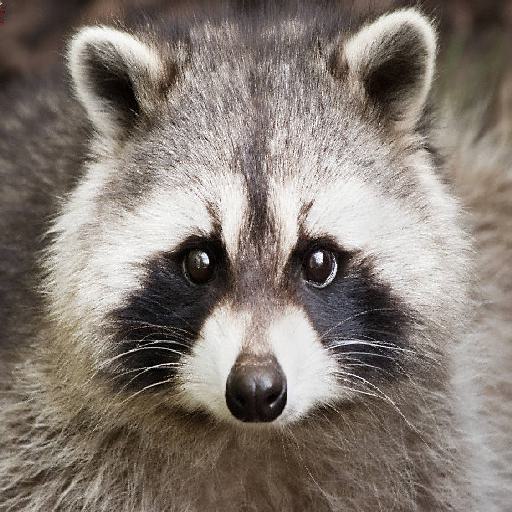}
    \end{subfigure}
     
     \begin{subfigure}[b]{0.24\columnwidth}
         \centering
         \includegraphics[width=\linewidth]{pics/random-inpaint-OOD/fisherman-input.png}
         \caption{Input}
     \end{subfigure}
     \begin{subfigure}[b]{0.24\columnwidth}
         \centering
         \includegraphics[width=\linewidth]{pics/random-inpaint-OOD/fisherman-label.png}
         \caption{Groundtruth}
     \end{subfigure}
     \begin{subfigure}[b]{0.24\columnwidth}
         \centering
         \includegraphics[width=\linewidth]{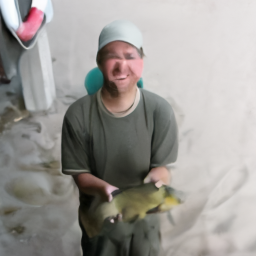}
         \caption{DPS~\cite{dps}}
     \end{subfigure}
     \begin{subfigure}[b]{0.24\columnwidth}
         \centering
         \includegraphics[width=\linewidth]{pics/random-inpaint-OOD/fisherman-psld.png}
         \caption{PSLD (Ours)}
     \end{subfigure}

     \caption{ Inpainting (random and box) results on out-of-distribution samples, $256 \times 256$ (see Appendix~\ref{sec-addn-exps} for image sources). We use PSLD with Stable Diffusion v-1.5 as generative foundation model.
     }
    \label{fig:comp-4}
\end{figure}

\begin{figure}[!t]
     \centering
     \begin{subfigure}[b]{0.32\columnwidth}
         \centering
         \includegraphics[width=\linewidth]{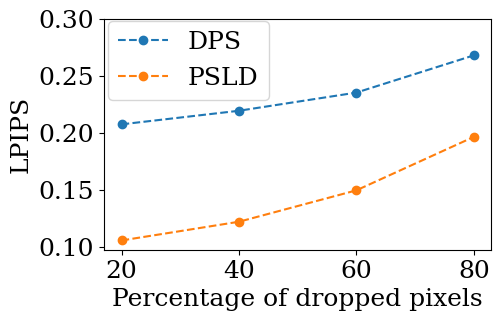}
     \end{subfigure}
     \begin{subfigure}[b]{0.32\columnwidth}
         \centering
         \includegraphics[width=\linewidth]{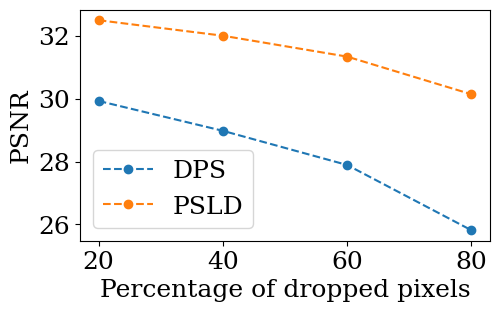}
     \end{subfigure}
     \begin{subfigure}[b]{0.32\columnwidth}
         \centering
         \includegraphics[width=\linewidth]{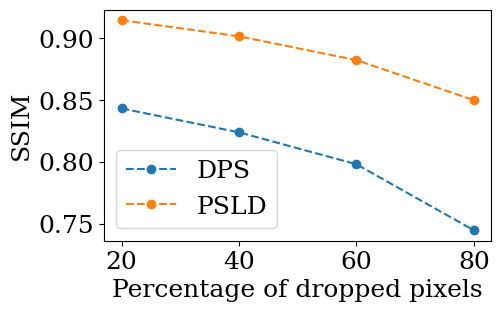}
     \end{subfigure}
     
     \caption{ Comparing DPS and PSLD performance in random inpainting on FFHQ 256 \citep{ffhq,dps}, as the percentage of masked pixels increases. PSLD with Stable Diffusion outperforms DPS. }
    \label{fig:dps-psld}
\end{figure}
\section{Conclusion}
\label{sec-conc}
In this paper, we leverage latent diffusion models to solve general linear inverse problems. While previously proposed approaches only apply to pixel-space diffusion models, our algorithm allows us to use the image prior learned by latent-based foundation generative models. We provide a principled analysis of our algorithm in a linear two-step diffusion setting, and use insights from this analysis to design a modified objective (goodness and gluing). This leads to our algorithm -- Posterior Sampling with Latent Diffusion (PSLD) -- that experimentally outperforms state-of-art baselines on a wide variety of tasks including random inpainting, block inpainting, denoising, destriping, and super-resolution.

\noindent \textbf{Limitations.} Our evaluation is based on Stable Diffusion which was trained on the LAION dataset. Biases in this dataset and foundation model will be implicitly affecting our algorithm. Our method can work with any LDM and we expect new foundation models trained on better datasets like~\cite{gadre2023datacomp} to mitigate these issues. Second, we have not explored how to use latent-based foundation models to solve non-linear inverse problems. Our method builds on the DPS approximation (which performs well on non-linear inverse problems), and hence  we believe our method can also be similarly extended.

\section*{Acknowledgements}
 This research has been supported by NSF Grants 2019844, 2112471, AF 1901292, CNS 2148141, Tripods CCF 1934932, 
 the Texas Advanced Computing Center (TACC) and research gifts by Western Digital, Wireless Networking and Communications Group (WNCG)
Industrial Affiliates Program, UT Austin Machine Learning Lab (MLL), Cisco and the Stanly P. Finch Centennial Professorship in Engineering. Litu Rout has been supported by the Ju-Nam and Pearl Chew Endowed Presidential Fellowship in Engineering. Giannis Daras has been supported by the Onassis Fellowship (Scholarship ID: F ZS 012-1/2022-2023), the Bodossaki Fellowship and the Leventis Fellowship. We thank the HuggingFace team for providing us GPU support for the demo of our work.

\printbibliography

@inproceedings{
dps,
title={Diffusion Posterior Sampling for General Noisy Inverse Problems},
author={Hyungjin Chung and Jeongsol Kim and Michael Thompson Mccann and Marc Louis Klasky and Jong Chul Ye},
booktitle={The Eleventh International Conference on Learning Representations },
year={2023},
url={https://openreview.net/forum?id=OnD9zGAGT0k}
}

@article{gadre2023datacomp,
  title={DataComp: In search of the next generation of multimodal datasets},
  author={Gadre, Samir Yitzhak and Ilharco, Gabriel and Fang, Alex and Hayase, Jonathan and Smyrnis, Georgios and Nguyen, Thao and Marten, Ryan and Wortsman, Mitchell and Ghosh, Dhruba and Zhang, Jieyu and others},
  journal={arXiv preprint arXiv:2304.14108},
  year={2023}
}

@inproceedings{ddrm,
  title={Denoising Diffusion Restoration Models},
  author={Kawar, Bahjat and Elad, Michael and Ermon, Stefano and Song, Jiaming},
  booktitle={Advances in Neural Information Processing Systems}
}

@article{chan2016plug,
  title={Plug-and-play ADMM for image restoration: Fixed-point convergence and applications},
  author={Chan, Stanley H and Wang, Xiran and Elgendy, Omar A},
  journal={IEEE Transactions on Computational Imaging},
  volume={3},
  number={1},
  pages={84--98},
  year={2016},
  publisher={IEEE}
}

@inproceedings{songscore,
  title={Score-Based Generative Modeling through Stochastic Differential Equations},
  author={Song, Yang and Sohl-Dickstein, Jascha and Kingma, Diederik P and Kumar, Abhishek and Ermon, Stefano and Poole, Ben},
  booktitle={International Conference on Learning Representations}
}

@inproceedings{
chung2022improving,
title={Improving Diffusion Models for Inverse Problems using Manifold Constraints},
author={Hyungjin Chung and Byeongsu Sim and Dohoon Ryu and Jong Chul Ye},
booktitle={Advances in Neural Information Processing Systems},
editor={Alice H. Oh and Alekh Agarwal and Danielle Belgrave and Kyunghyun Cho},
year={2022},
url={https://openreview.net/forum?id=nJJjv0JDJju}
}

@article{rout2023theoretical,
  title={A Theoretical Justification for Image Inpainting using Denoising Diffusion Probabilistic Models},
  author={Rout, Litu and Parulekar, Advait and Caramanis, Constantine and Shakkottai, Sanjay},
  journal={arXiv preprint arXiv:2302.01217},
  year={2023}
}

@inproceedings{ldm,
  title={High-resolution image synthesis with latent diffusion models},
  author={Rombach, Robin and Blattmann, Andreas and Lorenz, Dominik and Esser, Patrick and Ommer, Bj{\"o}rn},
  booktitle={Proceedings of the IEEE/CVF Conference on Computer Vision and Pattern Recognition},
  pages={10684--10695},
  year={2022}
}

@article{anderson,
  title={Reverse-time diffusion equation models},
  author={Anderson, Brian D.O.},
  journal={Stochastic Processes and their Applications},
  volume={12},
  number={3},
  pages={313--326},
  year={1982},
  publisher={Elsevier}
}

@article{ncsn,
  title={Generative modeling by estimating gradients of the data distribution},
  author={Song, Yang and Ermon, Stefano},
  journal={Advances in Neural Information Processing Systems},
  volume={32},
  year={2019}
}

@article{ncsnv2,
  title={Improved techniques for training score-based generative models},
  author={Song, Yang and Ermon, Stefano},
  journal={Advances in neural information processing systems},
  volume={33},
  pages={12438--12448},
  year={2020}
}

@inproceedings{
ncsnv3,
title={Score-Based Generative Modeling through Stochastic Differential Equations},
author={Yang Song and Jascha Sohl-Dickstein and Diederik P Kingma and Abhishek Kumar and Stefano Ermon and Ben Poole},
booktitle={International Conference on Learning Representations},
year={2021},
}

@article{ddpm,
  title={Denoising diffusion probabilistic models},
  author={Ho, Jonathan and Jain, Ajay and Abbeel, Pieter},
  journal={Advances in Neural Information Processing Systems},
  volume={33},
  pages={6840--6851},
  year={2020}
}

@inproceedings{ddpm++,
  title={Soft truncation: A universal training technique of score-based diffusion model for high precision score estimation},
  author={Kim, Dongjun and Shin, Seungjae and Song, Kyungwoo and Kang, Wanmo and Moon, Il-Chul},
  booktitle={International Conference on Machine Learning},
  pages={11201--11228},
  year={2022},
  organization={PMLR}
}

@article{dhariwal2021diffusion,
  title={Diffusion models beat gans on image synthesis},
  author={Dhariwal, Prafulla and Nichol, Alexander},
  journal={Advances in Neural Information Processing Systems},
  volume={34},
  pages={8780--8794},
  year={2021}
}

@misc{laion5b,
      title={LAION-5B: An open large-scale dataset for training next generation image-text models}, 
      author={Christoph Schuhmann and Romain Beaumont and Richard Vencu and Cade Gordon and Ross Wightman and Mehdi Cherti and Theo Coombes and Aarush Katta and Clayton Mullis and Mitchell Wortsman and Patrick Schramowski and Srivatsa Kundurthy and Katherine Crowson and Ludwig Schmidt and Robert Kaczmarczyk and Jenia Jitsev},
      year={2022},
      eprint={2210.08402},
      archivePrefix={arXiv},
      primaryClass={cs.CV}
}

@misc{laion400m,
      title={LAION-400M: Open Dataset of CLIP-Filtered 400 Million Image-Text Pairs}, 
      author={Christoph Schuhmann and Richard Vencu and Romain Beaumont and Robert Kaczmarczyk and Clayton Mullis and Aarush Katta and Theo Coombes and Jenia Jitsev and Aran Komatsuzaki},
      year={2021},
      eprint={2111.02114},
      archivePrefix={arXiv},
      primaryClass={cs.CV}
}

@article{chen2023score,
  title={Score Approximation, Estimation and Distribution Recovery of Diffusion Models on Low-Dimensional Data},
  author={Chen, Minshuo and Huang, Kaixuan and Zhao, Tuo and Wang, Mengdi},
  journal={arXiv preprint arXiv:2302.07194},
  year={2023}
}

@article{dsm,
  title={A connection between score matching and denoising autoencoders},
  author={Vincent, Pascal},
  journal={Neural computation},
  volume={23},
  number={7},
  pages={1661--1674},
  year={2011},
  publisher={MIT Press}
}

@article{hyvarinen2005estimation,
  title={Estimation of non-normalized statistical models by score matching.},
  author={Hyv{\"a}rinen, Aapo and Dayan, Peter},
  journal={Journal of Machine Learning Research},
  volume={6},
  number={4},
  year={2005}
}

@article{sgilo,
  title={Score-guided intermediate layer optimization: Fast langevin mixing for inverse problem},
  author={Daras, Giannis and Dagan, Yuval and Dimakis, Alexandros G and Daskalakis, Constantinos},
  journal={arXiv preprint arXiv:2206.09104},
  year={2022}
}

@inproceedings{arvinte2022single,
  title={Single-Shot Adaptation using Score-Based Models for MRI Reconstruction},
  author={Arvinte, Marius and Jalal, Ajil and Daras, Giannis and Price, Eric and Dimakis, Alex and Tamir, Jonathan I},
  booktitle={International Society for Magnetic Resonance in Medicine, Annual Meeting},
  year={2022}
}

@article{mri_paper,
  title={Robust compressed sensing mri with deep generative priors},
  author={Jalal, Ajil and Arvinte, Marius and Daras, Giannis and Price, Eric and Dimakis, Alexandros G and Tamir, Jon},
  journal={Advances in Neural Information Processing Systems},
  volume={34},
  pages={14938--14954},
  year={2021}
}

@article{choi2021ilvr,
  title={Ilvr: Conditioning method for denoising diffusion probabilistic models},
  author={Choi, Jooyoung and Kim, Sungwon and Jeong, Yonghyun and Gwon, Youngjune and Yoon, Sungroh},
  journal={arXiv preprint arXiv:2108.02938},
  year={2021}
}

@inproceedings{song2023pseudoinverse,
  title={Pseudoinverse-guided diffusion models for inverse problems},
  author={Song, Jiaming and Vahdat, Arash and Mardani, Morteza and Kautz, Jan},
  booktitle={International Conference on Learning Representations},
  year={2023}
}

@article{indi,
  title={Inversion by direct iteration: An alternative to denoising diffusion for image restoration},
  author={Delbracio, Mauricio and Milanfar, Peyman},
  journal={arXiv preprint arXiv:2303.11435},
  year={2023}
}

@article{daras2022soft,
  title={Soft diffusion: Score matching for general corruptions},
  author={Daras, Giannis and Delbracio, Mauricio and Talebi, Hossein and Dimakis, Alexandros G and Milanfar, Peyman},
  journal={arXiv preprint arXiv:2209.05442},
  year={2022}
}

@inproceedings{bora2017compressed,
  title={Compressed sensing using generative models},
  author={Bora, Ashish and Jalal, Ajil and Price, Eric and Dimakis, Alexandros G},
  booktitle={International Conference on Machine Learning},
  pages={537--546},
  year={2017},
  organization={PMLR}
}

@article{ongie2020deep,
  title={Deep learning techniques for inverse problems in imaging},
  author={Ongie, Gregory and Jalal, Ajil and Metzler, Christopher A and Baraniuk, Richard G and Dimakis, Alexandros G and Willett, Rebecca},
  journal={IEEE Journal on Selected Areas in Information Theory},
  volume={1},
  number={1},
  pages={39--56},
  year={2020},
  publisher={IEEE}
}

@inproceedings{pathak2016context,
  title={Context encoders: Feature learning by inpainting},
  author={Pathak, Deepak and Krahenbuhl, Philipp and Donahue, Jeff and Darrell, Trevor and Efros, Alexei A},
  booktitle={Proceedings of the IEEE conference on computer vision and pattern recognition},
  pages={2536--2544},
  year={2016}
}

@article{richardson2020encoding,
  title={Encoding in Style: a StyleGAN Encoder for Image-to-Image Translation},
  author={Richardson, Elad and Alaluf, Yuval and Patashnik, Or and Nitzan, Yotam and Azar, Yaniv and Shapiro, Stav and Cohen-Or, Daniel},
  journal={arXiv preprint arXiv:2008.00951},
  year={2020}
}

@article{Yu_2019,
   title={Free-Form Image Inpainting With Gated Convolution},
   ISBN={9781728148038},
   url={http://dx.doi.org/10.1109/ICCV.2019.00457},
   DOI={10.1109/iccv.2019.00457},
   journal={2019 IEEE/CVF International Conference on Computer Vision (ICCV)},
   publisher={IEEE},
   author={Yu, Jiahui and Lin, Zhe and Yang, Jimei and Shen, Xiaohui and Lu, Xin and Huang, Thomas},
   year={2019},
   month={Oct}
}

@article{Liu_2019,
   title={Coherent Semantic Attention for Image Inpainting},
   ISBN={9781728148038},
   url={http://dx.doi.org/10.1109/ICCV.2019.00427},
   DOI={10.1109/iccv.2019.00427},
   journal={2019 IEEE/CVF International Conference on Computer Vision (ICCV)},
   publisher={IEEE},
   author={Liu, Hongyu and Jiang, Bin and Xiao, Yi and Yang, Chao},
   year={2019},
   month={Oct}
}

@article{bansal2022cold,
  title={{Cold Diffusion: Inverting arbitrary image transforms without noise}},
  author={Bansal, Arpit and Borgnia, Eitan and Chu, Hong-Min and Li, Jie S and Kazemi, Hamid and Huang, Furong and Goldblum, Micah and Geiping, Jonas and Goldstein, Tom},
  journal={arXiv preprint arXiv:2208.09392},
  year={2022}
}

@article{chen2022sampling,
  title={Sampling is as easy as learning the score: theory for diffusion models with minimal data assumptions},
  author={Chen, Sitan and Chewi, Sinho and Li, Jerry and Li, Yuanzhi and Salim, Adil and Zhang, Anru R},
  journal={arXiv preprint arXiv:2209.11215},
  year={2022}
}

@article{chen2023restoration,
  title={Restoration-Degradation Beyond Linear Diffusions: A Non-Asymptotic Analysis For DDIM-Type Samplers},
  author={Chen, Sitan and Daras, Giannis and Dimakis, Alexandros G},
  journal={arXiv preprint arXiv:2303.03384},
  year={2023}
}

@InProceedings{ajil_bias,
  title = 	 {Fairness for Image Generation with Uncertain Sensitive Attributes},
  author =       {Jalal, Ajil and Karmalkar, Sushrut and Hoffmann, Jessica and Dimakis, Alex and Price, Eric},
  booktitle = 	 {Proceedings of the 38th International Conference on Machine Learning},
  pages = 	 {4721--4732},
  year = 	 {2021},
  editor = 	 {Meila, Marina and Zhang, Tong},
  volume = 	 {139},
  series = 	 {Proceedings of Machine Learning Research},
  month = 	 {18--24 Jul},
  publisher =    {PMLR},
  pdf = 	 {http://proceedings.mlr.press/v139/jalal21b/jalal21b.pdf},
  url = 	 {https://proceedings.mlr.press/v139/jalal21b.html},
  abstract = 	 {This work tackles the issue of fairness in the context of generative procedures, such as image super-resolution, which entail different definitions from the standard classification setting. Moreover, while traditional group fairness definitions are typically defined with respect to specified protected groups – camouflaging the fact that these groupings are artificial and carry historical and political motivations – we emphasize that there are no ground truth identities. For instance, should South and East Asians be viewed as a single group or separate groups? Should we consider one race as a whole or further split by gender? Choosing which groups are valid and who belongs in them is an impossible dilemma and being “fair” with respect to Asians may require being “unfair” with respect to South Asians. This motivates the introduction of definitions that allow algorithms to be \emph{oblivious} to the relevant groupings. We define several intuitive notions of group fairness and study their incompatibilities and trade-offs. We show that the natural extension of demographic parity is strongly dependent on the grouping, and \emph{impossible} to achieve obliviously. On the other hand, the conceptually new definition we introduce, Conditional Proportional Representation, can be achieved obliviously through Posterior Sampling. Our experiments validate our theoretical results and achieve fair image reconstruction using state-of-the-art generative models.}
}

@inproceedings{menon2020pulse,
  title={Pulse: Self-supervised photo upsampling via latent space exploration of generative models},
  author={Menon, Sachit and Damian, Alexandru and Hu, Shijia and Ravi, Nikhil and Rudin, Cynthia},
  booktitle={Proceedings of the ieee/cvf conference on computer vision and pattern recognition},
  pages={2437--2445},
  year={2020}
}

@article{ajil_posterior_sampling,
  title={Instance-optimal compressed sensing via posterior sampling},
  author={Jalal, Ajil and Karmalkar, Sushrut and Dimakis, Alexandros G and Price, Eric},
  journal={arXiv preprint arXiv:2106.11438},
  year={2021}
}

@inproceedings{lugmayr2022repaint,
  title={Repaint: Inpainting using denoising diffusion probabilistic models},
  author={Lugmayr, Andreas and Danelljan, Martin and Romero, Andres and Yu, Fisher and Timofte, Radu and Van Gool, Luc},
  booktitle={Proceedings of the IEEE/CVF Conference on Computer Vision and Pattern Recognition},
  pages={11461--11471},
  year={2022}
}

@inproceedings{imagenet,
  title={Imagenet: A large-scale hierarchical image database},
  author={Deng, Jia and Dong, Wei and Socher, Richard and Li, Li-Jia and Li, Kai and Fei-Fei, Li},
  booktitle={2009 IEEE conference on computer vision and pattern recognition},
  pages={248--255},
  year={2009},
  organization={Ieee}
}

@article{chung2023direct,
  title={Direct Diffusion Bridge using Data Consistency for Inverse Problems},
  author={Chung, Hyungjin and Kim, Jeongsol and Ye, Jong Chul},
  journal={arXiv preprint arXiv:2305.19809},
  year={2023}
}

@article{kawar2023gsure,
  title={GSURE-Based Diffusion Model Training with Corrupted Data},
  author={Kawar, Bahjat and Elata, Noam and Michaeli, Tomer and Elad, Michael},
  journal={arXiv preprint arXiv:2305.13128},
  year={2023}
}

@inproceedings{venkatakrishnan2013plug,
  title={Plug-and-play priors for model based reconstruction},
  author={Venkatakrishnan, Singanallur V and Bouman, Charles A and Wohlberg, Brendt},
  booktitle={2013 IEEE Global Conference on Signal and Information Processing},
  pages={945--948},
  year={2013},
  organization={IEEE}
}

@article{romano2017little,
  title={The little engine that could: Regularization by denoising (RED)},
  author={Romano, Yaniv and Elad, Michael and Milanfar, Peyman},
  journal={SIAM Journal on Imaging Sciences},
  volume={10},
  number={4},
  pages={1804--1844},
  year={2017},
  publisher={SIAM}
}

@inproceedings{mataev2019deepred,
  title={DeepRED: Deep image prior powered by RED},
  author={Mataev, Gary and Milanfar, Peyman and Elad, Michael},
  booktitle={Proceedings of the IEEE/CVF International Conference on Computer Vision Workshops},
  pages={0--0},
  year={2019}
}

@inproceedings{blattmann2023align,
  title={Align your latents: High-resolution video synthesis with latent diffusion models},
  author={Blattmann, Andreas and Rombach, Robin and Ling, Huan and Dockhorn, Tim and Kim, Seung Wook and Fidler, Sanja and Kreis, Karsten},
  booktitle={Proceedings of the IEEE/CVF Conference on Computer Vision and Pattern Recognition},
  pages={22563--22575},
  year={2023}
}

@inproceedings{pinaya2022brain,
  title={Brain imaging generation with latent diffusion models},
  author={Pinaya, Walter HL and Tudosiu, Petru-Daniel and Dafflon, Jessica and Da Costa, Pedro F and Fernandez, Virginia and Nachev, Parashkev and Ourselin, Sebastien and Cardoso, M Jorge},
  booktitle={Deep Generative Models: Second MICCAI Workshop, DGM4MICCAI 2022, Held in Conjunction with MICCAI 2022, Singapore, September 22, 2022, Proceedings},
  pages={117--126},
  year={2022},
  organization={Springer}
}

@inproceedings{takagi2023high,
  title={High-resolution image reconstruction with latent diffusion models from human brain activity},
  author={Takagi, Yu and Nishimoto, Shinji},
  booktitle={Proceedings of the IEEE/CVF Conference on Computer Vision and Pattern Recognition},
  pages={14453--14463},
  year={2023}
}

@article{liu2023audioldm,
  title={Audioldm: Text-to-audio generation with latent diffusion models},
  author={Liu, Haohe and Chen, Zehua and Yuan, Yi and Mei, Xinhao and Liu, Xubo and Mandic, Danilo and Wang, Wenwu and Plumbley, Mark D},
  journal={arXiv preprint arXiv:2301.12503},
  year={2023}
}

@inproceedings{wang2023imagen,
  title={Imagen editor and editbench: Advancing and evaluating text-guided image inpainting},
  author={Wang, Su and Saharia, Chitwan and Montgomery, Ceslee and Pont-Tuset, Jordi and Noy, Shai and Pellegrini, Stefano and Onoe, Yasumasa and Laszlo, Sarah and Fleet, David J and Soricut, Radu and others},
  booktitle={Proceedings of the IEEE/CVF Conference on Computer Vision and Pattern Recognition},
  pages={18359--18369},
  year={2023}
}

@inproceedings{ffhq,
  title={A style-based generator architecture for generative adversarial networks},
  author={Karras, Tero and Laine, Samuli and Aila, Timo},
  booktitle={Proceedings of the IEEE/CVF conference on computer vision and pattern recognition},
  pages={4401--4410},
  year={2019}
}

\newpage
\appendix

\section{Technical Proofs }
\label{sec-proofs}
\textbf{Notation and Measurement Matrix.} We elaborate on the structure of the measurement matrix $\gA \in \mathbb{R}^{l\times d}.$ In our setting, we are considering linear inverse problems. Thus, this matrix is a pixel selector and  consists of a subset of the rows from the $d \times d$ identity matrix (the rows that are present correspond to the indices of the selected pixels from the image $\overrightarrow{\vx_0} \in \mathbb{R}^d$). Given this structure, it immediately follows that $\gA^T \gA$ is a $d\times d$ matrix that has the interpretation of a pixel selection {\em mask}. Specifically, $\gA^T\gA$ is a $d\times d$ diagonal matrix $\mD(\vm)$, where the elements of $\vm$ are set to 1 where data (pixel) is observed and 0 where data (pixel) is masked. Without the loss of generality, we suppose that the first $k$ coordinates are known.

The rest of this section contains proofs of all the theorems and propositions presented in the main body of the paper. For clarity, we restate the theorems more formally with precise mathematical details.

\subsection{Proof of \textbf{Theorem~\ref{thm:ps-adm}}}
\label{prf-ps-adm}

\begin{theorem}[Posterior Sampling using Diffusion in Pixel Space] Suppose \textbf{Assumption~\ref{assm:ortho}} and \textbf{Assumption~\ref{assm:inv}} hold. Let us denote by $\bm\sigma = \{\sigma_j\}_{j=1}^k$ the singular values of $(\gA\gS)^T(\gA\gS)$, i.e. $(\gA\gS)^T(\gA\gS) = \mU \Sigma \mV^T \coloneqq \mU \mD(\bm\sigma) \mV^T, \mU \in \R^{k\times k}, \mV \in \R^{k\times k}$ and 
    \begin{align*}
        \vtheta^* = \arg \min_{\vtheta} \E_{\rxz, \overrightarrow{\vepsilon}}\left[ \left \| \tilde{\mu}_1\left ( \overrightarrow{\vx_1}(\rxz, \overrightarrow{\vepsilon}), \overrightarrow{\vx_0} \right ) - \mu_\theta\left ( \overrightarrow{\vx_1}\left(\rxz, \overrightarrow{\vepsilon}\right) \right ) \right \|^2 \right].
    \end{align*}
    Suppose $\rxz \sim p(\rxz)$.  Given measurements $y = \gA \rxz$ and a fixed variance $\beta \in (0,1) $, there exists a matrix step size\footnote{We use the term `step size' in a more general way than is normally used. In this case, the step size is a `pre-conditioning' positive definite matrix, whose eigenvalue magnitudes correspond to the scalar step sizes per coordinate along an appropriately rotated basis. This general form is needed and with carefully selected (unique) eigenvalues; otherwise the DPS algorithm fails to converge to the groundtruth sample. We will later see that for our PSLD Algorithm in Theorem~\ref{thm:ps-ldm-robust}, we can revert to the commonly used notion of step size (a single scalar), as any finite step size (including a single scalar common across all coordinates) suffices for proving recovery.} $\bm{\zeta} = (1/2)(\gS\mU) \mD(\bm{\zeta}_i)(\gS\mU)^T, \bm{\zeta}_i = \{\zeta_i^j = 1/\sigma_j\}^k_{j=1}$ for all the coordinates of $\rxz$ such that \textbf{Algorithm~\ref{alg:dps_gauss}} samples from the true posterior $p(\rxz|y)$ and exactly recovers the groundtruth sample, i.e., $\lxz = \rxz$.  
\end{theorem}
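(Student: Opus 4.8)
The plan is to collapse the reverse process to a single measurement-guided gradient step and show this step lands exactly on the groundtruth. Because the diffusion has only two steps (so the loop runs once, at $i=1$) and the terminal noise is dropped, \textbf{Algorithm~\ref{alg:dps_gauss}} reduces to $\lxz = \hat{\vx}_0 - \bm\zeta\,\nabla_{\lxo}\|\vy - \gA\hat{\vx}_0\|_2^2$, where by \textbf{Theorem~\ref{thm:gen-adm}} the (normalized) denoised estimate is $\hat{\vx}_0 = \gS\gS^T\lxo$. Write $\mP \coloneqq \gS\gS^T$ for the orthogonal projector onto $\mathrm{col}(\gS)$; this is a projector by \textbf{Assumption~\ref{assm:ortho}} ($\gS^T\gS=\mI_l$). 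I would first record two structural facts: (i) both $\hat{\vx}_0=\mP\lxo$ and $\rxz=\gS\overrightarrow{\vw_0}$ lie in $\mathrm{col}(\gS)$; and (ii) since $\vy=\gA\rxz$ is noiseless and $(\gA\gS)^T(\gA\gS)\succ\0$ (\textbf{Assumption~\ref{assm:inv}}) makes $\gA\gS$ injective, $\vy$ determines $\overrightarrow{\vw_0}$ and hence $\rxz$ uniquely. Thus $p(\rxz\mid\vy)$ is a point mass at $\rxz$, and establishing $\lxz=\rxz$ is the same as sampling from the true posterior.

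Next I would evaluate the guidance gradient in closed form. Using $\mP\rxz=\rxz$ and $\gA\rxz=\vy$, the residual factors as $\vy-\gA\hat{\vx}_0=\gA(\rxz-\hat{\vx}_0)$, and differentiating through $\hat{\vx}_0=\mP\lxo$ (the Jacobian $\mP$ is precisely what keeps the update inside $\mathrm{col}(\gS)$) gives $\nabla_{\lxo}\|\vy-\gA\hat{\vx}_0\|_2^2 = -2\mP\gA^T\gA(\rxz-\hat{\vx}_0)$. Substituting into the update and subtracting $\rxz$ yields the error identity
\begin{align*}
\lxz - \rxz = \left(\mI_d - 2\bm\zeta\,\mP\gA^T\gA\right)(\hat{\vx}_0 - \rxz).
\end{align*}
Since $\hat{\vx}_0-\rxz\in\mathrm{col}(\gS)$, it now suffices to show that $2\bm\zeta\,\mP\gA^T\gA$ restricts to the identity on $\mathrm{col}(\gS)$.

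The crux is the choice of matrix step size. I would take the symmetric eigendecomposition $(\gA\gS)^T(\gA\gS)=\mU\mD(\bm\sigma)\mU^T$ (symmetric positive definite by \textbf{Assumption~\ref{assm:inv}}, so every $\sigma_j>0$ and $\mU$ is orthogonal) and set $\bm\zeta=\tfrac12(\gS\mU)\mD(\bm\sigma)^{-1}(\gS\mU)^T$, i.e.\ eigenvalues $1/(2\sigma_j)$ in the rotated basis. Then for any $\vu\in\R^l$, using $\gS^T\gS=\mI_l$ and $\mU^T\mU=\mI_l$,
\begin{align*}
2\bm\zeta\,\mP\gA^T\gA\,\gS\vu = \gS\mU\mD(\bm\sigma)^{-1}\mU^T(\gA\gS)^T(\gA\gS)\vu = \gS\mU\mD(\bm\sigma)^{-1}\mD(\bm\sigma)\mU^T\vu = \gS\vu.
\end{align*}
Applying this with $\gS\vu=\hat{\vx}_0-\rxz$ annihilates the right-hand side of the error identity, so $\lxz=\rxz$ exactly, which by the point-mass observation is a draw from $p(\rxz\mid\vy)$.

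The main obstacle—and the reason a single scalar step size fails—is that $(\gA\gS)^T(\gA\gS)$ generally has distinct eigenvalues $\sigma_j$, so no scalar $\zeta$ can solve $2\zeta\sigma_j=1$ simultaneously for all $j$; one is forced to precondition in the $\gS\mU$ basis. A secondary subtlety demanding care is the normalization carried over from \textbf{Theorem~\ref{thm:gen-adm}}: the raw conditional-mean output is $\sqrt{1-\beta}\,\mP\lxo$, so the guidance step must act on the normalized estimate $\hat{\vx}_0=\mP\lxo$; otherwise the recovered point is the convex combination $(1-\sqrt{1-\beta})\hat{\vx}_0+\sqrt{1-\beta}\,\rxz$ rather than $\rxz$ itself, and exact recovery is lost.
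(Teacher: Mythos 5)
Your proposal is correct and follows essentially the same route as the paper's proof: reduce the two-step algorithm to a single measurement-guided gradient step with $\hat{\vx}_0=\gS\gS^T\lxo$ from Theorem~\ref{thm:gen-adm}, expand the gradient using $\vy=\gA\gS\rzz$, and choose the preconditioner $\bm\zeta$ via the eigendecomposition of $(\gA\gS)^T(\gA\gS)$ so that $2\bm\zeta\,\gS\gS^T\gA^T\gA$ acts as the identity on $\mathrm{col}(\gS)$. Your packaging as an error-contraction identity, together with the explicit observation that the noiseless posterior is a point mass (which the paper leaves implicit), is a presentational refinement rather than a different argument.
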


\textit{Proof.} Our goal is to show that $\lxz = \rxz$, where $\lxz$ is returned by \textbf{Algorithm~\ref{alg:dps_gauss}}. Recall that the reverse process starts with $\overleftarrow{\vx_1} \sim \gN\left(\0,\mI_d \right)$ and generates the following:

\begin{align*}
    \lxz 
    & = \vtheta^*\lxo - \bm{\zeta} \nabla_{\lxo} \left\|\gA \lxz(\lxo) - \vy \right\|_2^2\\
    & = \vtheta^*\lxo - \bm{\zeta} \nabla_{\lxo} \left\|\gA \gS\gS^T \lxo - \vy \right\|_2^2\\
    & = \gS\gS^T\lxo - 2\bm{\zeta} \left(\gA\gS\gS^T\right)^T \left( \gA\gS\gS^T \lxo - \vy\right)\\
    & = \gS\gS^T\lxo - 2\bm{\zeta}  \gS\gS^T\gA^T \left( \gA\gS\gS^T \lxo - \vy\right)\\
    & = \gS\gS^T\lxo - 2\bm{\zeta}\gS\gS^T\gA^T  \gA\gS\gS^T \lxo + 2\bm{\zeta}\gS\gS^T\gA^T\vy\\
    & = \gS\gS^T\lxo - 2\bm{\zeta}\gS\gS^T\gA^T  \gA\gS\gS^T \lxo + 2\bm{\zeta}\gS\gS^T\gA^T\gA\rxz\\
    & = \gS\gS^T\lxo - 2\bm{\zeta}\gS\gS^T\gA^T  \gA\gS\gS^T \lxo + 2\bm{\zeta}\gS\gS^T\gA^T\gA\gS\rzz.
\end{align*}
Now, we use the singular value decomposition of $(\gA\gS)^T(\gA\gS)$ with left singular vectors in $\mU\in\mathbb{R}^{k\times k}$, right singular vectors in $\mV \in \mathbb{R}^{k\times k}$, and singular values $\bm{\sigma} = [\sigma_1, \dots, \sigma_k]$ in $\Sigma = \mD(\bm{\sigma})$. Thus, the above expression becomes
\begin{align*}
    \lxz
    & = \gS\gS^T\lxo - 2\bm{\zeta}\gS\mU \Sigma \mV^T\gS^T \lxo + 2\bm{\zeta}\gS\mU \Sigma \mV^T\rzz\\
    & =  \gS\gS^T\lxo - 2\bm{\zeta}\gS\mU \Sigma \mV^T\gS^T \lxo + 2\bm{\zeta}\gS\mU \Sigma \mV^T\rzz\\
    & = \gS\gS^T\lxo - 2(\gS\mU)\mD(\bm\zeta_i)(\gS\mU)^T  \gS\mU \Sigma \mV^T\gS^T \lxo + 2(\gS\mU)\mD(\bm\zeta_i)(\gS\mU)^T\gS\mU \Sigma \mV^T\rzz\\
    & \stackrel{(i)}{=} \gS\gS^T\lxo - 2(\gS\mU)\mD(\bm\zeta_i)\mU^T\mS^T  \gS\mU \Sigma \mV^T\gS^T \lxo + 2(\gS\mU)\mD(\bm\zeta_i)\mU^T\mS^T\gS\mU \Sigma \mV^T\rzz\\
    &\stackrel{(ii)}{=} \gS\gS^T\lxo - 2(\gS\mU)\mD(\bm\zeta_i)\mU^T\mU \Sigma \mU^T\gS^T \lxo + 2(\gS\mU)\mD(\bm\zeta_i)\mU^T\mU \Sigma \mU^T\rzz\\
    & = \gS\gS^T\lxo - 2(\gS\mU)\mD(\bm\zeta_i) \Sigma \mU^T\gS^T \lxo + 2(\gS\mU)\mD(\bm\zeta_i)\Sigma \mU^T\rzz\\
    & = \gS\gS^T\lxo - 2\gS\mU\mD(\bm\zeta_i) \mD(\bm\sigma) \mU^T\gS^T \lxo + 2\gS\mU\mD(\bm\zeta_i) \mD(\bm\sigma) \mU^T\rzz\\
    & = \gS\gS^T\lxo - 2\gS\mU\mD(\bm\zeta_i \odot \bm\sigma)  \mU^T\gS^T \lxo + 2\gS\mU\mD(\bm\zeta_i \odot \bm\sigma)  \mU^T\rzz,
\end{align*}
where (i) is due to \textbf{Assumption~\ref{assm:ortho}} and (ii) uses \textbf{Assumption~\ref{assm:inv}}.
By choosing $\zeta_i^j$ as half the inverse of the non-zero singular values of $(\gA\gS)^T(\gA\gS)$, i.e., $\zeta_i^j = 1/2\sigma_i~\forall i=1,\dots,k$, we obtain
\begin{align*}
    \lxz 
    & = \gS\gS^T\lxo - \gS\mU  \mU^T\gS^T \lxo + \gS\mU  \mU^T\rzz
    \\
    & = \gS\gS^T\lxo - \gS\gS^T \lxo + \gS\rzz
    = \rxz, 
\end{align*}
which completes the statement of the theorem. \hfill $\square$

\subsection{Proof of \textbf{Proposition~\ref{prop:vae}}}
\label{prf-prop-vae}

\begin{proposition}[Variational Autoencoder]
    Suppose \textbf{Assumption~\ref{assm:ortho}} holds. For an encoder $\mathcal{E}: \mathbb{R}^d \rightarrow \mathbb{R}^k$ and a decoder $\mathcal{D}: \mathbb{R}^k \rightarrow \mathbb{R}^d$, denote by $\mathcal{L}\left(\phi, \omega \right)$ the training objective of VAE:
    \begin{align*}
        \arg \min_{\phi, \omega} \mathcal{L}\left(\phi, \omega \right)\coloneqq \E_{\rxz\sim p}\left[ \left\| \mathcal{D}(\mathcal{E}(\rxz;\phi);\omega) - \rxz \right\|_2^2 \right] + \lambda KL\left(\mathcal{E}\sharp p, \mathcal{N}(\0,\mI_k) \right),
    \end{align*}
    then the combination of $\mathcal{E}(\overrightarrow{\vx_0};\phi) = \gS^T\overrightarrow{\vx_0}$ and $\mathcal{D}(\overleftarrow{\vz_0};\omega) = \gS\overleftarrow{\vz_0}$ is a minimizer of $\mathcal{L}\left(\phi, \omega \right)$.
\end{proposition}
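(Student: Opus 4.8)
The plan is to exploit the fact that the objective $\mathcal{L}(\phi,\omega)$ is a sum of two manifestly nonnegative terms—the expected squared reconstruction error and the $\lambda$-scaled KL divergence—and to show that the proposed linear pair $\enc(\rxz)=\gS^T\rxz$, $\dec(\rzz)=\gS\rzz$ drives \emph{each} term to its individual minimum value of zero. Since $\mathcal{L}(\phi,\omega)\geq 0$ for every choice of $(\phi,\omega)$, exhibiting a single pair that attains $\mathcal{L}=0$ immediately certifies global optimality, and it does so uniformly in $\lambda$. Note that the proposition only asserts that this pair is \emph{a} minimizer, so I will not need any uniqueness argument.

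First I would handle the reconstruction term. Under the data model $\rxz=\gS\overrightarrow{\vw_0}$ with $\overrightarrow{\vw_0}\sim\gN(\0,\mI_l)$, substituting the candidate maps gives $\dec(\enc(\rxz))=\gS\gS^T\rxz$. Because \textbf{Assumption~\ref{assm:ortho}} gives $\gS^T\gS=\mI_l$, the matrix $\gS\gS^T$ is the orthogonal projector onto the column space of $\gS$, and since $\rxz$ lies in that column space almost surely I get $\gS\gS^T\rxz=\gS(\gS^T\gS)\overrightarrow{\vw_0}=\gS\overrightarrow{\vw_0}=\rxz$. Hence the reconstruction error vanishes pointwise, so its expectation is exactly zero.

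Next I would treat the KL term by computing the pushforward $\enc\sharp p$. Applying the encoder, $\enc(\rxz)=\gS^T\rxz=(\gS^T\gS)\overrightarrow{\vw_0}=\overrightarrow{\vw_0}$, so the latent code is distributed exactly as $\overrightarrow{\vw_0}\sim\gN(\0,\mI_l)$; since we are in the regime $k=l$, this coincides with the prior $\gN(\0,\mI_k)$, and therefore $\mathrm{KL}(\enc\sharp p,\gN(\0,\mI_k))=0$. Combining the two computations, the candidate pair achieves $\mathcal{L}=0$, which is the global minimum. The only mild subtlety—and the step I would flag most carefully—is this pushforward identification: it relies on $k=l$ together with orthonormality, so that encoding returns the generating latent $\overrightarrow{\vw_0}$ verbatim. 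Were $k\neq l$, or the columns of $\gS$ merely linearly independent rather than orthonormal, the encoded law would be a nondegenerate Gaussian that need not match the standard prior, and a genuine $\lambda$-dependent tradeoff between the two terms would reappear; under the stated assumptions this tension simply does not arise.
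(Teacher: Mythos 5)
Your proof is correct and follows essentially the same route as the paper's: both verify that the candidate pair drives the reconstruction term to zero pointwise via $\gS\gS^T\rxz=\rxz$ on the column space of $\gS$, and that the encoder pushforward is exactly $\gN(\0,\mI_k)$ (using $k=l$ and $\gS^T\gS=\mI_l$) so the KL term also vanishes. Your explicit remark that nonnegativity of both terms certifies global optimality uniformly in $\lambda$ is a small clarity improvement over the paper, which leaves that step implicit.
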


\begin{proof}
To show that the encoder $\mathcal{E}(\overrightarrow{\vx_0};\phi) = \gS^T\overrightarrow{\vx_0}$ and the decoder $\mathcal{D}(\overleftarrow{\vz_0};\omega) = \gS\overleftarrow{\vz_0}$ minimize the VAE training objective $\mathcal{L}\left(\phi, \omega \right)$, we begin with the first part of the loss, which is also called \textit{reconstruction error} $\mathcal{L}_{recon}\left(\phi, \omega \right)$. Substituting $\mathcal{E}(\overrightarrow{\vx_0};\phi) = \gS^T\overrightarrow{\vx_0}$ and  $\mathcal{D}(\overleftarrow{\vz_0};\omega) = \gS\overleftarrow{\vz_0}$, we have 
    \begin{align*}
        \mathcal{L}_{recon}\left(\phi, \omega \right) 
        & \coloneqq  \E_{\rxz\sim p}\left[ \left\| \mathcal{D}(\mathcal{E}(\rxz;\phi);\omega) - \rxz \right\|_2^2 \right] \\
        & = \E_{\rxz\sim p}\left[ \left\| \mathcal{D}(\gS^T\rxz;\omega) - \rxz \right\|_2^2 \right] \\
        & = \E_{\rxz\sim p}\left[ \left\| \gS \gS^T\rxz - \rxz \right\|_2^2 \right] 
    \end{align*}
    Using the fact that $\rxz$ lives in a linear subspace, we arrive at
    \begin{align*}
        \mathcal{L}_{recon}\left(\phi, \omega \right) 
        & = \E_{\rxz\sim p}\left[ \left\| \gS \gS^T \gS \rzz - \gS \rzz \right\|_2^2 \right]\\
        & \stackrel{(i)}{=} \E_{\rzz\sim \mathcal{N}\left(\0, \mI_k \right)}\left[ \left\| \gS \rzz - \gS \rzz \right\|_2^2 \right] =0,
    \end{align*}
    where (i) is due to \textbf{Assumption~\ref{assm:ortho}}. Now, we analyze the distribution loss. Note that the KL-divergence between two Gaussian distributions with moments $(\mu_1$, $\sigma_1)$ and $(\mu_2, \sigma_2)$ is given by 
    \begin{align*}
        KL\left(\mathcal{N}(\mu_1, \sigma_1), \mathcal{N}(\mu_2, \sigma_2) \right) = \log \left( \frac{\sigma_2}{\sigma_1} \right) + \frac{\sigma_1^2 + \left(\mu_1 - \mu_2 \right)^2}{2\sigma_2^2} -\frac{1}{2}.
    \end{align*}
    Since $\enc\left(\vx_0\right) = \gS^T\vx_0 = \gS^T\gS\vz_0 = \vz_0$, the distribution loss becomes:
        $$\mathcal{L}_{dist}\left(\phi \right) 
         \coloneqq  KL\left(\mathcal{E}\sharp p, \mathcal{N}(\0,\mI_k) \right) 
         = KL\left(\mathcal{N}(\0,\mI_k), \mathcal{N}(\0,\mI_k) \right) = 0.$$
\end{proof}

\subsection{Proof of \textbf{Theorem~\ref{thm:gen-ldm}}}
\label{prf-gen-ldm}

\begin{theorem}[Generative Modeling using Diffusion in Latent Space]
    Suppose \textbf{Assumption~\ref{assm:ortho}} holds. Let the optimal solution of the latent diffusion model be
    \begin{align*}
        \vtheta^* = \arg \min_{\vtheta} \E_{\rzz, \overrightarrow{\vepsilon}}\left[ \left \| \tilde{\mu}_1\left ( \overrightarrow{\vz_1}(\rzz, \overrightarrow{\vepsilon}), \overrightarrow{\vz_0} \right ) - \mu_\theta\left ( \overrightarrow{\vz_1}\left(\rzz, \overrightarrow{\vepsilon}\right) \right ) \right \|^2 \right].
    \end{align*}
    For a fixed variance $\beta > 0$, if $\mu_\vtheta\left( \overrightarrow{\vz_1}\left(\rzz, \overrightarrow{\vepsilon}\right) \right ) \coloneqq \vtheta \overrightarrow{\vz_1}\left(\rzz, \overrightarrow{\vepsilon}\right)$, then the closed-form solution is $\vtheta^* = \sqrt{1-\beta}\mI_k$, which after normalization by $\frac{1}{\sqrt{1-\beta}}$ and composition with the decoder $\gD\left(\lzz;\omega\right)  \coloneqq \gS\lzz$ recovers the true subspace of $p\left(\rxz\right)$.
\end{theorem}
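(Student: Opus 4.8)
The plan is to mirror the proof of \textbf{Theorem~\ref{thm:gen-adm}}, exploiting the fact that in the latent space the data distribution has \emph{full-rank} isotropic covariance rather than a rank-deficient one, so that the projection $\gS\gS^T$ appearing in the pixel-space result collapses to $\mI_k$. First I would pin down the latent marginal. Since the data obeys $\rxz = \gS\rzz$ with a standard Gaussian code and $k=l$, the encoder of \textbf{Proposition~\ref{prop:vae}} gives $\enc(\rxz) = \gS^T\gS\rzz = \rzz$ by \textbf{Assumption~\ref{assm:ortho}}, so the clean latent satisfies $\rzz \sim \gN(\0,\mI_k)$. In effect the latent diffusion is an instance of the setting of \textbf{Theorem~\ref{thm:gen-adm}} in which the generating matrix is $\mI_k$ (the latent occupies all of $\R^k$), which is precisely what yields $\mI_k$ in place of $\gS\gS^T$.

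Next I would solve the least-squares problem defining $\vtheta^*$. Writing the two-step forward process $\rzo = \sqrt{1-\beta}\,\rzz + \sqrt{\beta}\,\overrightarrow{\vepsilon}$ with $\overrightarrow{\vepsilon}\sim\gN(\0,\mI_k)$ independent of $\rzz$, and using the linear parametrization $\mu_\vtheta(\rzo) = \vtheta\rzo$, the objective $\E\|\tilde{\mu}_1(\rzo,\rzz) - \vtheta\rzo\|^2$ is a convex quadratic in the entries of $\vtheta$, whose Hessian is determined by $\E[\rzo\rzo^T]$. Its unique minimizer is the linear MMSE estimator $\vtheta^* = \E[\tilde{\mu}_1(\rzo,\rzz)\,\rzo^T]\,(\E[\rzo\rzo^T])^{-1}$. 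Independence of $\rzz$ and $\overrightarrow{\vepsilon}$ gives $\E[\rzo\rzo^T] = (1-\beta)\mI_k + \beta\mI_k = \mI_k$, while the target posterior mean contributes $\E[\tilde{\mu}_1(\rzo,\rzz)\,\rzo^T] = \sqrt{1-\beta}\,\mI_k$ (the same computation as in \textbf{Theorem~\ref{thm:gen-adm}} under $\gS \to \mI_k$). Combining yields $\vtheta^* = \sqrt{1-\beta}\,\mI_k$.

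Finally I would carry out the normalization and decoder composition. Normalizing by $1/\sqrt{1-\beta}$ turns the reverse map into $\mI_k$, so starting the reverse process at $\lzo \sim \gN(\0,\mI_k)$ produces $\lzz = \mI_k\lzo = \lzo \sim \gN(\0,\mI_k)$. Applying the decoder $\gD(\lzz;\omega) = \gS\lzz$ from \textbf{Proposition~\ref{prop:vae}} gives $\lxz = \gS\lzz$; since $\lzz$ and $\rzz$ share the law $\gN(\0,\mI_k)$ and $\rxz = \gS\rzz$, the generated $\lxz$ has the same distribution as $p(\rxz)$ and its support is exactly the column space of $\gS$, which recovers the true subspace.

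I expect the only genuine obstacle to be pinning down the exact form of the target $\tilde{\mu}_1(\rzo,\rzz)$ so that the cross-moment $\E[\tilde{\mu}_1(\rzo,\rzz)\,\rzo^T]$ evaluates cleanly; once that is fixed (either as the Ornstein--Uhlenbeck forward-posterior mean or, for this last step, the clean latent $\rzz$ itself), everything else is routine Gaussian second-moment algebra, and convexity of the quadratic objective guarantees the MMSE solution is the unique global minimizer.
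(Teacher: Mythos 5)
Your proposal is correct and follows essentially the same route as the paper: identify the clean latent as $\overrightarrow{\vz_0}=\enc(\overrightarrow{\vx_0})=\gS^T\gS\overrightarrow{\vz_0}=\overrightarrow{\vz_0}\sim\gN(\0,\mI_k)$ via \textbf{Assumption~\ref{assm:ortho}}, solve the resulting linear least-squares problem using $\E[\overrightarrow{\vz_1}\overrightarrow{\vz_1}^T]=(1-\beta)\mI_k+\beta\mI_k=\mI_k$ and the cross-moment $\sqrt{1-\beta}\,\mI_k$ (the paper indeed takes $\tilde{\mu}_1=\overrightarrow{\vz_0}$ in this two-step setting, resolving your one stated uncertainty exactly as you anticipated), and then compose the normalized map with the decoder $\gS$. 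The paper merely writes the same Gaussian second-moment computation row by row rather than in matrix form.
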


\textit{Proof.}  In latent diffusion models, the training is performed in the latent space of a pre-trained VAE. If the VAE is chosen from \textbf{Proposition~\ref{prop:vae}}, then the training objective becomes:
    \begin{align*}
     \min_{\vtheta} 
     & \E_{\rxz, \overrightarrow{\vepsilon}}\left[ \left \| \tilde{\mu}_1( \overrightarrow{\vz_1}\left(\mathcal{E}(\rxz), \overrightarrow{\vepsilon}), \mathcal{E}(\rxz) \right ) - \mu_\vtheta\left ( \overrightarrow{\vz_1}\left(\mathcal{E}(\rxz), \overrightarrow{\vepsilon}\right) \right ) \right \|^2 \right]\\
     & =  \E_{\rzz, \overrightarrow{\vepsilon}}\left[ \left \| \tilde{\mu}_1( \overrightarrow{\vz_1}\left(\rzz, \overrightarrow{\vepsilon}), \overrightarrow{\vz_0} \right ) - \mu_\vtheta\left ( \overrightarrow{\vz_1}\left(\rzz, \overrightarrow{\vepsilon}\right) \right ) \right \|^2 \right] \\
     & = \E_{\rzz, \overrightarrow{\vepsilon}}\left[ \left \| \rzz - \mu_\vtheta\left ( \overrightarrow{\vz_1}\left(\rzz, \overrightarrow{\vepsilon}\right) \right ) \right \|^2 \right] 
     = \E_{\rzz, \overrightarrow{\vepsilon}}\left[ \left \| \rzz - \vtheta\overrightarrow{\vz_1}\left(\rzz, \overrightarrow{\vepsilon}\right)  \right \|^2 \right] \\
     & = \E_{\rzz, \overrightarrow{\vepsilon}}\left[ \left \| \rzz - \vtheta\left( \rzz \sqrt{1-\beta} + \sqrt{\beta}\overrightarrow{\vepsilon}\right)  \right \|^2 \right]\\
     & = \mathop{\mathbb{E}}_{\substack{\rzz \sim p\\ \overrightarrow{\vepsilon} \sim \mathcal{N}\left (\0,\mI_k \right )}} \left [ \sum_{i=1}^{k} \left ( \overrightarrow{\vz_{0,i}} - \vtheta_i^T\left ( \rzz \sqrt{1-\beta} + \overrightarrow{\vepsilon}\sqrt{\beta}  \right ) \right )^2 \right ],
     \end{align*}
    where $\vtheta_i^T $ denotes the $i^{th}$ row of matrix $\vtheta$. The solution of this regression problem is given by\footnote{For ease of notation, we drop the forward arrow in the rest of this proof.}
    \begin{align*}
    \vtheta_i^*  & = \mathop{\mathbb{E}}_{\substack{\vx_0, \vepsilon}} \left [ \left ( \vz_0 \sqrt{1-\beta} + \vepsilon \sqrt{\beta}\right )\left ( \vz_0 \sqrt{1-\beta} + \vepsilon \sqrt{\beta}\right )^T \right ]^{-1} \mathbb{E}_{\vx_0, \vepsilon}\left [ \vz_{0,i}\left ( \vz_0 \sqrt{1-\beta} + \vepsilon \sqrt{\beta} \right ) \right ]\\
    & = \mathop{\mathbb{E}}_{\substack{\vx_0, \vepsilon}} \left [ \left ( \vz_0 \sqrt{1-\beta} + \vepsilon \sqrt{\beta}\right )\left ( \vz_0 \sqrt{1-\beta} + \vepsilon \sqrt{\beta}\right )^T \right ]^{-1} \mathbb{E}_{\vx_0, \vepsilon}\left [ \vz_{0,i}\left ( \vz_0 \sqrt{1-\beta} + \vepsilon \sqrt{\beta} \right ) \right ]\\
    & = \mathop{\mathbb{E}}_{\substack{\vx_0, \vepsilon}} \left [ \left ( \enc(\vx_0) \sqrt{1-\beta} + \vepsilon \sqrt{\beta}\right )\left ( \enc(\vx_0) \sqrt{1-\beta} + \vepsilon \sqrt{\beta}\right )^T \right ]^{-1} \mathbb{E}_{\vx_0, \vepsilon}\left [ \enc(\vx_0)_i\left ( \enc(\vx_0) \sqrt{1-\beta} + \vepsilon \sqrt{\beta} \right ) \right ]\\
    & =\mathop{\mathbb{E}}_{\substack{\vz_0, \vepsilon}} \left [ \left ( \enc(\gS\vz_0) \sqrt{1-\beta} + \vepsilon \sqrt{\beta}\right )\left ( \enc(\gS\vz_0) \sqrt{1-\beta} + \vepsilon \sqrt{\beta}\right )^T \right ]^{-1} \mathbb{E}_{\vz_0, \vepsilon}\left [ \enc(\gS\vz_0)_i\left ( \enc(\gS\vz_0) \sqrt{1-\beta} + \vepsilon \sqrt{\beta} \right ) \right ]\\ 
    & = \mathop{\mathbb{E}}_{\substack{\vz_0, \vepsilon}} \left [ \left ( \gS^T\gS\vz_0 \sqrt{1-\beta} + \vepsilon \sqrt{\beta}\right )\left ( \gS^T\gS\vz_0 \sqrt{1-\beta} + \vepsilon \sqrt{\beta}\right )^T \right ]^{-1} \mathbb{E}_{\vz_0, \vepsilon}\left [ (\gS^T\gS\vz_0)_i\left ( \gS^T\gS\vz_0 \sqrt{1-\beta} + \vepsilon \sqrt{\beta} \right ) \right ]
\end{align*}
Using \textbf{Assumption~\ref{assm:ortho}}, the above expression simplifies to 
\begin{align*}
    \vtheta_i^* 
    & = \mathop{\mathbb{E}}_{\substack{\vz_0, \vepsilon}} \left [ \left ( \vz_0 \sqrt{1-\beta} + \vepsilon \sqrt{\beta}\right )\left ( \vz_0 \sqrt{1-\beta} + \vepsilon \sqrt{\beta}\right )^T \right ]^{-1} \mathbb{E}_{\vz_0, \vepsilon}\left [ (\vz_0)_i\left ( \vz_0 \sqrt{1-\beta} + \vepsilon \sqrt{\beta} \right ) \right ]\\ 
    & = \mathop{\mathbb{E}}_{\substack{\vz_0, \vepsilon}} \left [  (1-\beta) \vz_0\vz_0^T  + \vz_0\vepsilon^T \sqrt{\beta(1-\beta)} + \vepsilon\vz_0^T \sqrt{\beta(1-\beta)} + \beta \vepsilon\vepsilon^T \right ]^{-1} 
    \mathbb{E}_{\vz_0, \vepsilon}\left [ (\vz_0)_i\left ( \vz_0 \sqrt{1-\beta} + \vepsilon \sqrt{\beta} \right ) \right ] \\
    & = \left [  \left ( (1-\beta) \mathop{\mathbb{E}}_{\substack{\vz_0, \vepsilon}}\left[\vz_0\vz_0^T\right]  + \mathop{\mathbb{E}}_{\substack{\vz_0, \vepsilon}}\left[\vz_0\vepsilon^T\right] \sqrt{\beta(1-\beta)} + \mathop{\mathbb{E}}_{\substack{\vz_0, \vepsilon}}\left[\vepsilon\vz_0^T\right] \sqrt{\beta(1-\beta)} + \beta \mathop{\mathbb{E}}_{\substack{\vz_0, \vepsilon}}\left[\vepsilon\vepsilon^T\right] \right )\right ]^{-1} \\
    & \hspace{6cm}\times
    \mathbb{E}_{\vz_0, \vepsilon}\left [ (\vz_0)_i\left ( \vz_0 \sqrt{1-\beta} + \vepsilon \sqrt{\beta} \right ) \right ] \\
    & = \left [  \left ( (1-\beta) \mI_k  + \mathop{\mathbb{E}}_{\substack{\vz_0}}\left[\vz_0\right]\mathop{\mathbb{E}}_{\substack{ \vepsilon}}\left[\vepsilon\right]^T \sqrt{\beta(1-\beta)} + \mathop{\mathbb{E}}_{\substack{ \vepsilon}}\left[\vepsilon\right] \mathop{\mathbb{E}}_{\substack{\vz_0}}\left[\vz_0\right]^T\sqrt{\beta(1-\beta)} + \beta \mI_k \right )\right ]^{-1}\\
    & \hspace{6cm} \times 
    \mathbb{E}_{\vz_0, \vepsilon}\left [ (\vz_0)_i\left ( \vz_0 \sqrt{1-\beta} + \vepsilon \sqrt{\beta} \right ) \right ],
    \end{align*}
    where the last step uses the fact that $\vz_0$ and $\vepsilon$ are independent Gaussian random vectors with zero mean and unit covariance. Simplifying further, we arrive at 
    \begin{align*}
    \vtheta_i^* 
    & = \left [   (1-\beta) \mI_k  + \beta \mI_k \right ]^{-1}
    \mathbb{E}_{\vz_0, \vepsilon}\left [ (\vz_0)_i\left ( \vz_0 \sqrt{1-\beta} + \vepsilon \sqrt{\beta} \right ) \right ]\\
    & = \mathbb{E}_{\vz_0, \vepsilon}\left [ (\vz_0)_i\left ( \vz_0 \sqrt{1-\beta} + \vepsilon \sqrt{\beta} \right ) \right ]\\
    & = \mathbb{E}_{\vz_0}\left [ (\vz_0)_i \vz_0 \sqrt{1-\beta}  \right ] +  \mathbb{E}_{\vz_0, \vepsilon}\left [ (\vz_0)_i  \vepsilon \sqrt{\beta} \right ]\\
    & = \mathbb{E}_{\vz_0}\left [ (\vz_0)_i \vz_0 \sqrt{1-\beta}  \right ] +  \mathbb{E}_{\vz_0}\left [ (\vz_0)_i  \right ]
    \mathbb{E}_{\vepsilon}\left [ \vepsilon  \right ]\sqrt{\beta}.
    \end{align*}
    The final step follows from independence of $\vz_0$ and $\vepsilon$. Since $\vz_0$ and $\vepsilon$ are also $\mathcal{N}\left(\0, \mI_k \right)$, we get
    \begin{align*}
    \vtheta_i^* 
    & = \mathbb{E}_{\vz_0}\left [ (\vz_0)_i \vz_0 \sqrt{1-\beta}  \right ] = \left[0,\dots,0,\sqrt{1-\beta},0,\dots,0\right]^T,
    \end{align*}
    where the $i^{th}$ coordinate is $\sqrt{1-\beta}$ and zero everywhere else. Therefore, stacking all the rows together, we get $\vtheta^* = \sqrt{1-\beta}\mI_k$, which after normalization by $1/\sqrt{1-\beta}$ gives the desired result.
    
    Next, we show that $\bm\theta^*$ recovers the true subspace of $\rxz \sim p\left(\rxz \right)$.  When composed with the decoder of VAE, the generator of the LDM gives
        $\lxz = \dec\left(\vtheta^*\lzo  \right) = \dec\left(\mI_k \lzo \right) = \gS\lzo$. Since $\lzo\sim \gN\left(\0,\mI_k\right)$, this completes the statement of the theorem. \hfill $\square$

\subsection{Proof of \textbf{Theorem~\ref{thm:ps-ldm}} }
\label{prf-ps-ldm}
Recall that the the latent-space GML-DPS (\ref{eq:gml-dps}) algorithm (based on the pixel-space DPS algorithm \cite{dps}) has three key steps. In the first step, it uses the normalized closed-form solution obtained in \textbf{Theorem~\ref{thm:gen-ldm}} to perform one step of \textit{denoising} by the reverse SDE. In the second step, it runs one step of gradient descent to satisfy the \textit{measurements} in the pixel space. Finally, it takes one step of gradient descent on the \textit{goodness} objective, which acts as a regularizer to ensure that the reconstructed image lies on the data manifold.

This can be formalized as:
\begin{align}
    \label{eq:thm-37-0}
    \overleftarrow{\vz^\prime_0}
    & = \vtheta^*\lzo - \bm\eta \nabla_{\lzo} \left\|\gA \dec(\lzz(\lzo)) - \vy \right\|_2^2;\\
    \label{eq:thm-37-1}
    \lzz & = \arg \min_{\lzzp}\left|\left|\lzzp - \enc(\dec(\lzzp)) \right|\right|_2^2,
\end{align}
In practice, solving (\ref{eq:thm-37-1}) can be difficult, and can be approximated via gradient descent. In our analysis however, we analyze the exact system of equations above, as (\ref{eq:thm-37-1}) has a closed-form solution in the linear setting.

\begin{theorem}[Posterior Sampling using Goodness Modified Latent DPS] Suppose \textbf{Assumptions~\ref{assm:ortho}} and \textbf{Assumption~\ref{assm:inv}} hold. Denote by $\bm\sigma = \{\sigma_j\}_{j = 1}^k$  the singular values of $(\gA\gS)^T(\gA\gS)$, i.e., $(\gA\gS)^T(\gA\gS) = \mU \Sigma \mU^T \coloneqq \mU \mD(\bm\sigma)\mU^T, \mU \in \R^{k\times k}$, and let
    \begin{align*}
        \vtheta^* = \arg \min_{\vtheta} \E_{\rzz, \overrightarrow{\vepsilon}}\left[ \left \| \tilde{\mu}_1\left ( \overrightarrow{\vz_1}(\rzz, \overrightarrow{\vepsilon}), \overrightarrow{\vz_0} \right ) - \mu_\theta\left ( \overrightarrow{\vz_1}\left(\rzz, \overrightarrow{\vepsilon}\right) \right ) \right \|_2^2 \right].
    \end{align*}
    Suppose $\rxz \sim p(\rxz)$. Given measurements $\vy = \gA \rxz$ and any fixed variance $\beta \in (0,1) $, then with the (unique) step size $\bm\eta = (1/2)\mU\mD(\bm{\eta}_i)\mU^T, \bm{\eta}_i = \{\eta_i^j = 1/2\sigma_j\}_{j = 1}^k$, the 
    GML-DPS algorithm (\ref{eq:gml-dps}) samples from the true posterior $p(\rxz|y)$ and exactly recovers the groundtruth sample, i.e., $\lxz = \rxz$.  
\end{theorem}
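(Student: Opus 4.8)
The plan is to mirror the pixel-space argument of \textbf{Theorem~\ref{thm:ps-adm}}, but carried out entirely in the $k$-dimensional latent space, after first showing that the \emph{goodness} regularizer is inert in this idealized setting. First I would substitute the closed forms from \textbf{Proposition~\ref{prop:vae}}, namely $\enc(\vx) = \gS^T\vx$ and $\dec(\vz) = \gS\vz$, together with the normalized denoiser $\vtheta^* = \mI_k$ from \textbf{Theorem~\ref{thm:gen-ldm}}. Using \textbf{Assumption~\ref{assm:ortho}} one sees that $\enc(\dec(\vz)) = \gS^T\gS\vz = \vz$ for every latent $\vz$, so the goodness objective $\|\hat\vz_0 - \enc(\dec(\hat\vz_0))\|_2^2$ and its gradient vanish identically. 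Hence the goodness step in (\ref{eq:thm-37-1}) leaves the iterate unchanged, and GML-DPS collapses to denoising followed by a single measurement-matching gradient step.

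I would then write out that remaining step. With $\vtheta^* = \mI_k$ the denoised latent is $\hat\vz_0 = \lzo$ and $\dec(\hat\vz_0) = \gS\lzo$, so (\ref{eq:thm-37-0}) becomes
\begin{align*}
\lzzp = \lzo - \bm\eta\,\nabla_{\lzo}\left\|\gA\gS\lzo - \vy\right\|_2^2 = \lzo - 2\bm\eta(\gA\gS)^T\left(\gA\gS\lzo - \vy\right).
\end{align*}
Substituting the noiseless measurement $\vy = \gA\rxz = \gA\gS\rzz$ yields $\lzzp = \lzo - 2\bm\eta(\gA\gS)^T(\gA\gS)(\lzo - \rzz)$. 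Diagonalizing $(\gA\gS)^T(\gA\gS) = \mU\mD(\bm\sigma)\mU^T$, which is positive definite by \textbf{Assumption~\ref{assm:inv}}, I would choose the matrix step size $\bm\eta = \tfrac12\mU\mD(\bm\sigma)^{-1}\mU^T$ so that $2\bm\eta(\gA\gS)^T(\gA\gS) = \mI_k$ (per eigendirection this is exactly $1/(2\sigma_j)$). This renders the update a single Newton step for the strictly convex quadratic $\|\gA\gS\vz - \vy\|_2^2$; the random initialization $\lzo$ cancels exactly and $\lzzp = \rzz$.

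Decoding the recovered latent finishes the argument: $\lxz = \dec(\lzz) = \dec(\lzzp) = \gS\rzz = \gS\gS^T\rxz = \rxz$, where the last equality uses that $\rxz = \gS\rzz$ lies in the column span of $\gS$. Because \textbf{Assumption~\ref{assm:inv}} makes $\gA\gS$ injective, $\rzz$ is the \emph{unique} minimizer of the quadratic, so the recovered point is the unique groundtruth; the posterior is a point mass, so exact recovery and sampling from $p(\rxz\mid\vy)$ coincide.

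The calculation itself is routine linear algebra; the two points that require care are conceptual. The first is recognizing that the exact linear VAE makes $\enc\circ\dec$ the identity, so the goodness term contributes nothing---a simplification here, but also the warning flag that motivates the gluing objective of \textbf{Theorem~\ref{thm:ps-ldm-robust}}. The second, and the genuine technical crux, is that exact one-shot recovery forces a \emph{coordinate-wise} preconditioner rather than a scalar step: a single scalar would correctly invert only the eigendirections sharing one value of $\sigma_j$, so the step size must be the (scaled) inverse of $(\gA\gS)^T(\gA\gS)$, whose existence is exactly what \textbf{Assumption~\ref{assm:inv}} provides.
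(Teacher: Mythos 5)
Your proposal is correct and follows essentially the same route as the paper's proof: substitute the linear VAE $\enc = \gS^T$, $\dec = \gS$ and the optimal denoiser $\vtheta^* = \mI_k$, observe that $\enc\circ\dec$ is the identity on the latent space so the goodness step is inert, and then show the single preconditioned measurement step with $2\bm\eta\,(\gA\gS)^T(\gA\gS) = \mI_k$ cancels $\lzo$ exactly and returns $\rzz$. The only (immaterial) difference is ordering—you establish the vanishing of the goodness gradient up front, while the paper runs the measurement update first and then verifies the resulting iterate is already a fixed point of the goodness objective—and your closing remarks about why a scalar step size fails and why this motivates the gluing objective match the paper's own discussion.
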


\textit{Proof.} We start with the measurement consistency update (\ref{eq:thm-37-0}) and then show that the solution obtained from (\ref{eq:thm-37-0}) is already a minimizer of (\ref{eq:thm-37-1}). Therefore, we have

\begin{align*}
        \lzzp 
        & = \vtheta^*\lzo - \bm\eta \nabla_{\lzo} \left\|\gA \dec(\lzz(\lzo)) - \vy \right\|_2^2 
        \\
        & = \mI_k \lzo - \bm\eta \nabla_{\lzo} \left\|\gA \dec(\mI_k \lzo) - \vy \right\|_2^2 
        \\
        & = \lzo - \bm\eta \nabla_{\lzo} \left\|\gA \gS \lzo) - \vy \right\|_2^2 
        \\
        & = \lzo - \bm\eta \nabla_{\lzo} \left\|\gA \gS \lzo) - \vy \right\|_2^2 
        \\
        & \stackrel{(i)}{=} \lzo - \bm\eta \nabla_{\lzo} \left\|\gA \gS \lzo - \vy \right\|_2^2 \\
        & = \lzo - 2\bm\eta \gS^T\gA^T \left( \gA\gS\lzo  - \vy \right)\\
        & = \lzo - 2\bm\eta \gS^T\gA^T  \gA\gS \lzo  + 2\bm\eta \gS^T\gA^T \vy \\
        & = \lzo - 2\bm\eta \gS^T\gA^T  \gA\gS \lzo  + 2\bm\eta \gS^T\gA^T \gA\rxz \\
        & = \lzo - 2\bm\eta \gS^T\gA^T  \gA\gS \lzo  + 2\bm\eta \gS^T\gA^T \gA\gS\rzz,
    \end{align*}
    where (i) is due to \textbf{Assumption~\ref{assm:ortho}}. By \textbf{Assumption~\ref{assm:inv}}, $(\gA\gS)^T(\gA\gS)$ is a positive definite matrix and can be written as $\mU\Sigma \mU^T$:
    \begin{align*}
        \lzzp
        & = \lzo - 2\bm\eta \mU \Sigma \mU^T \lzo  + 2\bm\eta \mU \Sigma \mU^T\rzz \\
        & = \lzo - 2\mU\mD(\bm{\eta}_i)\mU^T \mU \Sigma \mU^T \lzo  + 2\mU\mD(\bm{\eta}_i)\mU^T \mU \Sigma \mU^T\rzz \\
        & = \lzo - 2\mU\mD(\bm{\eta}_i) \Sigma \mU^T \lzo  + 2\mU\mD(\bm{\eta}_i)\Sigma \mU^T\rzz \\
        & = \lzo - 2\mU\mD(\bm{\eta}_i)\mD(\bm\sigma) \mU^T \lzo  + 2\mU\mD(\bm{\eta}_i)\mD(\bm\sigma) \mU^T\rzz \\
        & = \lzo - 2\mU\mD(\bm{\eta}_i \odot \bm\sigma) \mU^T \lzo  + 2\mU\mD(\bm{\eta}_i \odot \bm\sigma) \mU^T\rzz.
    \end{align*}
     Since $\eta_j^i = 1/2\sigma_j$, the above expression further simplifies to
    \begin{align*}
        \lzzp
        & = \lzo - \mU \mU^T \lzo  + \mU \mU^T\rzz
        = \rzz.
    \end{align*}
    Next, we show that $\lzzp$ is already a minimizer of (\ref{eq:thm-37-1}). This is a direct consequence of the encoder-decoder architecture of the VAE: $\enc(\dec(\lzzp) ) = \gS^T\gS\lzzp = \lzzp$. Hence, $\left|\left|\lzzp - \enc(\dec(\lzzp)) \right|\right|^2 = 0$, and consequently 
    $\lzz = \lzzp - \gamma  \nabla_{\lzzp}\left|\left|\lzzp - \enc(\dec(\lzzp)) \right|\right|^2 = \rzz$.
    Thus, the reconstructed sample becomes $\lxz = \dec(\lzz) = \gS\rzz = \rxz$. 
    
    Furthermore, as $\left|\left|\lzzp - \enc(\dec(\lzzp)) \right|\right|^2 = 0 $ for all $\lzzp$, it is evident that the goodness objective cannot rectify the error incurred in the measurement update (\ref{eq:thm-37-0}). For this reason, GML-DPS algorithm (\ref{eq:gml-dps}) requires the exact step size to sample from the posterior. \hfill $\square$

Beyond the linear setting, we also refer to Table~\ref{tab:ffhq-sd-psnr-ssim} for experiments supporting this result.

\subsection{Proof of \textbf{Theorem~\ref{thm:ps-ldm-robust}}}
\label{prf-ps-ldm-robust}
Different from GML-DPS, PSLD \textbf{Algorithm~\ref{alg:psld}} replaces the goodness objective (\ref{eq:gml-dps}) with the gluing objective (\ref{eq:psld}), which can be formalized as:
\begin{align}
    \label{eq:thm-38-0}
    \overleftarrow{\vz^\prime_0}
    & = \vtheta^*\lzo - \eta \nabla_{\lzo} \left\|\gA \dec(\lzz(\lzo)) - \vy \right\|_2^2;\\
    \label{eq:thm-38-1}
    \lzz & = \arg \min_{\lzzp}\left|\left|\lzzp - \enc(\gA^T\gA \rzz + (\mI_d - \gA^T\gA)\dec(\lzzp)) \right|\right|_2^2.
\end{align}
We again remind that solving the minimization problem (\ref{eq:thm-38-1}) is hard in general, and can be \textit{approximated} by gradient descent as typically followed in practice~\cite{dps}. However, in a linear model setting, (\ref{eq:thm-38-1}) has a closed-form solution which we derive to prove exact recovery. 

\begin{theorem}[Posterior Sampling using Diffusion in Latent Space] Let \textbf{Assumptions~\ref{assm:ortho}} and \textbf{\ref{assm:inv}} hold. Let $\sigma_j, \forall j = 1,\dots,r$ denote the singular values of $(\gA\gS)^T(\gA\gS)$ and let
    \begin{align*}
        \vtheta^* = \arg \min_{\vtheta} \E_{\rzz, \overrightarrow{\vepsilon}}\left[ \left \| \tilde{\mu}_1\left ( \overrightarrow{\vz_1}(\rzz, \overrightarrow{\vepsilon}), \overrightarrow{\vz_0} \right ) - \mu_\theta\left ( \overrightarrow{\vz_1}\left(\rzz, \overrightarrow{\vepsilon}\right) \right ) \right \|^2 \right].
    \end{align*}
   Suppose $\rxz \sim p(\rxz)$. Given measurements $\vy = \gA\rxz$, any fixed variance $\beta \in (0,1) $, and any positive step sizes 
     $\eta_i^j, j = 1, 2, \ldots, r$, the PSLD Algorithm~\ref{alg:psld} samples from the true posterior $p(\rxz|y)$ and exactly recovers the groundtruth sample, i.e., $\lxz = \rxz$.  
\end{theorem}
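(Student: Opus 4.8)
The plan is to reproduce the two-stage structure of the GML-DPS proof (\textbf{Theorem~\ref{thm:ps-ldm}}), but to exhibit the decisive difference: the \emph{gluing} objective (\ref{eq:thm-38-1}), unlike the goodness objective, does not vanish identically and instead pins the latent down uniquely. First I would substitute the closed forms from \textbf{Proposition~\ref{prop:vae}} and \textbf{Theorem~\ref{thm:gen-ldm}}, namely $\vtheta^* = \mI_k$, $\dec(\vz) = \gS\vz$, $\enc(\vx) = \gS^T\vx$, together with $\rxz = \gS\rzz$ and $\vy = \gA\gS\rzz$. Running the denoising-plus-measurement update (\ref{eq:thm-38-0}) exactly as in the GML-DPS computation, and factoring with $\vy = \gA\gS\rzz$, yields
\begin{align*}
    \lzzp = \lzo - 2\bm{\eta}\,(\gA\gS)^T(\gA\gS)(\lzo - \rzz).
\end{align*}
For an arbitrary step size $\bm{\eta}$ (anything other than the exact $\eta_i^j = 1/2\sigma_j$) this is generally \emph{not} equal to $\rzz$; recording this discrepancy is the point, since it is precisely what the gluing step must repair.

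The crux is to evaluate the gluing objective (\ref{eq:thm-38-1}) in closed form. Writing $\vz$ for the free latent, I would expand the inner argument using $\gS^T\gS = \mI_k$ (\textbf{Assumption~\ref{assm:ortho}}):
\begin{align*}
    \enc\!\left(\gA^T\gA\,\rxz + (\mI_d - \gA^T\gA)\dec(\vz)\right)
    &= \gS^T\gA^T\gA\gS\rzz + \gS^T(\mI_d - \gA^T\gA)\gS\vz \\
    &= \vz - (\gA\gS)^T(\gA\gS)(\vz - \rzz).
\end{align*}
Hence the gluing objective collapses to $\left\|(\gA\gS)^T(\gA\gS)(\vz - \rzz)\right\|_2^2$. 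This identity is the heart of the argument and it exposes the contrast with GML-DPS: inserting the \emph{known} pixels $\gA^T\gA\rxz$ and decoding only the masked part $(\mI_d - \gA^T\gA)\dec(\vz)$ prevents the encoder–decoder composition from acting as the identity on $\vz$, so the objective carries genuine information about $\rzz$ rather than being identically zero (as the goodness term $\|\vz - \enc(\dec(\vz))\|_2^2$ was).

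Finally, by \textbf{Assumption~\ref{assm:inv}} the matrix $(\gA\gS)^T(\gA\gS) \succ \0$ is invertible, so $\left\|(\gA\gS)^T(\gA\gS)(\vz - \rzz)\right\|_2^2$ is nonnegative and vanishes if and only if $\vz = \rzz$. Thus (\ref{eq:thm-38-1}) has the unique minimizer $\lzz = \rzz$, \emph{independent} of $\lzzp$ and therefore of $\bm{\eta}$; decoding gives $\lxz = \dec(\lzz) = \gS\rzz = \rxz$, and since the recoverability assumptions make this the unique posterior solution, exact recovery coincides with sampling $p(\rxz\mid\vy)$. The main obstacle is purely the algebraic identity of the second paragraph: once the gluing objective is shown to equal $\left\|(\gA\gS)^T(\gA\gS)(\vz - \rzz)\right\|_2^2$, robustness to the step size follows immediately from positive definiteness, whereas verifying that this expression is non-degenerate (in contrast to the goodness objective) is exactly what makes the gluing term indispensable.
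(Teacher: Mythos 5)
Your proposal is correct and follows essentially the same route as the paper's proof: the same measurement-update computation, the same reduction of the gluing objective to $\left\|(\gA\gS)^T(\gA\gS)(\vz - \rzz)\right\|_2^2$ via $\gS^T\gS = \mI_k$, and the same appeal to positive definiteness of $(\gA\gS)^T(\gA\gS)$ to conclude the unique minimizer is $\rzz$ regardless of $\bm{\eta}$. The only cosmetic difference is that you evaluate the objective directly as a nonnegative quadratic form, whereas the paper works through the first-order optimality condition; both hinge on the identical algebraic identity.
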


\textit{Proof.} Following the proof in Appendix~\ref{prf-ps-ldm}, we have 
\begin{align}
    \nonumber
    \lzzp 
    & = \vtheta^*\lzo
    - \eta \nabla_{\lzo} \left\|\gA \dec(\lzz(\lzo)) - \vy \right\|_2^2
    \\
    \nonumber
    & = \mI_k\lzo
    - \eta \nabla_{\lzo} \left\|\gA \dec(\lzo) - \vy \right\|_2^2
    \\
    \nonumber
    & = \lzo
    - \eta \nabla_{\lzo} \left\|\gA \gS\lzo - \vy \right\|_2^2
    \\
    \nonumber
    & = \lzo
    - 2 \eta \gS^T\gA^T (\gA \gS\lzo - \vy)
    \\
    \nonumber
    & = \lzo
    - 2 \eta \gS^T\gA^T \gA \gS\lzo 
    + 2 \eta \gS^T\gA^T \gA\gS\rzz
    \\
    \nonumber
    & = \lzo
    - 2 \eta \gS^T\gA^T \gA \gS\lzo 
    + 2 \eta \gS^T\gA^T \gA\gS\rzz.
    \label{eq:thm-38-0}
\end{align}
We use the above expression to derive a 
closed-form solution to the minimization problem (\ref{eq:thm-38-1}):
\begin{align*}
    \0 
    &
    = \nabla_{\lzzp} \left\|\lzzp - \gS^T(\gA^T\gA\gS\rzz + (\mI_d - \gA^T\gA) \gS\lzzp)\right\|_2^2 \\
    & = \nabla_{\lzzp} \left\|\lzzp - \gS^T\gA^T\gA\gS\rzz - \gS^T (\mI_d - \gA^T\gA) \gS\lzzp)\right\|_2^2\\
    & = \nabla_{\lzzp} \left\|\lzzp - \gS^T\gA^T\gA\gS\rzz - \gS^T \gS \lzzp - \gS^T\gA^T\gA\gS\lzzp)\right\|_2^2\\
    & = \nabla_{\lzzp} \left\|\lzzp - \gS^T\gA^T\gA\gS\rzz - \gS^T \gS \lzzp + \gS^T\gA^T\gA\gS\lzzp)\right\|_2^2\\
    & = 2 \left(\mI_k - \gS^T\gS + \gS^T\gA^T\gA\gS \right) \left(\lzzp - \gS^T\gA^T\gA\gS\rzz - \gS^T \gS \lzzp + \gS^T\gA^T\gA\gS\lzzp)\right)\\
    & = 2 \gS^T\gA^T\gA\gS \left(\gS^T\gA^T\gA\gS\lzzp - \gS^T\gA^T\gA\gS\rzz)\right),
    \end{align*}
where the last step is due to \textbf{Assumption~\ref{assm:ortho}}. Thus, we have 
\begin{align*}
    \lzz  = \arg \min_{\lzzp}\left|\left|\lzzp - \enc(\gA^T\gA \rzz + (\mI_d - \gA^T\gA)\dec(\lzzp)) \right|\right|_2^2 = \rzz,
\end{align*}
which produces $\lxz = \dec(\lzz) = \dec(\rzz) = \gS\rzz = \rxz$. \hfill $\square$

It is worth highlighting that PSLD exactly recovers the groundtruth sample irrespective of the choice of the step size $\eta$, whereas GML-DPS requires the step size to be exactly $\bm\eta = (1/2)\mU\mD(\bm{\eta}_i)\mU^T$.

\section{Additional Experiments}
\label{sec-addn-exps}

\subsection{Implementation Details}
\label{subsec-impl-det}

{\color{black} For inpainting tasks, we note that the PSLD sampler generates missing parts (by design of our gluing objective) that are consistent with the known portions of the image, i.e.,
$\lxz = \gA^T\gA \rxz + (\mI_d - \gA^T\gA) \dec(\lzz)$. This is different from the DPS sampler, which generates the whole image which may not match the observations exactly.
In other words, in the last of step of our algorithm, the observations are glued onto the corresponding parts of the generated image, leaving the unmasked portions untouched \citep{wang2023imagen}. This sometimes creates edge effects which are then removed by post-processing the glued image through the encoder and decoder of the SD model, i.e. running one last step of our algorithm. Figure~\ref{fig:comp-main} illustrates that gluing the observations in commercial services still leads to visually inconsistent results (e.g. head in top row) unlike our method. 

For all other tasks, such as motion deblur, Gaussian deblur, and super-resolution, this last step is not needed, as there is no box inpainting, i.e., $\lxz = \dec(\lzz)$. Furthermore, we use the same measurement operator $\gA$ and its transpose $\gA^T$ as provided by the DPS code repository\footnote{\url{https://github.com/DPS2022/diffusion-posterior-sampling/blob/main/guided_diffusion/measurements.py}}. However, since Stable Diffusion v1.5 generates images of size $512\times512$ resolution and DPS operates at $256\times 256$, we adjust the size of the kernels used in PSLD to ensure that both the methods use the same amount of information while sampling from the posterior. During evaluation, we downsample  PSLD generated images from $512\times 512$ to $256\times 256$ to compare with DPS at the same resolution. 
}

\noindent\textbf{PSLD (Stable Diffusion-V1.5 ):} We run \textbf{Algorithm~\ref{alg:psld}} with Stable Diffusion version 1.5 as the foundation model\footnote{\url{https://huggingface.co/runwayml/stable-diffusion-v1-5}}. We use a fixed $\eta = 1$ and $\gamma=0.1$. Since we study posterior sampling of images without conditioning on \textit{text} inputs, we pass an empty string to the Stable Diffusion foundation model, which accepts texts as an input argument. For better performance, we recommend using the latest pretrained weights. 

\noindent\textbf{PSLD (LDM-VQ-4 ):} This is the same sampling algorithm as before but with a different latent diffusion model, LDM-VQ-4\footnote{\url{https://github.com/CompVis/latent-diffusion}}
, which contains pretrained weights for FFHQ 256\footnote{\url{https://ommer-lab.com/files/latent-diffusion/ffhq.zip}} and large-scale text-to-image generative model\footnote{\url{https://ommer-lab.com/files/latent-diffusion/nitro/txt2img-f8-large/model.ckpt}}. 
We keep the hyperparameters same ($\eta = 1$ and $\gamma=0.1$). For each task, we provide hyper-parameter details in our codebase\footnote{\url{https://github.com/LituRout/PSLD}}. Although we have tested our framework with these two latent-diffusion-models, one may experiment with other latent-diffusion-models available in the same repository.

\noindent\textbf{DPS:} We use the original source code provided by the authors\footnote{\url{https://github.com/DPS2022/diffusion-posterior-sampling}}.

\noindent\textbf{OOD images are sourced online:}
\begin{enumerate}
    \item Figure~\ref{fig:psld-demo}: the original images are generated by Stable Diffusion v-2.1\footnote{\url{https://huggingface.co/spaces/stabilityai/stable-diffusion}}.
    \item Figure~\ref{fig:comp-main} first row: Walking example from the web.
    \item Figure~\ref{fig:comp-main} second row, Obama-Biden image from the \href{https://cloudfront-us-east-1.images.arcpublishing.com/pmn/5LYWM2K5SBAZ5N2IOJBYDOTED4.jpg}{web}.
    \item Figure~\ref{fig:comp-main} third row, Fisherman from ImageNet 256 \citep{imagenet}.
    \item Figure~\ref{fig:comp-4} first row: Racoon image from the  \href{https://media.istockphoto.com/id/157636471/photo/close-up-of-a-cute-raccoon-face.jpg?s=612x612&w=0&k=20&c=1XwqEuXVU_0zqSrkjEEZaL03cyg2cvufmwsm9aNzaOg=}{web}.
    \item Figure~\ref{fig:comp-4} second row: Fisherman from ImageNet 256 \citep{imagenet}.
    \item Figure~\ref{fig:comp-3}: Celebrity face from the \href{https://encrypted-tbn0.gstatic.com/images?q=tbn:ANd9GcQG2QTe1AM1d09Nthk0_bvPmOCGT2AvUwkuRknRTGqbuSrJ1yAw}{web}.
\end{enumerate}

\subsection{Additional Experimental Evaluation}
\label{subsec-exp-rslt}
Here, we provide additional results to support our theoretical claims on various inverse problems. 

Figures~\ref{fig:demo-robot}, \ref{fig:demo-panda}, \ref{fig:demo-teddy}, and \ref{fig:demo-dog} show the inpainting results of user defined masks obtained from our PSLD inpainting web demo. Note that the foundation model used in this demo is a generic model. For better performance on specific images, we recommend finetuning the foundation model on this class and then running posterior sampling using our web demo: \url{https://huggingface.co/spaces/PSLD/PSLD}. 

\begin{figure}
    \centering
    \includegraphics[width=\linewidth]{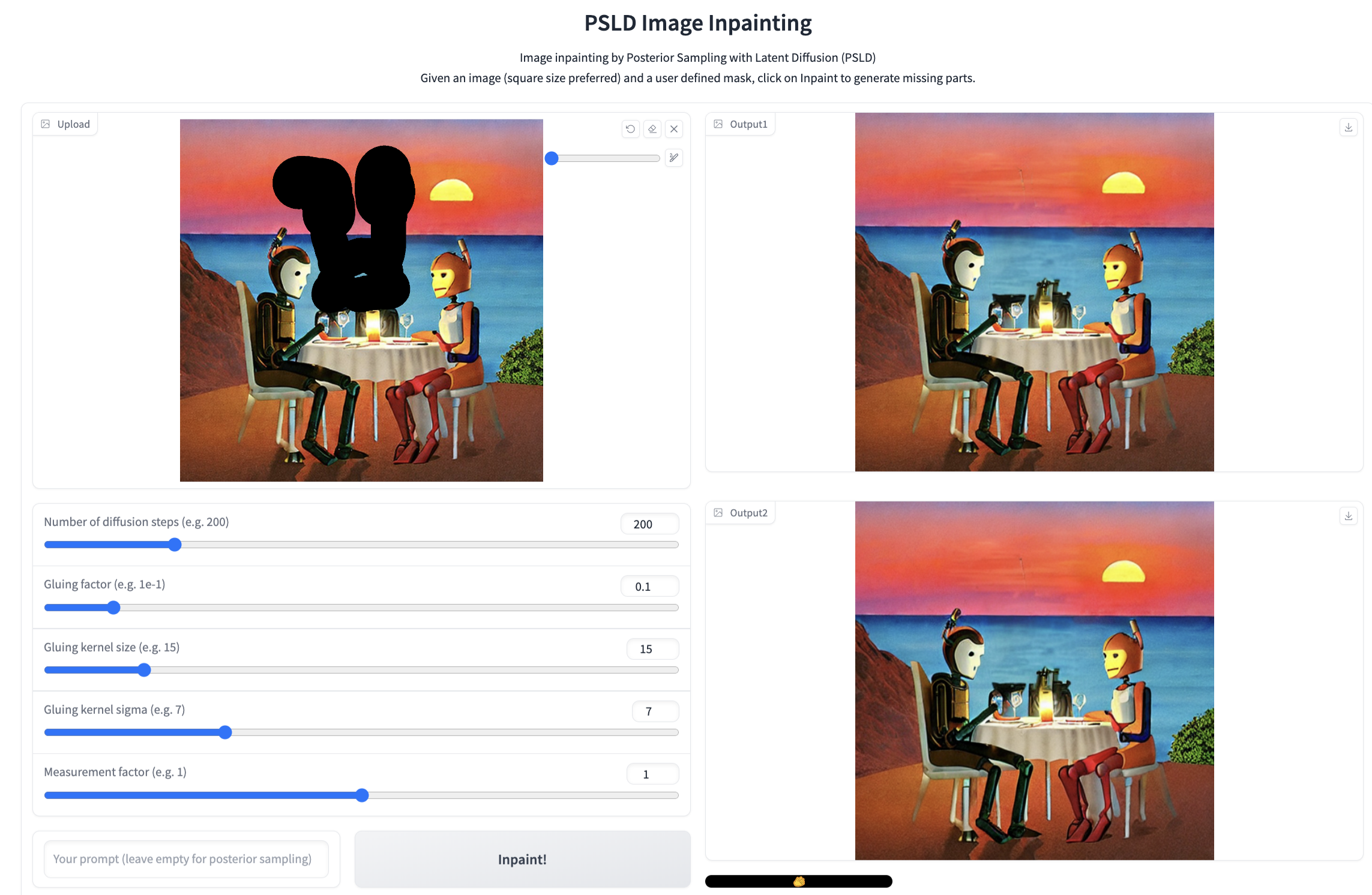}
    \caption{Results from the web application of our PSLD algorithm, $512\times 512$. The original image (\ref{fig:psld-demo}) is generated by Stable Diffusion v-2.1 with the prompt,``\textit{A dinner date between a robot couple during sunset}''.}
    \label{fig:demo-robot}
\end{figure}

\begin{figure}
    \centering
    \includegraphics[width=\linewidth]{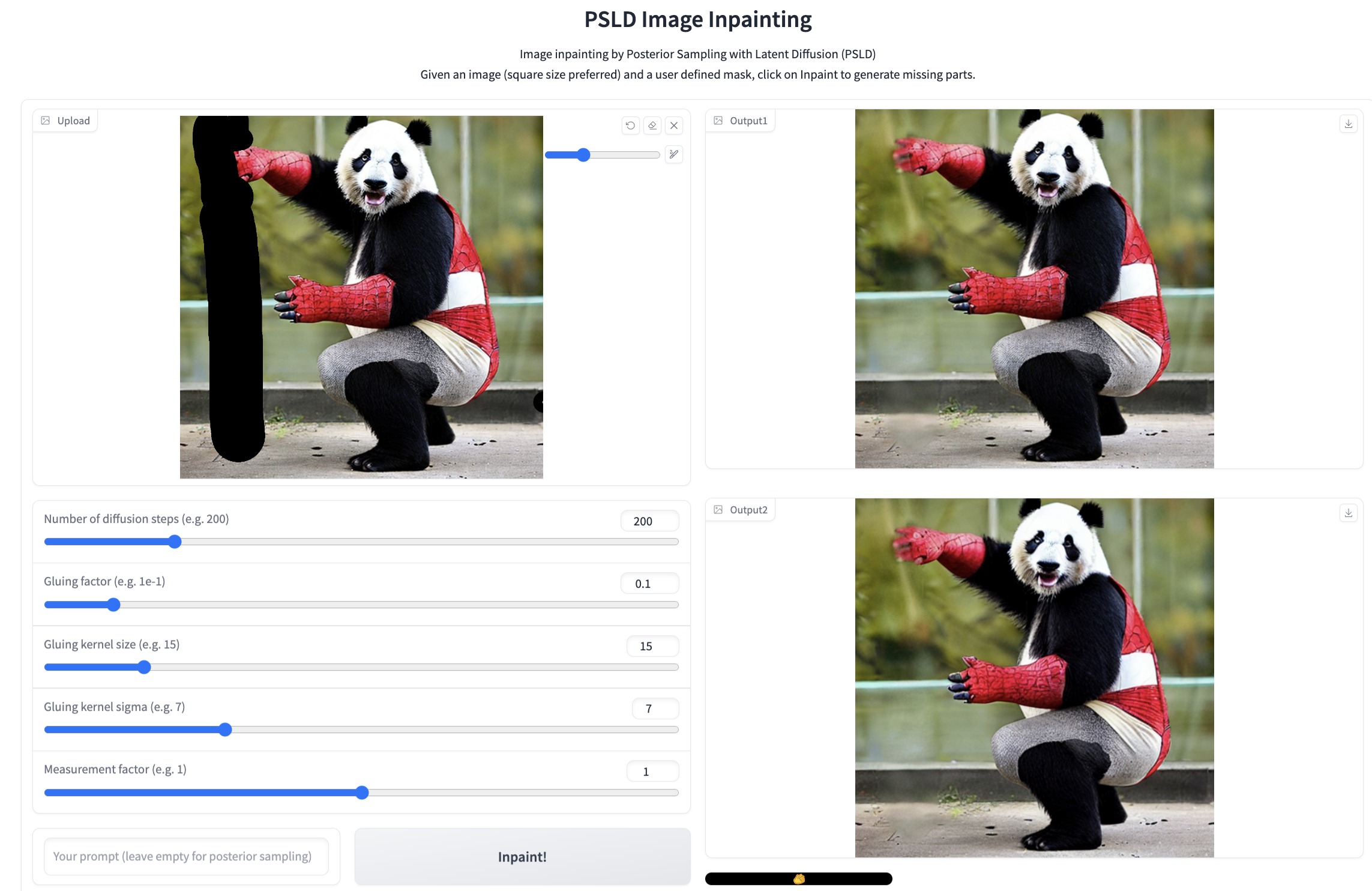}
    \caption{Results from the web application of our PSLD algorithm, $512\times 512$. The original image (\ref{fig:psld-demo}) is generated by Stable Diffusion v-2.1 with the prompt,``\textit{A panda wearing a spiderman costume}''.}
    \label{fig:demo-panda}
\end{figure}

\begin{figure}
    \centering
    \includegraphics[width=\linewidth]{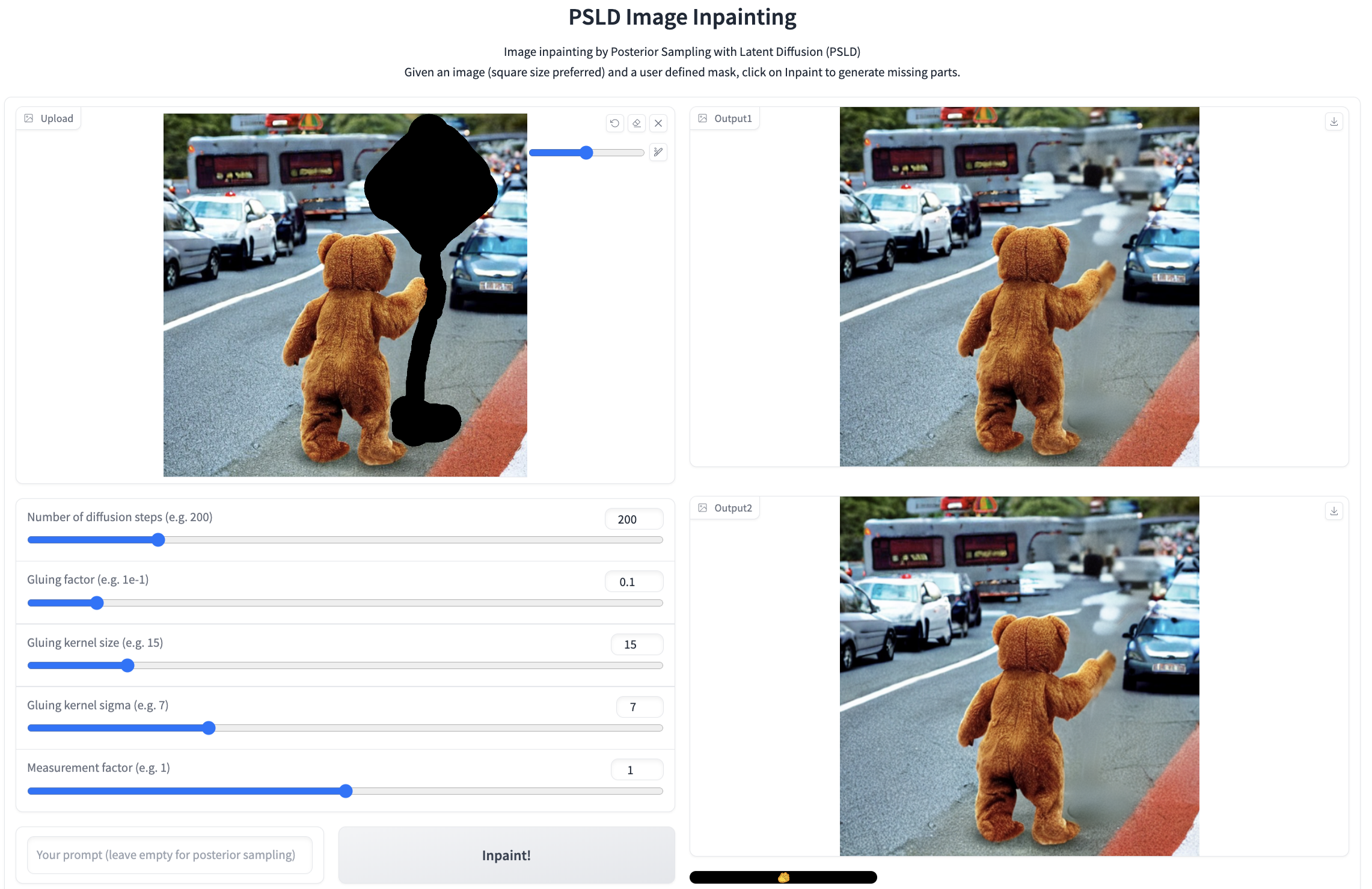}
    \caption{Results from the web application of our PSLD algorithm, $512\times 512$. The original image (\ref{fig:psld-demo}) is generated by Stable Diffusion v-2.1 with the prompt,``\textit{A teddy bear showing stop sign at the traffic}''.}
    \label{fig:demo-teddy}
\end{figure}

\begin{figure}
    \centering
    \includegraphics[width=\linewidth]{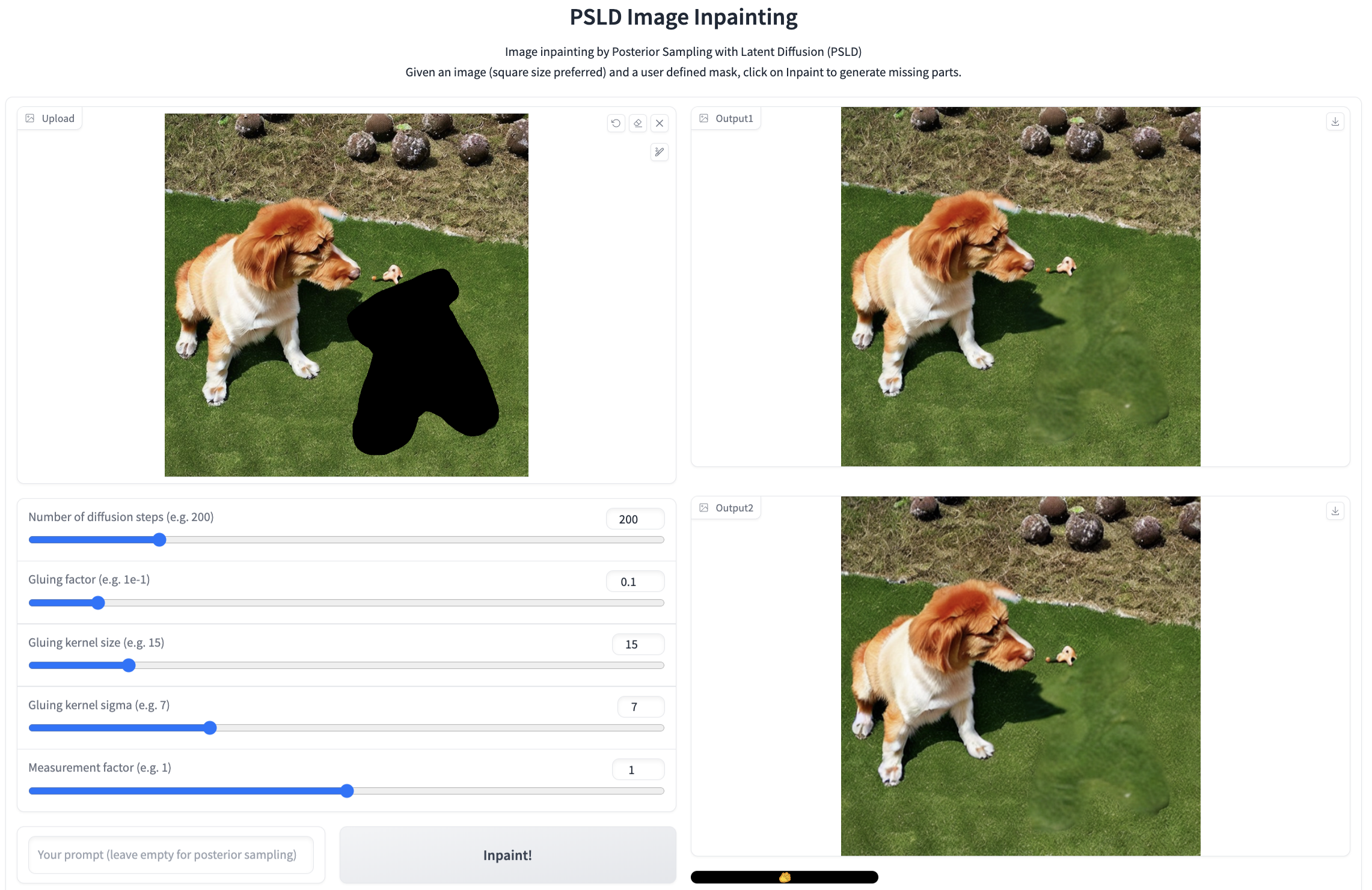}
    \caption{Results from the web application of our PSLD algorithm, $512\times 512$. The original image (\ref{fig:psld-demo}) is generated by Stable Diffusion v-2.1 with the prompt,``\textit{A cute dog playing with a toy teddy bear on the lawn}''.}
    \label{fig:demo-dog}
\end{figure}

Figure~\ref{fig:super-res-1} and \ref{fig:super-res-2} illustrate \textbf{super-resolution} (4$\times$) of in-distribution samples from the validation set of FFHQ 256. Observe that the samples generated by DPS are far from the groundtruth sample. On the other hand, the samples generated by PSLD closely capture the perceptual quality of the groundtruth sample. In other words, one may identify (b) and (c) as images of two different individuals, whereas (b) and (d) of the same individual. We attribute this \textit{photorealism} of our method to the power of Stable Diffusion foundation model and the ability to use the knowledge of the VAE encoder-decoder in the gluing objective. 

In addition, we test on out-of-distribution samples from ImageNet \cite{imagenet} validation set. Figure~\ref{fig:comp-12} and Figure~\ref{fig:comp-11} show the results in \textbf{motion deblur} and \textbf{Gaussian deblur}, respectively. By leveraging the foundation model Stable Diffusion v1.5, our PSLD method clearly outperforms DPS \cite{dps} in the general domain. Further, Figures \ref{fig:comp-9}, \ref{fig:comp-3}, and \ref{fig:comp-5} show reconstruction of general domain samples for \textbf{random inpainting}, \textbf{super-resolution}, and \textbf{destriping} tasks, respectively. In all these tasks, the samples generated by PSLD are closer to the groundtruth sample than the ones generated by DPS. Table~\ref{tab:ffhq-sd-psnr-ssim} shows the quantitative results.

\begin{figure}[!t]
     \centering
     \begin{subfigure}[b]{0.45\columnwidth}
         \centering
         \includegraphics[width=\linewidth]{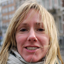}
        \caption{Input}
     \end{subfigure}
     \begin{subfigure}[b]{0.45\columnwidth}
         \centering
         \includegraphics[width=\linewidth]{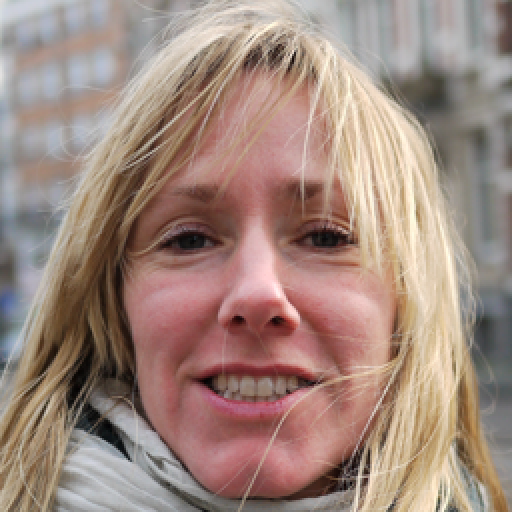}
        \caption{Groundtruth}
     \end{subfigure}

     \begin{subfigure}[b]{0.45\columnwidth}
         \centering
         \includegraphics[width=\linewidth]{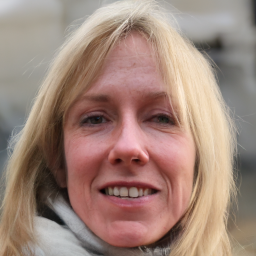}
        \caption{DPS \cite{dps}}
     \end{subfigure}
     \begin{subfigure}[b]{0.45\columnwidth}
         \centering
         \includegraphics[width=\linewidth]{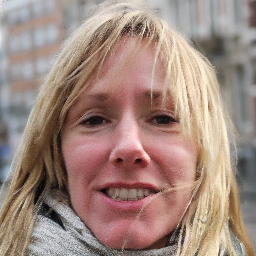}
         \caption{PSLD (Ours)}
     \end{subfigure}

     \caption{ Super-resolution results on images from FFHQ 256 \cite{ffhq,dps} (in distribution).}
    \label{fig:super-res-1}
\end{figure}

\begin{figure}[!t]
     \centering
     \begin{subfigure}[b]{0.45\columnwidth}
         \centering
         \includegraphics[width=\linewidth]{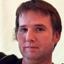}
        \caption{Input}
     \end{subfigure}
     \begin{subfigure}[b]{0.45\columnwidth}
         \centering
         \includegraphics[width=\linewidth]{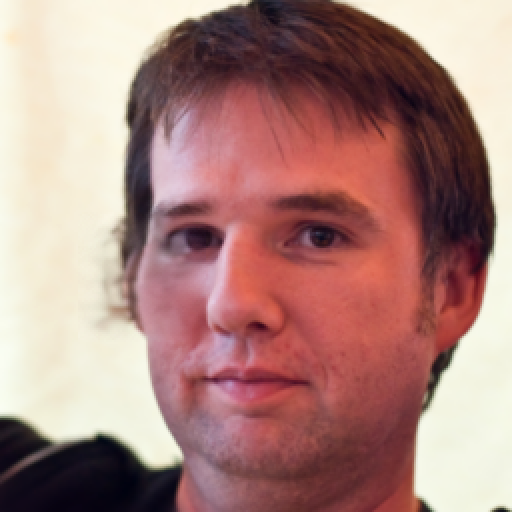}
        \caption{Groundtruth}
     \end{subfigure}

     \begin{subfigure}[b]{0.45\columnwidth}
         \centering
         \includegraphics[width=\linewidth]{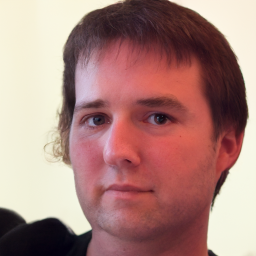}
        \caption{DPS \cite{dps}}
     \end{subfigure}
     \begin{subfigure}[b]{0.45\columnwidth}
         \centering
         \includegraphics[width=\linewidth]{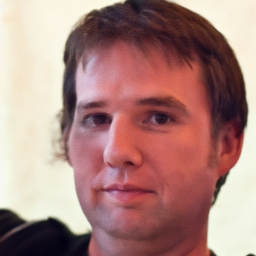}
         \caption{PSLD (Ours)}
     \end{subfigure}

     \caption{ Super-resolution results on FFHQ 256 \cite{ffhq,dps} (in distribution).}
    \label{fig:super-res-2}
\end{figure}

\begin{figure}[!t]
     \centering
     \begin{subfigure}[b]{0.24\columnwidth}
         \centering
         \includegraphics[width=\linewidth]{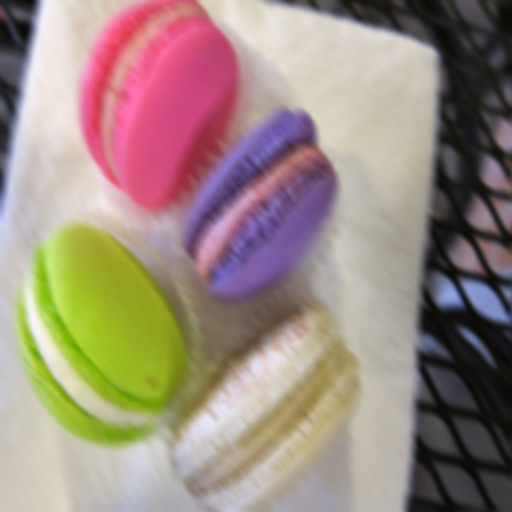}
     \end{subfigure}
     \begin{subfigure}[b]{0.24\columnwidth}
         \centering
         \includegraphics[width=\linewidth]{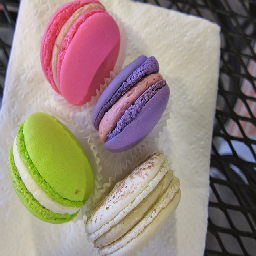}
     \end{subfigure}
     \begin{subfigure}[b]{0.24\columnwidth}
         \centering
         \includegraphics[width=\linewidth]{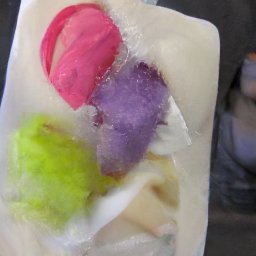}
     \end{subfigure}
     \begin{subfigure}[b]{0.24\columnwidth}
         \centering
         \includegraphics[width=\linewidth]{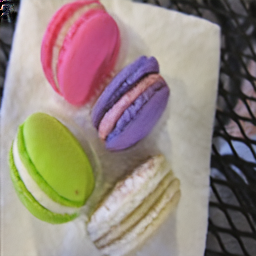}
     \end{subfigure}

    \begin{subfigure}[b]{0.24\columnwidth}
         \centering
         \includegraphics[width=\linewidth]{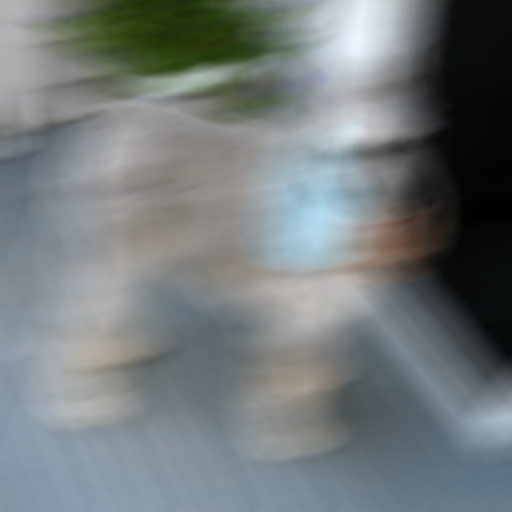}
     \end{subfigure}
     \begin{subfigure}[b]{0.24\columnwidth}
         \centering
         \includegraphics[width=\linewidth]{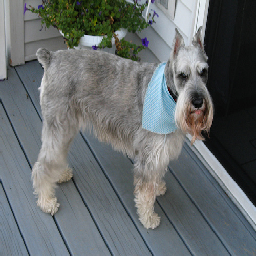}
     \end{subfigure}
     \begin{subfigure}[b]{0.24\columnwidth}
         \centering
         \includegraphics[width=\linewidth]{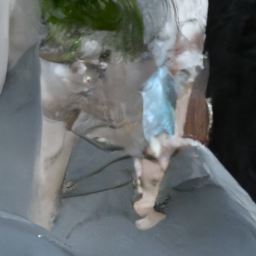}
     \end{subfigure}
     \begin{subfigure}[b]{0.24\columnwidth}
         \centering
         \includegraphics[width=\linewidth]{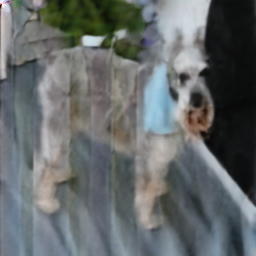}
     \end{subfigure}

    \begin{subfigure}[b]{0.24\columnwidth}
         \centering
         \includegraphics[width=\linewidth]{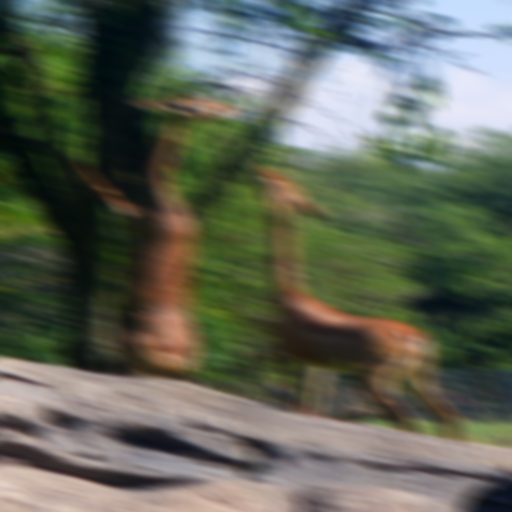}
         \caption{Input}
     \end{subfigure}
     \begin{subfigure}[b]{0.24\columnwidth}
         \centering
         \includegraphics[width=\linewidth]{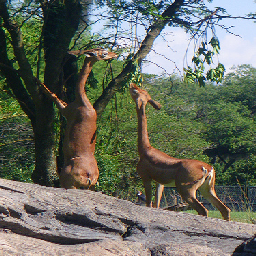}
        \caption{Groundtruth}
     \end{subfigure}
     \begin{subfigure}[b]{0.24\columnwidth}
         \centering
         \includegraphics[width=\linewidth]{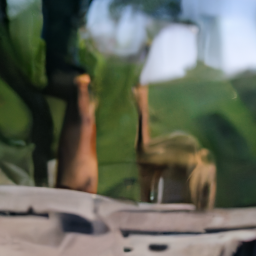}
        \caption{DPS~\cite{dps}}
     \end{subfigure}
     \begin{subfigure}[b]{0.24\columnwidth}
         \centering
         \includegraphics[width=\linewidth]{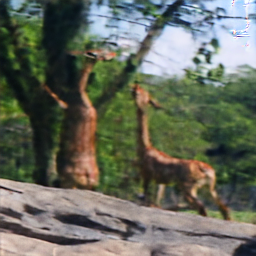}
        \caption{PSLD (Ours)}
     \end{subfigure}
     \caption{Motion deblur results on ImageNet 256 \cite{imagenet} (out-of-distribution).}
    \label{fig:comp-12}
\end{figure}

\begin{figure}[!t]
    \centering
    \begin{subfigure}[b]{0.24\columnwidth}
        \centering
        \includegraphics[width=\linewidth]{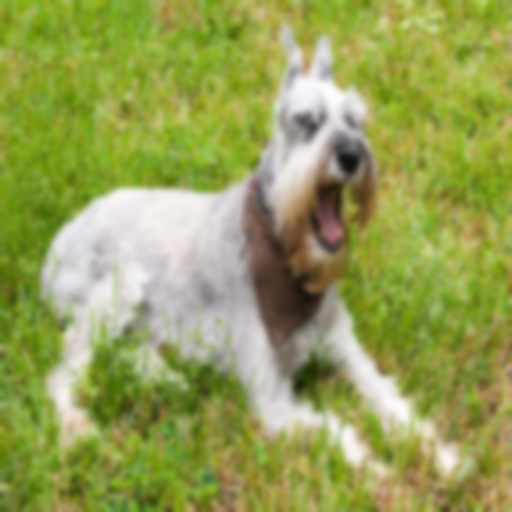}
    \end{subfigure}
    \begin{subfigure}[b]{0.24\columnwidth}
        \centering
        \includegraphics[width=\linewidth]{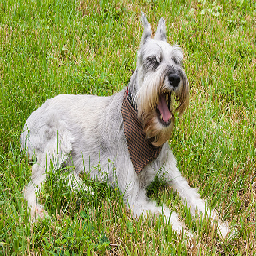}
    \end{subfigure}
    \begin{subfigure}[b]{0.24\columnwidth}
        \centering
        \includegraphics[width=\linewidth]{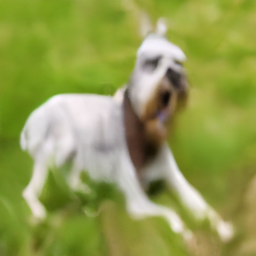}
    \end{subfigure}
    \begin{subfigure}[b]{0.24\columnwidth}
        \centering
        \includegraphics[width=\linewidth]{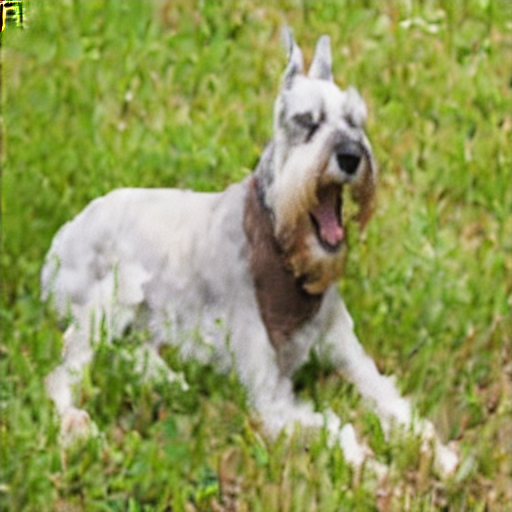}
    \end{subfigure}

   \begin{subfigure}[b]{0.24\columnwidth}
        \centering
        \includegraphics[width=\linewidth]{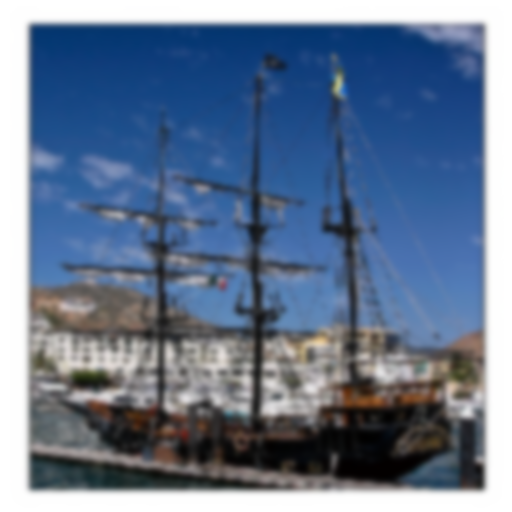}
    \end{subfigure}
    \begin{subfigure}[b]{0.24\columnwidth}
        \centering
        \includegraphics[width=\linewidth]{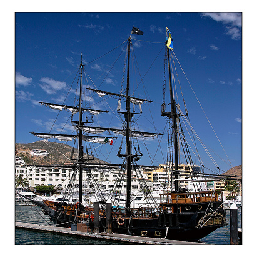}
    \end{subfigure}
    \begin{subfigure}[b]{0.24\columnwidth}
        \centering
        \includegraphics[width=\linewidth]{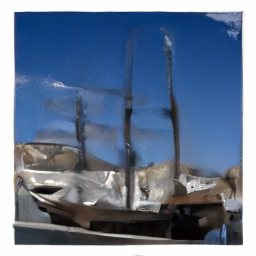}
    \end{subfigure}
    \begin{subfigure}[b]{0.24\columnwidth}
        \centering
        \includegraphics[width=\linewidth]{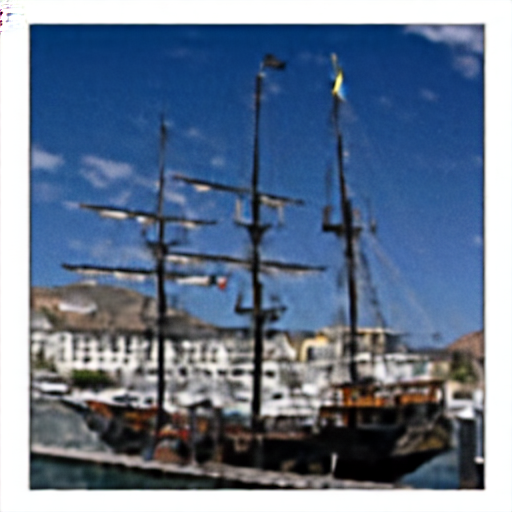}
    \end{subfigure}

   \begin{subfigure}[b]{0.24\columnwidth}
        \centering
        \includegraphics[width=\linewidth]{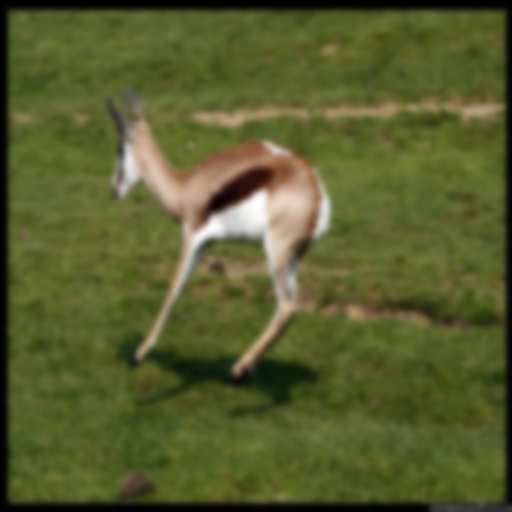}
        \caption{Input}
    \end{subfigure}
    \begin{subfigure}[b]{0.24\columnwidth}
        \centering
        \includegraphics[width=\linewidth]{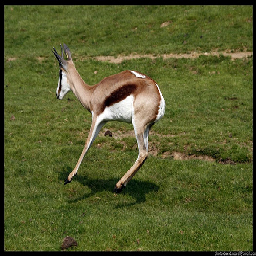}
       \caption{Groundtruth}
    \end{subfigure}
    \begin{subfigure}[b]{0.24\columnwidth}
        \centering
        \includegraphics[width=\linewidth]{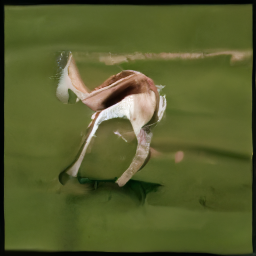}
       \caption{DPS~\cite{dps}}
    \end{subfigure}
    \begin{subfigure}[b]{0.24\columnwidth}
        \centering
        \includegraphics[width=\linewidth]{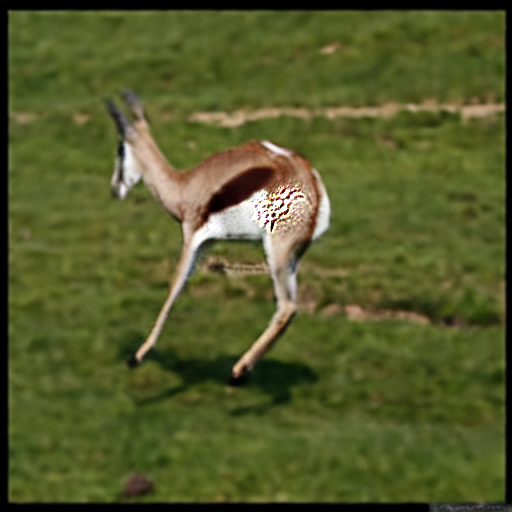}
       \caption{PSLD (Ours)}
    \end{subfigure}
    
    \caption{ Gaussian deblur results on ImageNet 256 \cite{imagenet} (out-of-distribution).}
   \label{fig:comp-11}
\end{figure}

\begin{table}[th]
  \caption{Quantitative random inpainting results on FFHQ $256$ validation set \citep{ffhq,dps}. We use Stable Diffusion (v1.5) trained on LAION. 
  }
  \label{tab:ffhq-sd-psnr-ssim}
  \centering
  \begin{tabular}{llllllll}
    \toprule
    {} & \multicolumn{2}{c}{Inpaint (random)}  & \multicolumn{2}{c}{SR ($4\times$)} & \multicolumn{2}{c}{Gaussian Deblur}            \\
    \cmidrule(r){2-3} \cmidrule(r){4-5} \cmidrule(r){6-7}   
    Method & PSNR ($\uparrow$)  & SSIM ($\uparrow$) & PSNR ($\uparrow$)  & SSIM ($\uparrow$) & PSNR ($\uparrow$)  & SSIM ($\uparrow$) \\
    \midrule
    PSLD (Ours)                    & \textbf{30.31} &\textbf{0.851}     &  \textbf{30.73} & \textbf{0.867} &  \textbf{30.10} & \textbf{0.843}\\
    GML-DPS (Ours)          & \underline{29.49}         & \underline{0.844}  & \underline{29.77} &   0.860  & \underline{29.21} & \underline{0.820}  \\
    \midrule
    DPS~\cite{dps}                & 25.23         & \textbf{0.851 }    & 25.67 & 0.852  &  24.25 &  0.811   \\
    DDRM~\cite{ddrm}               & 9.19          & 0.319   & 25.36&  0.835   & 23.36 & 0.767  \\
    MCG~\cite{chung2022improving}  & 21.57         & 0.751   & 20.05 & 0.559  &  6.72  & 0.051       \\
    PnP-ADMM~\cite{chan2016plug}   & 8.41          & 0.325   & 26.55 &  \underline{0.865} &  {24.93}  &  {0.812}       \\
    Score-SDE~\cite{songscore}     & 13.52         & 0.437   & 17.62 & 0.617  & 7.12 & 0.109       \\
    ADMM-TV                        & 22.03         & 0.784   & 23.86 &  0.803   & 22.37 & 0.801    \\
    \bottomrule
  \end{tabular}
\end{table}

\begin{figure}[!t]
     \centering
     \begin{subfigure}[b]{0.24\columnwidth}
         \centering
         \includegraphics[width=\linewidth]{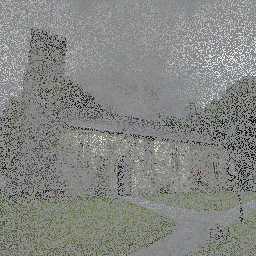}
     \end{subfigure}
     \begin{subfigure}[b]{0.24\columnwidth}
         \centering
         \includegraphics[width=\linewidth]{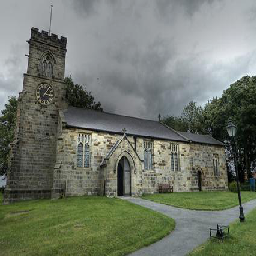}
     \end{subfigure}
     \begin{subfigure}[b]{0.24\columnwidth}
         \centering
         \includegraphics[width=\linewidth]{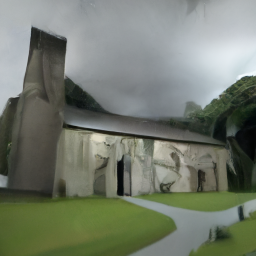}
     \end{subfigure}
     \begin{subfigure}[b]{0.24\columnwidth}
         \centering
         \includegraphics[width=\linewidth]{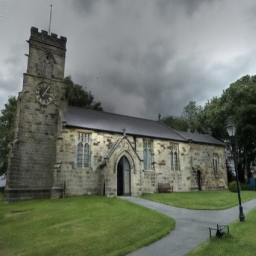}
     \end{subfigure}

    \begin{subfigure}[b]{0.24\columnwidth}
         \centering
         \includegraphics[width=\linewidth]{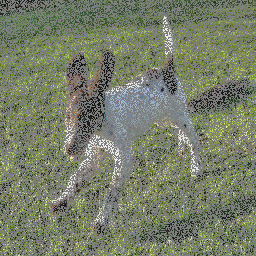}
     \end{subfigure}
     \begin{subfigure}[b]{0.24\columnwidth}
         \centering
         \includegraphics[width=\linewidth]{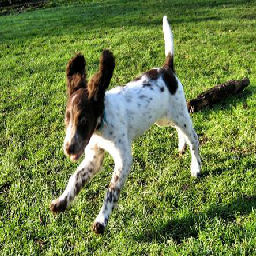}
     \end{subfigure}
     \begin{subfigure}[b]{0.24\columnwidth}
         \centering
         \includegraphics[width=\linewidth]{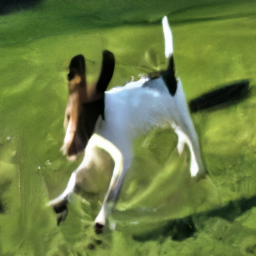}
     \end{subfigure}
     \begin{subfigure}[b]{0.24\columnwidth}
         \centering
         \includegraphics[width=\linewidth]{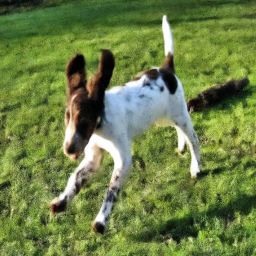}
     \end{subfigure}

    \begin{subfigure}[b]{0.24\columnwidth}
         \centering
         \includegraphics[width=\linewidth]{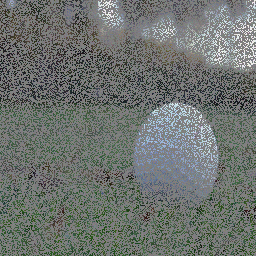}
     \end{subfigure}
     \begin{subfigure}[b]{0.24\columnwidth}
         \centering
         \includegraphics[width=\linewidth]{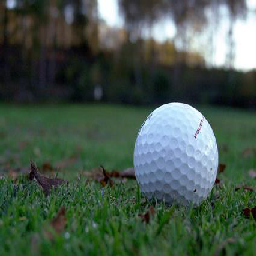}
     \end{subfigure}
     \begin{subfigure}[b]{0.24\columnwidth}
         \centering
         \includegraphics[width=\linewidth]{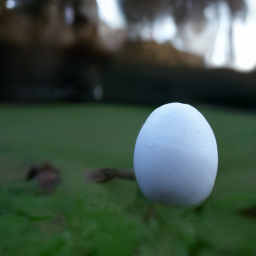}
     \end{subfigure}
     \begin{subfigure}[b]{0.24\columnwidth}
         \centering
         \includegraphics[width=\linewidth]{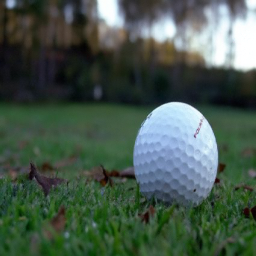}
     \end{subfigure}


    \begin{subfigure}[b]{0.24\columnwidth}
         \centering
         \includegraphics[width=\linewidth]{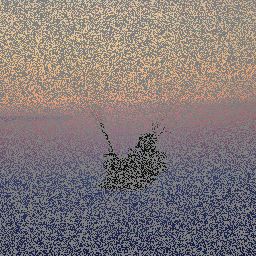}
     \end{subfigure}
     \begin{subfigure}[b]{0.24\columnwidth}
         \centering
         \includegraphics[width=\linewidth]{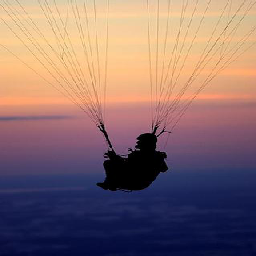}
     \end{subfigure}
     \begin{subfigure}[b]{0.24\columnwidth}
         \centering
         \includegraphics[width=\linewidth]{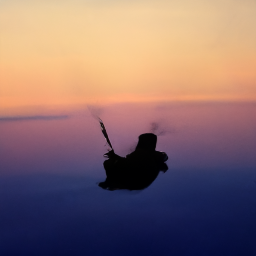}
     \end{subfigure}
     \begin{subfigure}[b]{0.24\columnwidth}
         \centering
         \includegraphics[width=\linewidth]{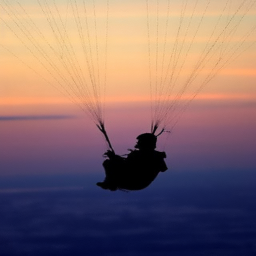}
     \end{subfigure}

    \begin{subfigure}[b]{0.24\columnwidth}
         \centering
         \includegraphics[width=\linewidth]{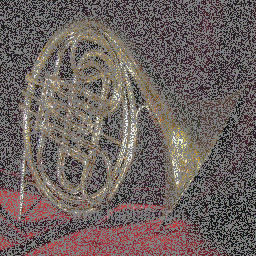}
         \caption{Input}
     \end{subfigure}
     \begin{subfigure}[b]{0.24\columnwidth}
         \centering
         \includegraphics[width=\linewidth]{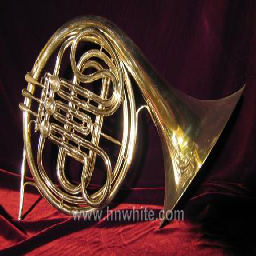}
         \caption{Groundtruth}
     \end{subfigure}
     \begin{subfigure}[b]{0.24\columnwidth}
         \centering
         \includegraphics[width=\linewidth]{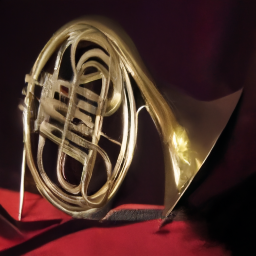}
         \caption{DPS~\cite{dps}}
     \end{subfigure}
     \begin{subfigure}[b]{0.24\columnwidth}
         \centering
         \includegraphics[width=\linewidth]{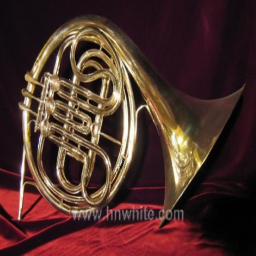}
         \caption{PSLD (Ours)}
     \end{subfigure}
     
     \caption{ Random inpainting results on ImageNet 256 \cite{imagenet} (out-of-distribution).}
    \label{fig:comp-9}
\end{figure}

\begin{figure}[!t]
     \centering
     \begin{subfigure}[b]{0.24\columnwidth}
         \centering
         \includegraphics[width=\linewidth]{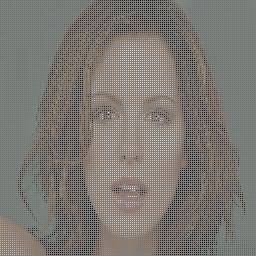}
     \end{subfigure}
     \begin{subfigure}[b]{0.24\columnwidth}
         \centering
         \includegraphics[width=\linewidth]{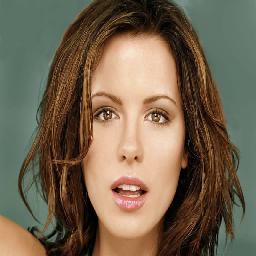}
     \end{subfigure}
     \begin{subfigure}[b]{0.24\columnwidth}
         \centering
         \includegraphics[width=\linewidth]{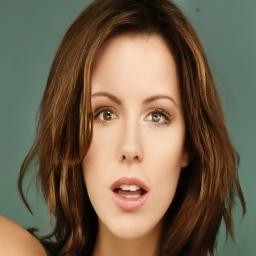}
         
     \end{subfigure}
     \begin{subfigure}[b]{0.24\columnwidth}
         \centering
         \includegraphics[width=\linewidth]{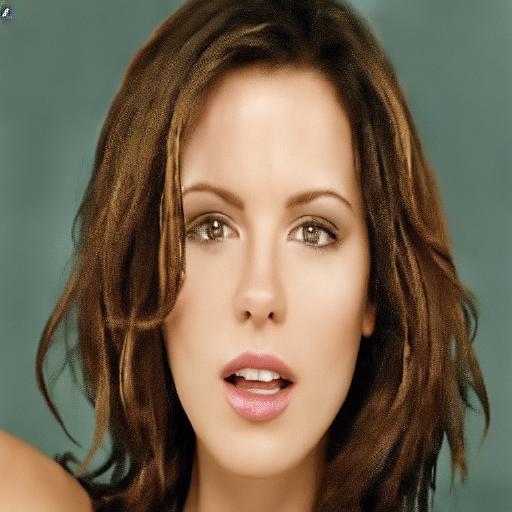}
     \end{subfigure}
     \begin{subfigure}[b]{0.24\columnwidth}
         \centering
         \includegraphics[width=\linewidth]{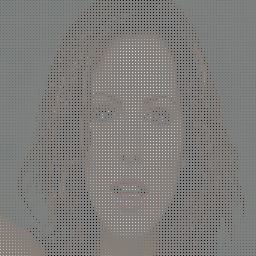}
     \end{subfigure}
     \begin{subfigure}[b]{0.24\columnwidth}
         \centering
         \includegraphics[width=\linewidth]{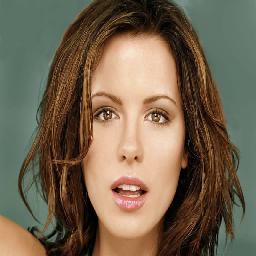}
     \end{subfigure}
     \begin{subfigure}[b]{0.24\columnwidth}
         \centering
         \includegraphics[width=\linewidth]{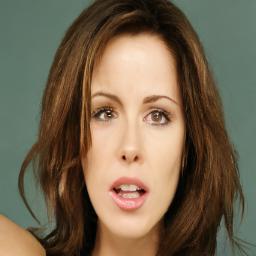}
     \end{subfigure}
     \begin{subfigure}[b]{0.24\columnwidth}
         \centering
         \includegraphics[width=\linewidth]{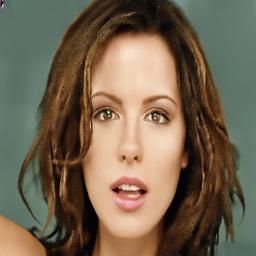}
     \end{subfigure}
     \begin{subfigure}[b]{0.24\columnwidth}
         \centering
         \includegraphics[width=\linewidth]{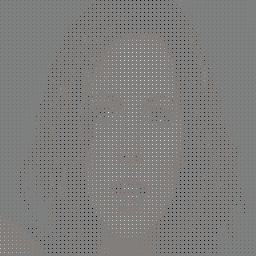}
         \caption{Input}
     \end{subfigure}
     \begin{subfigure}[b]{0.24\columnwidth}
         \centering
         \includegraphics[width=\linewidth]{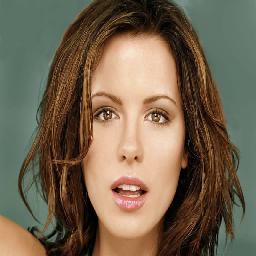}
         \caption{Groundtruth}
     \end{subfigure}
     \begin{subfigure}[b]{0.24\columnwidth}
         \centering
         \includegraphics[width=\linewidth]{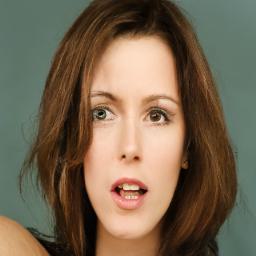}
         \caption{DPS~\cite{dps}}
     \end{subfigure}
     \begin{subfigure}[b]{0.24\columnwidth}
         \centering
         \includegraphics[width=\linewidth]{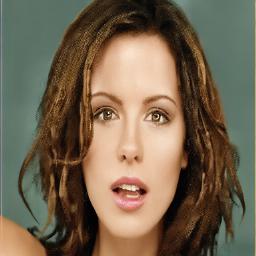}
         \caption{PSLD (Ours)}
     \end{subfigure}
     \caption{ Super-resolution (using nearest neighbor kernel from \cite{lugmayr2022repaint}) results on out-of-distribution samples from the web, $256 \times 256$ (see Table~\ref{tab:ffhq-sr-lpips} for LPIPS of these images).
    }
    \label{fig:comp-3}
\end{figure}

\begin{figure}[!t]
     \centering
     \begin{subfigure}[b]{0.24\columnwidth}
         \centering
         \includegraphics[width=\linewidth]{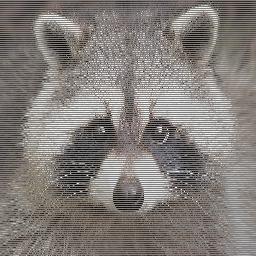}
     \end{subfigure}
     \begin{subfigure}[b]{0.24\columnwidth}
         \centering
         \includegraphics[width=\linewidth]{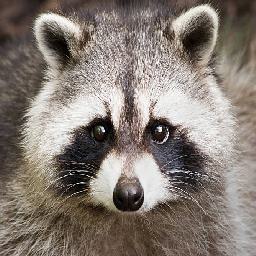}
     \end{subfigure}
     \begin{subfigure}[b]{0.24\columnwidth}
         \centering
         \includegraphics[width=\linewidth]{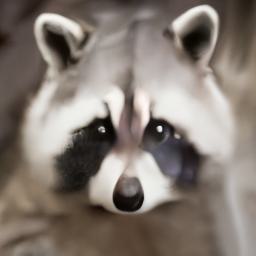}
     \end{subfigure}
     \begin{subfigure}[b]{0.24\columnwidth}
         \centering
         \includegraphics[width=\linewidth]{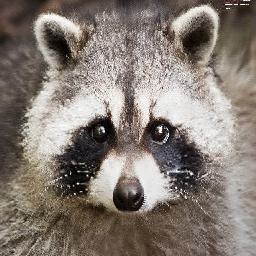}
     \end{subfigure}

    \begin{subfigure}[b]{0.24\columnwidth}
         \centering
         \includegraphics[width=\linewidth]{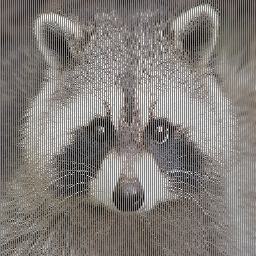}
         \caption{Input}
     \end{subfigure}
     \begin{subfigure}[b]{0.24\columnwidth}
         \centering
         \includegraphics[width=\linewidth]{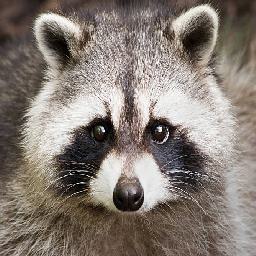}
         \caption{Groundtruth}
     \end{subfigure}
     \begin{subfigure}[b]{0.24\columnwidth}
         \centering
         \includegraphics[width=\linewidth]{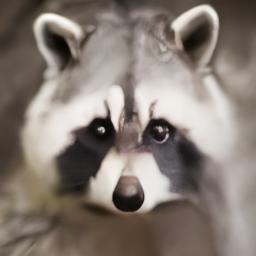}
         \caption{DPS~\cite{dps}}
     \end{subfigure}
     \begin{subfigure}[b]{0.24\columnwidth}
         \centering
         \includegraphics[width=\linewidth]{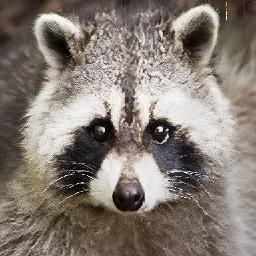}
         \caption{PSLD (Ours)}
     \end{subfigure}
     
     \caption{ Destriping results on out-of-distribution samples from the web, $256 \times  256$.
     (\textbf{Top row}) Horizontal destriping: LPIPS of PSLD=0.244 and DPS~\cite{dps}=0.613. (\textbf{Bottom row}) Vertical destriping: LPIPS of PSLD=0.255, DPS~\cite{dps}=0.597.
     }
    \label{fig:comp-5}
\end{figure}

\end{document}